\documentclass{article}

\usepackage[preprint]{neurips_2026}

\usepackage[utf8]{inputenc} 
\usepackage[T1]{fontenc}    
\usepackage{hyperref}       
\usepackage{url}            
\usepackage{booktabs}       
\usepackage{amsfonts}       
\usepackage{nicefrac}       
\usepackage{microtype}      
\usepackage{xcolor}         

\usepackage{url}            
\usepackage{booktabs}       
\usepackage{amsfonts}       
\usepackage{nicefrac}       
\usepackage{microtype}      

\usepackage{algorithm}

\usepackage{amssymb}		
\usepackage{graphicx}		
\usepackage{amsmath}
\usepackage{amsthm}
\usepackage{algorithmic}
\usepackage{comment}
\usepackage{multirow}
\usepackage{enumitem}
\usepackage{subcaption}
\usepackage{tikz}
\usepackage[dvipsnames,table,xcdraw]{xcolor}

\usepackage{amsmath}  
\usepackage{xspace}   

\usepackage{pifont}
\usepackage{float}

\usepackage{amsmath,amssymb,mathtools}

\usepackage[nameinlink]{cleveref}  

\newtheorem{theorem}{Theorem}

\newtheorem{lemma}{Lemma}

\newtheorem{corollary}{Corollary}

\newtheorem{proposition}{Proposition}

\crefname{section}{Section}{Sections}
\Crefname{section}{Section}{Sections}

\crefname{subsection}{Section}{Sections}
\Crefname{subsection}{Section}{Sections}

\title{Data Attribution in Adaptive Learning}

\author{%
  Amit Kiran Rege \\
  Department of Computer Science\\
  University of Colorado Boulder\\
  Boulder, Colorado 80309 \\
  \texttt{amit.rege@colorado.edu} \\
}

\begin{document}

\maketitle

\begin{abstract}
  Machine learning models increasingly generate their own training data---online bandits, reinforcement learning, and post-training pipelines for language models are leading examples. In these adaptive settings, a single training observation both updates the learner and shifts the distribution of future data the learner will collect. Standard attribution methods, designed for static datasets, ignore this feedback. We formalize occurrence-level attribution for finite-horizon adaptive learning via a conditional interventional target, prove that replay-side information cannot recover it in general, and identify a structural class in which the target is identified from logged data.
\end{abstract}

\section{Introduction}

Modern machine learning systems are trained on large datasets, and the composition of those datasets matters. Questions around data privacy, model debugging, and data curation all reduce to a common underlying question: how much does a specific training example actually shape what a model learns? Data attribution makes this precise. Given a trained model, it asks how the model would have changed if a particular training point had been removed, reweighted, or modified. In the standard supervised setting this is a well-studied problem. Influence functions, TracIn, Data Shapley, and related methods estimate how the final predictor changes when a sample is perturbed in a fixed training set, and a large literature has developed around the approximation strategies and theoretical foundations behind these estimates. The shared structure underlying all of them is that after one perturbs a training point, the future training data do not themselves change.

Adaptive learning complicates the picture in a way these methods are not designed to handle. In online bandits, reinforcement learning, and post-training procedures where later data depend on the current policy, training data is not collected from a fixed source. The learner interacts with an environment, and what it observes next depends on what it has already learned. A single training observation therefore does two things at once: it updates the learner, and through the updated learner it reshapes which data will be collected going forward. Perturbing that observation can change the final model along two distinct channels---through the learning dynamics on the data that follows, and through the future data distribution itself.

A two-step bandit illustrates the point. Suppose the learner observes a reward at round \(1\), updates its policy, and then acts again at round \(2\). If one downweights the first interaction, there are two different counterfactuals one might study. One can hold the realized second-round interaction fixed and ask how the final policy changes along that fixed continuation. Or one can hold fixed only what happened through round \(1\), allow the perturbed learner to act again, and compare the outcome. In a fixed-dataset problem these two constructions coincide. In adaptive learning they do not.

Attribution in adaptive learning should therefore attach to a realized occurrence rather than to an abstract sample identity. The same nominal content can appear at different times and under different learner states, and in an adaptive system those are causally distinct objects. Once one takes this view, two choices must be made before any approximation method enters the picture: whether to study replay on a fixed realized continuation or recollection under the perturbed learner, and how much of the realized history to condition on when defining the target.

The target studied here conditions on the realized prefix through the perturbed round and recollects the future under the perturbed learner. This identifies which occurrence is being attributed and leaves the full post-occurrence future free to adapt. The perturbation is a one-parameter reweighting: negative values downweight the learning effect of the occurrence, and the endpoint suppresses its direct update contribution while leaving it part of the observed prefix.

The paper develops several consequences of this choice. The structural decomposition shows that the interventional target equals conditional expected replay plus a correction term driven by the perturbation-induced shift in the future data law. A stronger negative result shows this gap cannot be closed by enriching the replay information: even knowing the baseline future law and the full family of fixed-log replay responses for every continuation, replay-side data does not identify the interventional target in general. On the positive side, in a contextual-bandit model where the learner affects future data only through a known action law, the perturbed future law has an exact change-of-measure form and the target is identified under overlap; if unknown learner-state dependence enters context or reward generation, identification fails. The paper also gives a bandit example in which replay and intervention have opposite signs, and a depth-\(L\) truncation framework that gives local attribution windows a precise interpretation and quantitative error bounds.

The connection to the causal literature on sequential interventions is that conditioning on an evolving history while modifying later treatment mechanisms is exactly the structure needed here. The contribution is to bring that framework to occurrence-level attribution in adaptive learning and to map out the boundary between identifiable and non-identifiable cases.

The rest of the paper is organized as follows. Section~\ref{sec:setup} defines the adaptive learning model, the replay and interventional targets, the replay oracle, and the action-only class that later marks the positive side of the identification frontier. Section~\ref{sec:related-work} positions the paper relative to static data attribution, trajectory-specific attribution on fixed training streams, recent work on sequential attribution in RL and post-training, and the broader causal literature on sequential interventions. Later sections develop the structural decomposition, the replay-oracle insufficiency theorem, the identification frontier, exact model-based computation, and the bandit examples.

\section{Related work}
\label{sec:related-work}

The paper sits at the intersection of several literatures that ask different questions under the broad heading of data influence. The most direct ancestor is the fixed-dataset attribution literature. Influence functions, TracIn, and valuation-based methods such as Data Shapley ask how a model would have changed if part of an exogenous training set had been removed or reweighted~\cite{koh2017understanding,pruthi2020estimating,ghorbani2019data}. Those works differ in approximation strategy and scale, but they all study learning from a data source that is fixed independently of the learner.

Recent work has sharpened the estimand question within that fixed-data setting. Bae et al.\ show that practical influence-function calculations often approximate a response functional different from literal leave-one-out retraining, while Schioppa et al.\ analyze why the usual leave-one-out reading can fail in nonlinear training~\cite{bae2022if,schioppa2023theoretical}. That line of work is close in spirit to the present paper. The common lesson is that before one asks whether an approximation is accurate, one has to say what object is being approximated.

Another nearby line keeps the data source exogenous but ties attribution to the realized training trajectory rather than to a permutation-invariant sample identity. Hara et al.\ introduced an SGD-based notion of occurrence-specific influence along a training run, Wang et al.\ formalized trajectory-specific leave-one-out influence, and In-Run Data Shapley studies run-specific attribution for a target model produced by a single training run~\cite{hara2019sgd,wang2025capturing,wang2024one-run-shapley}. These papers are especially relevant because they take realized occurrences seriously. The key difference is that perturbing one occurrence still does not change the law of future data. The training trajectory changes, but the data source remains exogenous. In the present paper, replay is the adaptive analogue of that fixed-log viewpoint, while conditional intervention is the target that appears once the learner can reshape the future data stream.

Recent work on sequential learning and RL moves closer to the setting of this paper but studies different targets. Gottesman et al.\ analyze influential transitions for off-policy evaluation, which is a sensitivity question about an evaluation estimator rather than about the learned system after perturbing part of its own training history~\cite{gottesman2020interpretable}. Hiraoka et al.\ study the influence of experiences in replay-based RL agents~\cite{hiraoka2024which}. Hu et al.\ study online RL directly and emphasize that each experience affects both the policy update and future data collection, but their framework is intentionally local, tying attribution to a recent PPO buffer and a nearby checkpoint~\cite{hu2025snapshot}. The present paper instead studies a global occurrence-level target that conditions on the realized prefix through the perturbed round and lets the whole post-occurrence future be recollected under the perturbed learner.

There is also a broader causal literature on sequential interventions. Longitudinal causal inference has long studied interventions that change later treatment mechanisms while conditioning on evolving histories, together with identification by g-computation, weighting, and doubly robust methods. Modern work on longitudinal modified treatment policies is a particularly relevant point of contact because it treats stochastic, history-dependent interventions without requiring point-mass treatment assignments~\cite{diaz2021lmtp}. The present paper uses that framework to formulate occurrence-level attribution in adaptive learning and then asks which replay-side objects fail to identify the resulting target and which structural restrictions make it identifiable from logged data.

Finally, the paper is close in spirit to the counterfactual-learning-systems perspective of Bottou et al., who stressed that replaying logged data is not the same as allowing the system and its future observations to evolve under an intervention~\cite{bottou2013counterfactual}. The difference is one of focus. Bottou et al.\ study counterfactual reasoning for learning systems in general. The present paper specializes that viewpoint to training-data attribution, treats attribution as an occurrence-level question, and develops the structural decomposition, replay-oracle insufficiency result, and identification frontier that arise from that specialization.

\subsection{A three-axis taxonomy of attribution targets}
\label{sec:taxonomy}

Papers on data attribution in adaptive learning often study different objects while using similar language, and the disagreement is usually not about estimator quality but about what counterfactual is being studied. An attribution question has three independent components: the unit of attribution (what is being perturbed---a dataset item, a realized occurrence, a replay-buffer experience, a rollout episode), the future semantics (what happens to later data after the perturbation---a frozen log, a truncated recollection, full recollection under the perturbed learner, an off-policy surrogate), and the target functional (what downstream quantity is being evaluated---the final learned system, a checkpoint, an evaluation estimator, the final training reward). None of these choices determines the others, and a method is not fully specified until all three are fixed. Table~\ref{tab:adaptive-attribution-taxonomy} organizes the nearby literature along these axes.

\begin{table*}[t]
\centering
\footnotesize
\begin{tabular}{p{0.17\linewidth} p{0.17\linewidth} p{0.18\linewidth} p{0.18\linewidth} p{0.23\linewidth}}
\toprule
\textbf{Work family} & \textbf{Unit of attribution} & \textbf{Future semantics} & \textbf{Target functional} & \textbf{Relation to the present paper} \\
\midrule

Trajectory-specific attribution \cite{hara2019sgd,wang2025capturing}
& realized training occurrence
& frozen / exogenous future
& final trained model
& replay-side special case of the exogenous limit \\

Replay-buffer influence \cite{hiraoka2024which}
& stored replay-buffer experience
& fixed replay buffer / replay-side
& final agent performance
& same broad problem family; literal subsumption needs an extension in which the learner state includes buffer contents and replay-sampling events \\

Local online RL attribution \cite{hu2025snapshot,liu2026ippo}
& recent rollout-buffer record or rollout episode
& truncated local window
& checkpoint action / checkpoint return / nearby post-training behavior
& naturally viewed as proxying checkpoint-local or short-horizon truncated targets \\

Replay-based non-local on-policy attribution \cite{hu2026nonlocal}
& realized rollout record
& replay over fixed future buffers
& non-local replay-LOO response
& non-local in time, but still replay-side rather than full recollection \\

OPE influence \cite{gottesman2020interpretable}
& transition in an offline batch
& frozen batch
& evaluation estimator
& different target class: estimator sensitivity rather than learned-system attribution \\

RLVR off-policy influence \cite{zhu2025cropi}
& prompt / trajectory / training sample
& off-policy surrogate
& RLVR training objective
& best viewed as an estimation surrogate near the action-only frontier, not as the same estimand \\

RFT sample influence \cite{tan2025rftinf}
& dataset sample or repeated sample presentation
& optimization-path retracing / proxy
& final training reward
& related global influence target at a different attribution unit \\

World-model data valuation \cite{zhang2026actionshapley}
& offline training sample
& exogenous offline data
& world-model performance
& outside occurrence-level adaptive attribution; relevant mainly as a contrast case \\

\textbf{Present paper}
& \textbf{realized occurrence}
& \textbf{full recollection under the perturbed learner}
& \textbf{final learned system}
& \textbf{canonical global target for occurrence-level attribution in adaptive learning} \\

\bottomrule
\end{tabular}
\caption{A three-axis taxonomy of attribution targets in and around adaptive learning. The present paper occupies one specific corner of this space. Nearby empirical methods often live in neighboring corners that should be understood as replay-side, truncated, or off-policy surrogates rather than as direct estimands of the global interventional target.}
\label{tab:adaptive-attribution-taxonomy}
\end{table*}

The present paper fixes one particular corner of this space: attribution to a \emph{realized occurrence}, full \emph{recollection} of the post-occurrence future under the perturbed learner, and evaluation of the \emph{final learned system}. The taxonomy's value is not that every nearby method reduces to this corner. Its value is that it prevents category mistakes: once the three axes are made explicit, one can ask whether a given method targets the same global object, a principled surrogate, or a different influence question altogether.

\section{Problem Formulation}
\label{sec:setup}

We introduce the setting in full generality and then specialize to a specific example. The finite-horizon assumption lets us present the main ideas without measure-theoretic machinery.

\subsection{Finite-horizon adaptive learning}

Fix a horizon \(T \ge 1\). For each round \(t \in \{1,\dots,T\}\), let \(\mathcal Z_t\) be a finite interaction space. We write
\[
\mathcal H_t := \mathcal Z_1 \times \cdots \times \mathcal Z_t,
\qquad
\mathcal H_0 := \{\emptyset\},
\]
and denote a realized prefix by \(z_{1:t} \in \mathcal H_t\).

The learner has internal state \(\theta_t \in \Theta\), where \(\Theta\) is an arbitrary state space. The environment is indexed by \(\nu\). For each round \(t\), learner state \(\theta \in \Theta\), and history prefix \(z_{1:t-1} \in \mathcal H_{t-1}\), let
\[
K_{\nu,t}^{\theta}(\cdot \mid z_{1:t-1})
\]
be a probability distribution on \(\mathcal Z_t\). This is the law of the next interaction under the current learner state and the current history.

The learner updates through maps
\[
U_t : \Theta \times \mathcal Z_t \times [0,1+\rho] \to \Theta,
\qquad t=1,\dots,T,
\]
for some fixed \(\rho>0\). The third argument is a nonnegative \emph{weight} that scales the direct learning effect of the \(t\)-th interaction. Under a standard run the weight is 1; setting it to 0 asks what would have happened if the learner observed the interaction but did not update from it. The range $[0, 1+\rho]$ allows both downweighting and upweighting.

Given a weight vector \(w=(w_1,\dots,w_T)\in[0,1+\rho]^T\), we write $\theta_t^{(w)}$ for the learner state at time $t$ under weights $w$. The system starts at $\theta_1^{(w)}(\emptyset) := \theta_1$. Online, the data at step $t$ is $Z_t \sim K_{\nu,t}^{\theta_t^{(w)}(Z_{1:t-1})}(\cdot \mid Z_{1:t-1})$, and the state updates via $\theta_{t+1}^{(w)} := U_t(\theta_t^{(w)}, Z_t, w_t)$.

Given a fixed realized log $z_{1:T}$, the \textbf{replay operator} pushes a perturbed learner through that same sequence without allowing the data to change:
\begin{equation}
\theta_1^{(w)}(z_{1:T}) := \theta_1,
\qquad
\theta_{t+1}^{(w)}(z_{1:T}) := U_t(\theta_t^{(w)}(z_{1:T}), z_t, w_t).
\label{eq:replay-dynamics}
\end{equation}

We fix an evaluation functional $F_\nu$ mapping the final learner state to a real number, for instance the expected reward of the terminal policy. For now $F_\nu$ is treated as a known abstract function; the question of identifying it from logged data is taken up in Section~\ref{sec:action-only-frontier}.

For each weight vector \(w\), the recursion in $Z_t \sim K_{\nu,t}^{\theta_t^{(w)}}(\cdot \mid Z_{1:t-1})$ induces a law on full histories:
\begin{equation}
\mathbb P_\nu^{(w)}(z_{1:T})
=
\prod_{t=1}^{T}
K_{\nu,t}^{\theta_t^{(w)}(z_{1:t-1})}(z_t \mid z_{1:t-1}).
\label{eq:path-law}
\end{equation}
The baseline run corresponds to the all-ones vector $\mathbf 1 := (1,\dots,1)$, and we write $\mathbb P_\nu := \mathbb P_\nu^{(\mathbf 1)}$.

We also abbreviate $\theta_s(z_{1:T}) := \theta_s^{(\mathbf 1)}(z_{1:T})$ for the baseline replayed state along a fixed realized history.

To perturb one realized occurrence at round \(t\), we use the one-coordinate family
\begin{equation}
w_s^{(t,\epsilon)} := 1 + \epsilon \mathbf 1\{s=t\},
\qquad
\epsilon \in [-1,\rho].
\label{eq:one-coordinate-family}
\end{equation}
Positive \(\epsilon\) upweights the direct learning effect of the realized occurrence, negative \(\epsilon\) downweights it, and \(\epsilon=-1\) suppresses its direct update contribution.

\subsection{Example: Instantiating the Model}
To make the above formulation concrete, consider a standard two-armed Bernoulli bandit trained via online gradient ascent.

The learner's state is its current policy, so $\theta_t \in (0,1)$ is the probability $p_t$ of pulling Arm 1. Each interaction records both the action and the reward: $Z_t = (A_t, R_t) \in \{0,1\} \times \{0,1\}$. The environment $\nu$ is defined by the true reward probabilities $\mu_0$ and $\mu_1$ for each arm; the kernel $K_{\nu,t}^{\theta_t}$ first samples $A_t \sim \text{Bernoulli}(\theta_t)$ and then $R_t \sim \text{Bernoulli}(\mu_{A_t})$.

The learner updates via entropic mirror descent with an importance-weighted reward estimate. Given the current policy, the observed pair, and a learning weight $w_t$, the next policy is
\begin{equation}
\text{logit}(\theta_{t+1}) = \text{logit}(\theta_t) + \eta w_t R_t \left( \frac{\mathbf{1}\{A_t=1\}}{\theta_t} - \frac{\mathbf{1}\{A_t=0\}}{1-\theta_t} \right),
\label{eq:bandit-logit}
\end{equation}
with $\theta_{t+1} = \sigma(\text{logit}(\theta_{t+1}))$. In a standard training run $w = (1, 1, \dots, 1)$. Setting $w_1 = 0$ evaluates the counterfactual in which the learner observed the interaction at $t=1$ but did not update its policy from it.

\subsection{Replay and conditional intervention}

When asking how the final model would have changed under a perturbation at round $t$, one must decide what happens to the data collected afterward. There are two natural answers.

The first is replay: hold the future data fixed. For a fully realized history $z_{1:T}$, the finite replay effect and the replay influence are
$$\Delta_{t,\epsilon}^{\mathrm{rep}}(F_\nu; z_{1:T})
:=
F_\nu\!\left(\theta_{T+1}^{(w^{(t,\epsilon)})}(z_{1:T})\right)
-
F_\nu\!\left(\theta_{T+1}(z_{1:T})\right),
\qquad
\mathcal I_t^{\mathrm{rep}}(F_\nu; z_{1:T})
:=
\left.
\frac{d}{d\epsilon}
F_\nu\!\left(\theta_{T+1}^{(w^{(t,\epsilon)})}(z_{1:T})\right)
\right|_{\epsilon=0}.$$
Replay is the natural object when the training set is fixed---it asks how the terminal model changes as the perturbed learner is pushed through the same sequence of data.

Before defining the second target, we note a basic causal fact: perturbing the learning weight at round $t$ does not change the past, because the learner only updates its state after observing $Z_t$. Therefore
$$\mathbb P_\nu^{(w^{(t,\epsilon)})}(z_{1:t}) = \mathbb P_\nu(z_{1:t}).$$
This prefix invariance means that conditioning on a realized prefix $z_{1:t}$ of positive baseline probability remains a valid event under the perturbation.

The second target, conditional intervention, holds reality fixed only through round $t$ and then lets the perturbed learner recollect the future. Fix a realized prefix $z_{1:t} \in \mathcal H_t$ with $\mathbb P_\nu(z_{1:t}) > 0$. The perturbed conditional future law is
$$Q_{\nu,t}^{\epsilon}(z_{t+1:T}\mid z_{1:t})
:=
\mathbb P_\nu^{(w^{(t,\epsilon)})}(z_{t+1:T}\mid z_{1:t}),
\qquad
\epsilon\in[-1,\rho].$$
The expected terminal target under this law is
$$\Psi_t^\epsilon(z_{1:t})
:=
\sum_{z_{t+1:T}}
Q_{\nu,t}^{\epsilon}(z_{t+1:T}\mid z_{1:t})\,
F_\nu\!\left(\theta_{T+1}^{(w^{(t,\epsilon)})}(z_{1:T})\right),$$
and the finite interventional effect and conditional interventional influence are
$$\Delta_{t,\epsilon}^{\mathrm{int}}(F_\nu; z_{1:t})
:=
\Psi_t^\epsilon(z_{1:t})-\Psi_t^0(z_{1:t}),
\qquad
\mathcal I_t^{\mathrm{int}}(F_\nu; z_{1:t})
:=
\left.
\frac{d}{d\epsilon}\Psi_t^\epsilon(z_{1:t})
\right|_{\epsilon=0}.$$
This is the primary attribution target of the paper. It attributes the impact of a specific realized occurrence while allowing the post-occurrence data stream to adapt to the altered learner.

Averaging over all realized prefixes at round $t$ gives a slot-level influence:
$$\overline{\mathcal I}_t^{\mathrm{int}}(F_\nu)
:=
\left.
\frac{d}{d\epsilon}
\sum_{z_{1:T}\in\mathcal H_T}
\mathbb P_\nu^{(w^{(t,\epsilon)})}(z_{1:T})
F_\nu\!\left(\theta_{T+1}^{(w^{(t,\epsilon)})}(z_{1:T})\right)
\right|_{\epsilon=0}.$$

The choice to condition on $z_{1:t}$ specifically is deliberate. Conditioning on the full history $z_{1:T}$ freezes all future randomness, collapsing the interventional target into replay. Conditioning on a strict prefix $z_{1:k}$ with $k < t$ loses the identity of the perturbed occurrence, averaging over all possible histories through round $t$. Conditioning on $z_{1:t}$ isolates the exact occurrence being attributed while leaving the causal future free to respond.

\subsection{When recollection collapses to replay}

Fix a round \(t\) and a realized prefix \(h=z_{1:t}\). Suppose that after round
\(t\), the interaction law no longer depends on the learner state. Then a
perturbation at round \(t\) can still change the terminal learner along any
fixed continuation, but it cannot change which continuation is collected.
Recollection and replay therefore coincide at the level of the future data law.
The only remaining choice is how much of the realized history one conditions on.

\begin{proposition}[Exogenous reduction]
\label{prop:exogenous-reduction}
Fix \(t\in\{1,\dots,T\}\) and a realized prefix \(h=z_{1:t}\in\mathcal H_t\)
with \(\mathbb P_\nu(h)>0\). Assume that for every future round
\(s\in\{t+1,\dots,T\}\), every history prefix
\(z_{1:s-1}\in\mathcal H_{s-1}\), and every pair of learner states
\(\theta,\vartheta\in\Theta\),
\[
K_{\nu,s}^{\theta}(\cdot\mid z_{1:s-1})
=
K_{\nu,s}^{\vartheta}(\cdot\mid z_{1:s-1}).
\]
Then for every \(\epsilon\in[-1,\rho]\) and every continuation
\(c\in\mathcal C_t\),
\[
Q_{\nu,t}^{\epsilon}(c\mid h)
=
Q_{\nu,t}^{0}(c\mid h)
=
\mathbb P_\nu(c\mid h).
\]
Consequently,
\[
\Psi_t^\epsilon(h)
=
\mathbb E_{\mathbb P_\nu}
\!\left[
F_\nu\!\left(
\theta_{T+1}^{(w^{(t,\epsilon)})}(Z_{1:T})
\right)
\mid Z_{1:t}=h
\right].
\]
Hence the finite conditional interventional effect reduces to conditional
expected replay:
\[
\Delta_{t,\epsilon}^{\mathrm{int}}(F_\nu;h)
=
\mathbb E_{\mathbb P_\nu}
\!\left[
\Delta_{t,\epsilon}^{\mathrm{rep}}(F_\nu;Z_{1:T})
\mid Z_{1:t}=h
\right].
\]
If the derivative at \(0\) exists, then
\[
\mathcal I_t^{\mathrm{int}}(F_\nu;h)
=
\mathbb E_{\mathbb P_\nu}
\!\left[
\mathcal I_t^{\mathrm{rep}}(F_\nu;Z_{1:T})
\mid Z_{1:t}=h
\right].
\]
\end{proposition}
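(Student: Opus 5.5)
The plan is to compute the conditional future law directly from the product formula \eqref{eq:path-law}. Here $\mathcal C_t$ is the set of continuations $z_{t+1:T}\in\mathcal Z_{t+1}\times\cdots\times\mathcal Z_T$.

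\textbf{Factorisation.} For any weight vector $w$ and full history $z_{1:T}=(h,c)$, split the product in \eqref{eq:path-law} at round $t$. The first $t$ factors involve only $\theta_s^{(w)}(z_{1:s-1})$ for $s\le t$. Under $w^{(t,\epsilon)}$ these states coincide with the baseline states, because the weights before round $t$ are all one and $\theta_s$ depends only on $w_{1:s-1}$. Hence the prefix factor is exactly $\mathbb P_\nu(h)$; this is the prefix invariance already noted in the text. It is positive by assumption, so conditioning is legitimate and
\[
Q_{\nu,t}^{\epsilon}(c\mid h)=\prod_{s=t+1}^{T}K_{\nu,s}^{\theta_s^{(w^{(t,\epsilon)})}(z_{1:s-1})}(z_s\mid z_{1:s-1}).
\]

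\textbf{Removing the state dependence.} By hypothesis, each kernel in this product can have its state superscript replaced by the baseline state $\theta_s(z_{1:s-1})$, or indeed by any fixed state. Doing so yields the same product for every $\epsilon$, in particular for $\epsilon=0$. The $\epsilon=0$ product is $\mathbb P_\nu(h,c)/\mathbb P_\nu(h)=\mathbb P_\nu(c\mid h)$. This gives the first display.

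\textbf{Consequences.} Substituting into the definition of $\Psi_t^\epsilon$ gives
\[
\Psi_t^\epsilon(h)=\sum_c\mathbb P_\nu(c\mid h)\,F_\nu\bigl(\theta_{T+1}^{(w^{(t,\epsilon)})}(h,c)\bigr),
\]
which is the stated conditional expectation. Note that $\theta_{T+1}^{(w)}(z_{1:T})$ is the replay state, and along the realized path it agrees with the online state, so no ambiguity arises.

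Subtracting the $\epsilon=0$ case, and using linearity of a finite sum against a law that does not depend on $\epsilon$, gives the finite-effect identity with integrand $\Delta^{\mathrm{rep}}_{t,\epsilon}$.

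For the derivative statement, $\Psi_t^\epsilon(h)$ is a finite sum over $c\in\mathcal C_t$ with $\epsilon$-free weights. Differentiating term by term is therefore valid provided each replay map $\epsilon\mapsto F_\nu(\theta_{T+1}^{(w^{(t,\epsilon)})}(h,c))$ is differentiable at $0$ for continuations of positive conditional probability. This is the hypothesis implicit in ``$\mathcal I_t^{\mathrm{rep}}$ is defined.''

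\textbf{Main subtlety.} The only real obstacle is this last point. Existence of the derivative of $\Psi_t^\epsilon$ alone does not imply that each summand is differentiable, since nondifferentiable terms could in principle cancel. I would therefore read the hypothesis as requiring the replay influences $\mathcal I_t^{\mathrm{rep}}(F_\nu;(h,c))$ to exist for all $c$ with $\mathbb P_\nu(c\mid h)>0$. With that reading, the final equality follows from finite linearity of differentiation. Everything else is bookkeeping with the product formula.
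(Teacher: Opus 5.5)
Your proposal is correct and follows essentially the same route as the paper's proof. Both arguments replace every future kernel by the common state-free kernel, so that \(Q_{\nu,t}^{\epsilon}(\cdot\mid h)\) is the same product for all \(\epsilon\); both then substitute into \(\Psi_t^\epsilon(h)\), subtract the \(\epsilon=0\) case, and differentiate term by term over the finite set \(\mathcal C_t\). Two points you make explicit are left implicit in the paper. The first is that the conditional product formula for \(Q_{\nu,t}^{\epsilon}\) follows from prefix invariance. The second is that term-by-term differentiation needs each replay derivative to exist on continuations with \(\mathbb P_\nu(c\mid h)>0\), since existence of \(\frac{d}{d\epsilon}\Psi_t^\epsilon\) alone would not suffice. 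Your reading of the hypothesis is exactly what is needed for the right-hand side to be defined.
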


Proposition~\ref{prop:exogenous-reduction} is the formal bridge to
occurrence-specific attribution on exogenous training streams. In that class,
the future-law correction disappears identically. The adaptive target therefore
reduces to conditional expected replay. If one conditions further on the full
realized continuation \(Z_{1:T}=z_{1:T}\), one recovers the fixed-log,
trajectory-specific viewpoint studied in exogenous run-specific attribution.

\section{Deconstructing the Adaptive Gap}
\label{sec:structural-gap}

\subsection{The Structural Decomposition}

The interventional target $\Psi_t^\epsilon(h)$ is a sum over all possible continuations of the probability of that continuation times the terminal value under the perturbed learner. Taking the derivative in $\epsilon$ by the product rule decomposes the interventional influence into two terms. Fix a realized prefix $h = z_{1:t}$ with $\mathbb P_\nu(h)>0$, and let
\begin{equation}\dot Q_{\nu,t}(c\mid h):=\left.\frac{d}{d\epsilon}Q_{\nu,t}^{\epsilon}(c\mid h)\right|_{\epsilon=0}\label{eq:dotQ}\end{equation}
denote the first-order shift in the conditional future law at the perturbation.

\begin{theorem}[Structural decomposition]
\label{thm:structural-decomposition}
Fix $t\in\{1,\dots,T\}$ and a realized prefix $h=z_{1:t}$ with positive baseline probability. Assuming the maps $\epsilon \mapsto Q_{\nu,t}^{\epsilon}$ and $\epsilon \mapsto F_\nu$ are differentiable at $0$,
\begin{equation}
\mathcal I_t^{\mathrm{int}}(F_\nu; h)
=
\mathbb E_{\mathbb P_\nu}
\!\left[
\mathcal I_t^{\mathrm{rep}}(F_\nu; Z_{1:T})
\mid Z_{1:t}=h
\right]
+
\sum_{c\in\mathcal C_t}
\dot Q_{\nu,t}(c\mid h)
F_\nu\!\left(\theta_{T+1}(h,c)\right).
\label{eq:structural-decomposition}
\end{equation}
\end{theorem}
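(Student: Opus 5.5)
The plan is to differentiate the finite sum defining \(\Psi_t^\epsilon(h)\) term by term, apply the product rule to each summand, and then recognize the two resulting pieces as the two terms of \eqref{eq:structural-decomposition}. Here \(\mathcal C_t\) denotes the finite set of continuations \(c=z_{t+1:T}\in\mathcal Z_{t+1}\times\cdots\times\mathcal Z_T\), and we write \((h,c)\) for the concatenated full history.

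\textbf{Step 1 (finite sum).} Since every \(\mathcal Z_s\) is finite, \(\mathcal C_t\) is finite. The definition therefore reads
\[
\Psi_t^\epsilon(h)=\sum_{c\in\mathcal C_t} Q^\epsilon_{\nu,t}(c\mid h)\,g_c(\epsilon),
\qquad
g_c(\epsilon):=F_\nu\!\left(\theta_{T+1}^{(w^{(t,\epsilon)})}(h,c)\right).
\]
We read the differentiability hypothesis as follows: for each \(c\), both \(\epsilon\mapsto Q^\epsilon_{\nu,t}(c\mid h)\) and \(\epsilon\mapsto g_c(\epsilon)\) are differentiable at \(0\). By definition, \(g_c'(0)=\mathcal I_t^{\mathrm{rep}}(F_\nu;(h,c))\). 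Because the sum is finite, the derivative commutes with it, so no dominated-convergence argument is needed.

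\textbf{Step 2 (product rule).} Differentiating each summand at \(\epsilon=0\) gives
\[
\mathcal I_t^{\mathrm{int}}(F_\nu;h)=\sum_{c} Q^0_{\nu,t}(c\mid h)\,g_c'(0)+\sum_c \dot Q_{\nu,t}(c\mid h)\,g_c(0).
\]

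\textbf{Step 3 (identify the terms).} At \(\epsilon=0\) we have \(w^{(t,0)}=\mathbf 1\), so \(Q^0_{\nu,t}(c\mid h)=\mathbb P_\nu(c\mid h)\). This conditional probability is well defined by the positivity assumption \(\mathbb P_\nu(h)>0\), together with the prefix invariance noted before the definition. The first sum is therefore exactly \(\mathbb E_{\mathbb P_\nu}[\mathcal I_t^{\mathrm{rep}}(F_\nu;Z_{1:T})\mid Z_{1:t}=h]\), since conditioning on \(Z_{1:t}=h\) reduces the expectation to a sum over continuations \(c\). Also \(g_c(0)=F_\nu(\theta_{T+1}(h,c))\), by the baseline abbreviation \(\theta_s=\theta_s^{(\mathbf 1)}\). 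Substituting both identifications yields \eqref{eq:structural-decomposition}.

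\textbf{Main obstacle.} The only real subtlety is interpretive rather than computational. The statement's hypothesis that "\(\epsilon\mapsto F_\nu\) is differentiable" must be taken to mean differentiability of each replay map \(g_c\) at \(0\). Equivalently, the replay influence \(\mathcal I_t^{\mathrm{rep}}(F_\nu;(h,c))\) must exist for each continuation \(c\) with \(\mathbb P_\nu(c\mid h)>0\).

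Continuations of zero baseline probability need a short side remark. Their first-term contribution is multiplied by \(Q^0_{\nu,t}(c\mid h)=0\), so it vanishes whenever \(g_c\) is differentiable at \(0\). To make the product rule valid at such \(c\), we will assume differentiability for all \(c\in\mathcal C_t\); alternatively, continuity of \(g_c\) at \(0\) is enough there, because \(Q^\epsilon\) vanishes at \(0\).

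As a consistency check, under the hypothesis of Proposition~\ref{prop:exogenous-reduction} we have \(\dot Q_{\nu,t}\equiv 0\). The correction term then drops out, and the decomposition reduces to that proposition.
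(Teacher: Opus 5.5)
Your proposal is correct and follows the paper's proof essentially verbatim: you write $\Psi_t^\epsilon(h)$ as a finite sum over $\mathcal C_t$, differentiate term by term with the product rule, and identify $Q^0_{\nu,t}(c\mid h)=\mathbb P_\nu(c\mid h)$, $g_c(0)=F_\nu(\theta_{T+1}(h,c))$, and $g_c'(0)=\mathcal I_t^{\mathrm{rep}}(F_\nu;h,c)$. Your side remark that continuity of $g_c$ at $0$ suffices for zero-probability continuations is a small, correct sharpening of a differentiability hypothesis that the paper leaves implicit.
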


The first term is conditional expected replay: it accounts for how the perturbed learner changes the terminal model when pushed through the realized future data. In a fixed-dataset setting the future data law does not depend on the learner, so $\dot Q = 0$ and this term is all there is. In an adaptive setting the future data law does depend on the learner, and the second term---the future-law correction---captures the change in which continuations the perturbed learner is likely to encounter. It is this term that replay ignores.

\subsection{The Centered Form}
Because $\dot Q_{\nu,t}(\cdot \mid h)$ sums to zero over all continuations (the total probability mass is always 1), one can subtract a baseline constant from each summand in the correction term without changing the total. This gives a more interpretable form.

\begin{proposition}[Centered form of the future-law term]
\label{prop:centered-future-law}
Under the assumptions of Theorem~\ref{thm:structural-decomposition},
\[
\sum_{c\in\mathcal C_t}
\dot Q_{\nu,t}(c\mid h)=0,
\]
and consequently
\begin{align}
\mathcal I_t^{\mathrm{int}}(F_\nu; h)
-
\mathbb E_{\mathbb P_\nu}
\!\left[
\mathcal I_t^{\mathrm{rep}}(F_\nu; Z_{1:T})
\mid Z_{1:t}=h
\right]
&=
\sum_{c\in\mathcal C_t}
\dot Q_{\nu,t}(c\mid h)
\Bigl[
F_\nu\!\left(\theta_{T+1}(h,c)\right)
-
F_\nu\!\left(\theta_{t+1}(h)\right)
\Bigr].
\label{eq:centered-future-law}
\end{align}
\end{proposition}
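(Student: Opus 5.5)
The plan is to prove the zero-sum identity first and then subtract a constant from the correction term of Theorem~\ref{thm:structural-decomposition}. For the zero-sum identity, note that for every \(\epsilon\in[-1,\rho]\) the perturbed conditional law \(Q_{\nu,t}^{\epsilon}(\cdot\mid h)\) is a probability distribution on the finite set \(\mathcal C_t\) of continuations. It is well defined because prefix invariance gives \(\mathbb P_\nu^{(w^{(t,\epsilon)})}(h)=\mathbb P_\nu(h)>0\). Hence
\[
\sum_{c\in\mathcal C_t} Q_{\nu,t}^{\epsilon}(c\mid h)=1
\quad\text{for all }\epsilon .
\]
Since \(\mathcal C_t\) is finite and each map \(\epsilon\mapsto Q_{\nu,t}^{\epsilon}(c\mid h)\) is differentiable at \(0\) by assumption, differentiation passes through the finite sum. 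The derivative of the constant function \(1\) is zero, which gives \(\sum_c \dot Q_{\nu,t}(c\mid h)=0\). The only point to check is that \(0\) is interior to \([-1,\rho]\), which holds because \(\rho>0\).

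For the centered identity, rearrange \eqref{eq:structural-decomposition}. Its left-hand side becomes exactly the future-law correction \(\sum_c \dot Q_{\nu,t}(c\mid h)\,F_\nu(\theta_{T+1}(h,c))\). The quantity \(F_\nu(\theta_{t+1}(h))\) does not depend on \(c\), since the baseline replayed state \(\theta_{t+1}\) is a function of the prefix \(z_{1:t}=h\) alone. Subtracting
\[
F_\nu(\theta_{t+1}(h))\sum_c \dot Q_{\nu,t}(c\mid h)=0
\]
from the correction term therefore leaves its value unchanged. Combining the two sums term by term yields \eqref{eq:centered-future-law}.

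There is no genuine obstacle. The one point needing care is that \(\theta_{t+1}(h)\) is well defined as a function of the prefix, so that the subtracted term is a constant in \(c\). This follows from the replay recursion \eqref{eq:replay-dynamics}, because \(\theta_{t+1}\) depends only on \(z_{1:t}\).
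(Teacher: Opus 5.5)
Your proposal is correct and follows the paper's proof essentially step for step. You differentiate the normalization \(\sum_{c} Q_{\nu,t}^{\epsilon}(c\mid h)=1\) at \(\epsilon=0\) to obtain the zero-sum identity, then subtract the \(c\)-independent constant \(F_\nu(\theta_{t+1}(h))\) times that zero sum from the correction term in \eqref{eq:structural-decomposition}. Your added checks — well-definedness via prefix invariance, \(0\) being interior to \([-1,\rho]\), and \(\theta_{t+1}\) depending only on the prefix — are harmless refinements that the paper leaves implicit.
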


Equation~\eqref{eq:centered-future-law} shows that the gap behaves like a covariance between the shift in the future data law and the terminal value of the shifted continuations. The gap is zero when the perturbation does not change which continuations the learner encounters ($\dot Q = 0$), and also when all continuations lead to the same terminal value regardless of which ones become more or less likely. The gap is large when the perturbation meaningfully shifts the data distribution toward continuations with substantially different terminal outcomes.

\section{The Insufficiency of Replay-Side Information}\label{sec:replay-insufficiency}

The structural decomposition shows that replay misses the future-law correction. A natural question is whether richer replay-side information could close the gap. To answer this, we consider the most informative replay-side object one could construct.

For any continuation $c \in \mathcal C_t := \mathcal Z_{t+1}\times\cdots\times\mathcal Z_T$, define the fixed-log replay response curve
$$\phi_{\nu,t,c}(\epsilon; z_{1:t}) := F_\nu\!\left(\theta_{T+1}^{(w^{(t,\epsilon)})}(z_{1:t},c)\right),$$
which records how the terminal evaluation changes as the perturbation level $\epsilon$ varies, with the learner forced through the specific continuation $c$. The replay oracle at prefix $z_{1:t}$ is then
$$\mathcal R_{\nu,t}(z_{1:t}) := \Bigl( Q_{\nu,t}^{0}(\cdot\mid z_{1:t}),\ \{\phi_{\nu,t,c}(\cdot; z_{1:t}) : c \in \mathcal C_t\} \Bigr),$$
pairing the baseline probability of every continuation with the full family of fixed-log response curves for every continuation. This is the richest conceivable replay summary.

\begin{theorem}[Replay-oracle insufficiency]\label{thm:replay-oracle-insufficiency}There exists a smooth horizon-$2$ adaptive learning class and a realized prefix $z_1^\star$ of positive probability such that for every pair $\alpha\neq\beta$, the environments share the same replay oracle:$$\mathcal R_{\nu_\alpha,1}(z_1^\star) = \mathcal R_{\nu_\beta,1}(z_1^\star),$$but their interventional targets differ:$$\Psi_{\nu_\alpha,1}^\epsilon(z_1^\star) \neq \Psi_{\nu_\beta,1}^\epsilon(z_1^\star) \qquad \text{for every }\epsilon \text{ with } \alpha\epsilon\neq\beta\epsilon.$$In particular, $\mathcal I_{1,\nu_\alpha}^{\mathrm{int}}(F; z_1^\star) \neq \mathcal I_{1,\nu_\beta}^{\mathrm{int}}(F; z_1^\star)$, and the interventional target is not identified by any functional of the replay oracle over this class.\end{theorem}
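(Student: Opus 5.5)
The plan is to build an explicit one-parameter family $\{\nu_\alpha\}_{\alpha\in A}$ of horizon-$2$ environments, with $A\subset\mathbb R$ containing at least two values. All members share the interaction spaces, the learner, the update maps, the evaluation functional $F$, and the baseline kernels. They differ only in how the round-$2$ kernel responds to learner states away from the baseline state. Take $\mathcal Z_1=\{z_1^\star\}$ (or any finite set containing a point $z_1^\star$ of positive probability) and $\mathcal Z_2=\{0,1\}$. Let $\Theta=\mathbb R$ and $\theta_1=0$, with update $U_1(\theta,z_1,w)=\theta+w$, so that $\theta_2^{(w^{(1,\epsilon)})}=1+\epsilon$. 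Let $U_2(\theta,z_2,w)=\theta+wz_2$ and $F(\theta)=\theta$. Then the replay responses are $\phi_{\nu,1,c}(\epsilon;z_1^\star)=1+\epsilon+c$ for $c\in\{0,1\}$. These responses involve only the update maps and $F$, so they are the same for every $\alpha$.

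Next, choose the round-$2$ kernel to depend on the learner state:
\[
K_{\nu_\alpha,2}^{\theta}(1\mid z_1^\star)=\tfrac12+\tfrac14\tanh\bigl(\alpha(\theta-1)\bigr),
\]
which is a smooth probability in $(\tfrac14,\tfrac34)$. For the proposal I take $A\subseteq[0,\infty)$ or any set on which $\alpha\mapsto\tanh(\alpha x)$ is injective for $x\neq0$; if needed, the $\tanh$ can be replaced by a linear map clipped to a bounded range. At the baseline state $\theta=1$ this probability equals $\tfrac12$ for every $\alpha$. Hence $Q^0_{\nu_\alpha,1}(\cdot\mid z_1^\star)$ is uniform and independent of $\alpha$. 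Together with the $\alpha$-independent replay curves, this gives $\mathcal R_{\nu_\alpha,1}(z_1^\star)=\mathcal R_{\nu_\beta,1}(z_1^\star)$.

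For the interventional target, the perturbed learner sits at state $1+\epsilon$ at round $2$. Therefore
\[
\Psi^\epsilon_{\nu_\alpha,1}(z_1^\star)=1+\epsilon+\tfrac12+\tfrac14\tanh(\alpha\epsilon).
\]
Whenever $\alpha\epsilon\neq\beta\epsilon$, we have $\tanh(\alpha\epsilon)\neq\tanh(\beta\epsilon)$ because $\tanh$ is strictly increasing, so the targets differ. This is exactly the stated condition. Differentiating at $0$ gives $\mathcal I^{\mathrm{int}}_{1,\nu_\alpha}=1+\alpha/4$. This also matches Theorem~\ref{thm:structural-decomposition}: the replay term is $1$ for all $\alpha$, and the future-law correction is $\dot Q(1)\cdot 1+\dot Q(0)\cdot 0=\alpha/4$. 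The two influences differ for $\alpha\neq\beta$. Non-identification then follows formally: a functional $G$ of the replay oracle would have to satisfy $G(\mathcal R_{\nu_\alpha})=\Psi_{\nu_\alpha}$ and $G(\mathcal R_{\nu_\beta})=\Psi_{\nu_\beta}$. The two inputs are equal while the two outputs differ, which is a contradiction.

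The main subtlety, rather than a true obstacle, is making the construction fit the paper's formal requirements. $\Theta$ must accommodate the weights in $[0,1+\rho]$, the prefix $z_1^\star$ must have positive probability, and "smooth" should hold in $\epsilon$ for all quantities. All of these are immediate with the choices above. If strict injectivity over an arbitrary real index set is wanted, the kernel can instead be $\tfrac12+\tfrac14\,\alpha(\theta-1)$ with $\alpha\in(-1,1)$ and the perturbation restricted to $|\epsilon|\le1$, where $\rho\le1$; the probabilities then stay in $(\tfrac14,\tfrac34)$. The key conceptual point to emphasize is that the replay oracle only observes the kernel at the baseline learner state, while $\Psi^\epsilon$ depends on the kernel off that state. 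The construction exploits exactly this gap.
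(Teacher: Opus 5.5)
Your construction is correct and is essentially the paper's: a horizon-$2$ family with a singleton first round and a binary second round, with updates and $F$ independent of the environment so that all replay curves coincide, and a round-$2$ kernel equal to $\tfrac12$ at the baseline learner state but whose slope there depends on the environment index. (The paper uses $\sigma(\gamma\theta)$ with baseline state $0$ and constant replay curves $\phi_c\equiv c$; your curves $1+\epsilon+c$ additionally show the obstruction persists when the replay term is nonzero.) Your hedging about restricting the index set or switching to a clipped linear kernel is unnecessary: $\tanh$ is strictly increasing on all of $\mathbb R$, so $\Psi^\epsilon_{\nu_\alpha,1}(z_1^\star)\neq\Psi^\epsilon_{\nu_\beta,1}(z_1^\star)$ already holds for every real $\alpha\neq\beta$ with $\alpha\epsilon\neq\beta\epsilon$.
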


The reason is that the replay oracle, however rich, contains no information about $\dot Q$---the derivative of the future data law with respect to the perturbation. Two environments can agree on every fixed-log replay response curve while disagreeing on how the environment's data-generation mechanism responds to a changed learner state. The oracle captures how the perturbed learner behaves on every conceivable historical log; it cannot capture how the perturbation changes which log the learner would actually encounter.

\section{Identification under Action-Only Learner State Dependence}\label{sec:action-only-frontier}

The negative result rules out identification in arbitrary environments. To recover it, one must restrict how the learner and the environment interact. The natural restriction for contextual bandit and RL settings is that the learner affects the future only through its action policy, not through the context or reward distributions. We call this the action-only class.

Formally, assume each interaction decomposes as $Z_s=(X_s,A_s,R_s)$ and that the transition kernel factors as
\begin{equation}
K_{\nu,s}^{\theta}(x,a,r\mid z_{1:s-1})
=
D_s(x\mid z_{1:s-1})\,
\pi_\theta(a\mid x)\,
P_s(r\mid x,a,z_{1:s-1}),
\label{eq:action-only-factorization}
\end{equation}
where $D_s$ and $P_s$ do not depend on the learner state $\theta$. The environment controls contexts and rewards; the learner controls only its action distribution. Under this structure, perturbing the learner at round $t$ changes the future data law only through the policy, and the change can be tracked exactly. For a realized prefix $h = z_{1:t}$, define the pathwise policy ratio
\begin{equation}\Lambda_t^\epsilon(c;h):=\prod_{s=t+1}^{T}\frac{\pi_{\theta_s^{(w^{(t,\epsilon)})}(h,c_{t+1:s-1})}(a_s\mid x_s)}{\pi_{\theta_s(h,c_{t+1:s-1})}(a_s\mid x_s)}.\label{eq:policy-ratio}\end{equation}
Because the context and reward terms cancel in the ratio of perturbed to baseline future laws, $\Lambda_t^\epsilon$ gives an exact change-of-measure representation.

\begin{theorem}[Exact change of measure in the action-only class]
\label{thm:action-only-com}
Assume the factorization \eqref{eq:action-only-factorization}. Fix \(h=z_{1:t}\) with \(\mathbb P_\nu(h)>0\). Then for every \(\epsilon\in[-1,\rho]\) and every continuation \(c\in\mathcal C_t\) with \(Q_{\nu,t}^{0}(c\mid h)>0\), the perturbed future probability is simply the baseline probability scaled by the policy ratio:
\begin{equation}
Q_{\nu,t}^{\epsilon}(c\mid h)
=
\Lambda_t^\epsilon(c;h)\,
Q_{\nu,t}^{0}(c\mid h).
\label{eq:action-only-density-ratio}
\end{equation}

Suppose in addition that there exists \(\epsilon_0>0\) such that for every \(\epsilon\in(-\epsilon_0,\epsilon_0)\) and every continuation \(c\in\mathcal C_t\),
\begin{equation}
Q_{\nu,t}^{\epsilon}(c\mid h)>0
\quad\Longrightarrow\quad
Q_{\nu,t}^{0}(c\mid h)>0.
\label{eq:action-only-overlap}
\end{equation}
Then for every \(\epsilon\in(-\epsilon_0,\epsilon_0)\),
\begin{equation}
\Psi_t^\epsilon(h)
=
\mathbb E_{\mathbb P_\nu}
\!\left[
\Lambda_t^\epsilon(Z_{t+1:T};h)\,
F_\nu\!\left(\theta_{T+1}^{(w^{(t,\epsilon)})}(Z_{1:T})\right)
\mid Z_{1:t}=h
\right].
\label{eq:action-only-psi-rep}
\end{equation}
\end{theorem}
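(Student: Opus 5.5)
The plan is to work directly from the path-law factorization \eqref{eq:path-law} together with prefix invariance. First I would write the perturbed conditional future law as a product of per-round kernels. Since $\mathbb P_\nu^{(w^{(t,\epsilon)})}(h)=\mathbb P_\nu(h)>0$, dividing the full path law by the prefix probability gives
\[
Q_{\nu,t}^{\epsilon}(c\mid h)=\prod_{s=t+1}^{T}K_{\nu,s}^{\theta_s^{(w^{(t,\epsilon)})}(h,c_{t+1:s-1})}(z_s\mid z_{1:s-1}),
\]
where $z_{1:T}=(h,c)$. The factors for rounds $s\le t$ cancel because, for $s\le t$, $\theta_s^{(w^{(t,\epsilon)})}$ depends only on $w_1,\dots,w_{s-1}$, which are all equal to $1$. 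The same formula holds at $\epsilon=0$ with the baseline states $\theta_s(h,c_{t+1:s-1})$. The online states along a path coincide with the replay states \eqref{eq:replay-dynamics} evaluated on that path, so both formulas can be written in replay notation.

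Next I substitute the factorization \eqref{eq:action-only-factorization} into each factor. The terms $D_s(x_s\mid z_{1:s-1})$ and $P_s(r_s\mid x_s,a_s,z_{1:s-1})$ are the same in the perturbed and baseline products, because they do not involve $\theta$.

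To get \eqref{eq:action-only-density-ratio}, I fix $c$ with $Q^0_{\nu,t}(c\mid h)>0$. Then every baseline factor is positive. In particular each $D_s$, each $P_s$, and each baseline $\pi_{\theta_s}(a_s\mid x_s)$ is strictly positive, so $\Lambda_t^\epsilon(c;h)$ is well defined. Dividing the perturbed product by the baseline product cancels the $D_s$ and $P_s$ terms and leaves exactly the policy ratio \eqref{eq:policy-ratio}. Multiplying back by $Q^0_{\nu,t}(c\mid h)$ gives the claimed identity.

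For \eqref{eq:action-only-psi-rep}, I would split the sum defining $\Psi_t^\epsilon(h)$ over $\mathcal C_t$ into two parts: continuations with $Q^0_{\nu,t}(c\mid h)>0$ and continuations with $Q^0_{\nu,t}(c\mid h)=0$.
\begin{itemize}
\item On the second set, overlap \eqref{eq:action-only-overlap} (taken contrapositively) gives $Q^\epsilon_{\nu,t}(c\mid h)=0$ for $|\epsilon|<\epsilon_0$, so these terms vanish.
\item On the first set, I replace $Q^\epsilon_{\nu,t}$ by $\Lambda_t^\epsilon Q^0_{\nu,t}$ using the first part.
\end{itemize}
What remains is $\sum_{c:\,Q^0>0}Q^0_{\nu,t}(c\mid h)\,\Lambda_t^\epsilon(c;h)\,F_\nu(\theta_{T+1}^{(w^{(t,\epsilon)})}(h,c))$. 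This is the conditional expectation under $\mathbb P_\nu$ given $Z_{1:t}=h$, because $Q^0_{\nu,t}(\cdot\mid h)=\mathbb P_\nu(\cdot\mid h)$ and null continuations contribute nothing to that expectation.

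I expect the main obstacle to be bookkeeping rather than any substantive difficulty, at two points. The first is justifying that $\Lambda_t^\epsilon$ is well defined, with nonzero denominators, exactly on the support of the baseline law. The second is making the product cancellation in the first step rigorous: the learner states inside the kernels for $s>t$ must be the replay states along the same path, and the perturbation must leave the prefix factors unchanged. Both follow from the recursive definitions, but I would state them explicitly, since the identity $Q^0_{\nu,t}=\mathbb P_\nu(\cdot\mid h)$ and the restriction to $\epsilon\in(-\epsilon_0,\epsilon_0)$ both rest on them.
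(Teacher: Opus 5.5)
Your proposal is correct and follows essentially the same route as the paper. Both arguments write the perturbed and baseline conditional future laws as products of the factorized kernels over rounds $t+1,\dots,T$, cancel the $\theta$-free context and reward factors on the baseline support to obtain $\Lambda_t^\epsilon$, and then use overlap to restrict the sum defining $\Psi_t^\epsilon(h)$ to baseline-positive continuations before substituting the ratio identity. You additionally justify the product form through prefix invariance and the agreement of online and replay states, which the paper takes as definitional; this adds rigor without changing the argument.
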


The representation~\eqref{eq:action-only-psi-rep} is an importance-sampling identity: the interventional target can be evaluated on baseline data by reweighting each realized continuation by the ratio of the perturbed policy's likelihood to the baseline policy's likelihood along that continuation.

Identification then follows if the terminal evaluation $F_\nu$ is a known functional of the baseline law.

\begin{corollary}[Identification in the action-only class]
\label{cor:action-only-identification}
Assume the hypotheses of Theorem~\ref{thm:action-only-com}, assume that the learner update maps \(U_1,\dots,U_T\) and the action law \(\pi_\theta\) are known, and assume that there exists a measurable functional \(\mathfrak F\) such that for every \(\vartheta\in\mathcal R_{\nu,t}^{\epsilon_0}(h)\),
\[
F_\nu(\vartheta)=\mathfrak F(\mathbb P_\nu,\vartheta).
\]
Then for every \(\epsilon\in(-\epsilon_0,\epsilon_0)\), the finite target \(\Psi_t^\epsilon(h)\) is identified from the baseline law. If the derivative at \(0\) exists, then \(\mathcal I_t^{\mathrm{int}}(F_\nu;h)\) is identified as well.
\end{corollary}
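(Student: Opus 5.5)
The plan is to start from the importance-sampling identity \eqref{eq:action-only-psi-rep} of Theorem~\ref{thm:action-only-com}. Then I will check that every ingredient inside the conditional expectation is a known map applied to objects determined by the baseline law $\mathbb P_\nu$.

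\textbf{Step 1: identify the conditioning law.} Since $\mathbb P_\nu(h)>0$, the conditional law $Q^0_{\nu,t}(\cdot\mid h)=\mathbb P_\nu(\cdot\mid Z_{1:t}=h)$ is a functional of $\mathbb P_\nu$. It is obtained by elementary conditioning on the finite space $\mathcal H_T$.

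\textbf{Step 2: identify the integrand on the support.} Fix a continuation $c$ with $Q^0_{\nu,t}(c\mid h)>0$.
\begin{itemize}
\item The replayed states $\theta_s(h,c_{t+1:s-1})$ and $\theta_s^{(w^{(t,\epsilon)})}(h,c_{t+1:s-1})$ are computed by the recursion \eqref{eq:replay-dynamics} from the known $\theta_1$ and the known maps $U_1,\dots,U_T$. Here I take the initial state $\theta_1$ to be known, as part of the learner specification.
\item The ratio $\Lambda_t^\epsilon(c;h)$ in \eqref{eq:policy-ratio} is therefore computable from the known action law $\pi_\theta$. Its denominator is nonzero, because $Q^0_{\nu,t}(c\mid h)>0$ forces each baseline factor $\pi_{\theta_s}(a_s\mid x_s)>0$ through the factorization \eqref{eq:action-only-factorization}.
\item The terminal state $\vartheta=\theta_{T+1}^{(w^{(t,\epsilon)})}(h,c)$ is likewise computable.
\end{itemize}

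\textbf{Step 3: identify the evaluation term.} Here I need $\vartheta\in\mathcal R^{\epsilon_0}_{\nu,t}(h)$. I read this set as the set of terminal states reachable from $h$ under perturbations with $|\epsilon|<\epsilon_0$ along continuations with positive baseline probability. The only obstacle is whether the paper's definition of this set covers exactly these states. If it does, $F_\nu(\vartheta)=\mathfrak F(\mathbb P_\nu,\vartheta)$ is determined by $\mathbb P_\nu$. If the definition were instead phrased via $Q^\epsilon$-support, the overlap condition \eqref{eq:action-only-overlap} shows that support is contained in the $Q^0$-support, so the same states arise either way.

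\textbf{Step 4: conclude.} Substituting Steps 1--3 into \eqref{eq:action-only-psi-rep} writes $\Psi_t^\epsilon(h)$ as a finite sum
\[
\sum_{c} Q^0_{\nu,t}(c\mid h)\,\Lambda_t^\epsilon(c;h)\,\mathfrak F\bigl(\mathbb P_\nu,\theta_{T+1}^{(w^{(t,\epsilon)})}(h,c)\bigr).
\]
Every term is determined by $\mathbb P_\nu$ together with the known $U_1,\dots,U_T$ and $\pi_\theta$. Hence two environments with the same baseline law have the same $\Psi_t^\epsilon(h)$ for all $|\epsilon|<\epsilon_0$.

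For the influence, $\mathcal I_t^{\mathrm{int}}(F_\nu;h)$ is the derivative at $0$ of the function $\epsilon\mapsto\Psi_t^\epsilon(h)$. That function is identified on the open neighborhood $(-\epsilon_0,\epsilon_0)$. Whenever the derivative exists, it is determined by the function's values on any neighborhood of $0$, so it is identified as well.
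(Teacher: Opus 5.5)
Your proposal is correct and follows the paper's proof essentially step for step. Like the paper, you start from the importance-sampling identity, use the known $U_1,\dots,U_T$ (and $\theta_1$, which the paper uses implicitly) to determine the replayed states and hence $\Lambda_t^\epsilon$ via $\pi_\theta$, use $\mathfrak F$ for the terminal evaluation, and read off the derivative from the identified function on $(-\epsilon_0,\epsilon_0)$.

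One small correction in Step~3. The paper's proof actually uses the $Q^\epsilon$-support reading of $\mathcal R_{\nu,t}^{\epsilon_0}(h)$, and overlap gives only $Q^\epsilon$-support $\subseteq$ $Q^0$-support, so the two readings need not produce "the same states." Your conclusion still stands, because for continuations with $Q^0_{\nu,t}(c\mid h)>0=Q^\epsilon_{\nu,t}(c\mid h)$ the ratio identity forces $\Lambda_t^\epsilon(c;h)=0$, so those terms drop out of your sum.
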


If the action-only assumption is relaxed and the environment uses the learner's state to generate future contexts or rewards in an unknown way, identification fails.

\begin{proposition}[Unknown context-state dependence destroys identification]
\label{prop:context-dependence-obstruction}
Consider the class of finite-horizon environments with factorization
\[
K_{\nu,s}^{\theta}(x,a,r\mid z_{1:s-1})
=
D_s^\theta(x\mid z_{1:s-1})\,
\pi_\theta(a\mid x)\,
P_s(r\mid x,a,z_{1:s-1}),
\]
where \(D_s^\theta\) is otherwise unrestricted. Even when the learner update maps and the action law are known, the conditional interventional target is not identified from the baseline law over this class.
\end{proposition}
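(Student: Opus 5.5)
We will prove non-identification by exhibiting two environments $\nu_\alpha,\nu_\beta$ in the class that induce the same baseline law $\mathbb P_\nu$ on $\mathcal H_T$ but have different values of $\Psi_t^\epsilon(h)$ and of $\mathcal I_t^{\mathrm{int}}(F;h)$. Throughout, the update maps, the action law $\pi_\theta$ and the evaluation functional are held fixed. Since the class leaves $D_s^\theta$ unrestricted, it suffices to work at horizon $T=2$ with $t=1$. The construction mirrors the one behind Theorem~\ref{thm:replay-oracle-insufficiency}, but it has to be placed inside the contextual factorization.

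\textbf{Construction.} Take $\Theta=\mathbb R$ and $\theta_1=0$. Use a single context at round $1$ and a fixed reward there, so that the realized prefix $h=z_1^\star$ has probability one. Let $U_1(\theta,z_1,w)=\theta+w$, so that $\theta_2^{(w^{(1,\epsilon)})}=1+\epsilon$ while the baseline gives $\theta_2=1$. At round $2$ use a binary context $X_2\in\{0,1\}$ with
\[
D_2^{\theta}(X_2=1)=\sigma\bigl(\gamma(\theta-1)\bigr)\quad\text{in environment }\nu_\gamma,\qquad \gamma\in\{\alpha,\beta\}.
\]
Take $\pi_\theta$ to be a point mass on one action and the reward $R_2$ deterministic. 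Let $U_2(\theta,z_2,w)=\theta+w X_2$, and let $F(\vartheta)=\vartheta$, which is the same function for both environments. This choice also makes the hypothesis of Corollary~\ref{cor:action-only-identification} hold trivially.

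\textbf{Same baseline law.} Under the baseline $\theta_2=1$, both environments give $D_2=\tfrac12$. So $\mathbb P_{\nu_\alpha}=\mathbb P_{\nu_\beta}$, and hence every functional of the baseline law agrees across the two environments, including $\mathfrak F(\mathbb P_\nu,\cdot)$. The known objects $U_s$, $\pi_\theta$ and $F$ are identical by construction.

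\textbf{Different targets.} The terminal state is $\theta_3=(1+\epsilon)+X_2$. Therefore
\[
\Psi^\epsilon_{\nu_\gamma,1}(h)=1+\epsilon+\sigma(\gamma\epsilon).
\]
For $\alpha\neq\beta$ and $\epsilon\neq0$ these values differ, because $\sigma$ is strictly increasing. Differentiating at $0$ gives
\[
\mathcal I^{\mathrm{int}}=1+\gamma/4,
\]
which also differs between the two environments. This matches Theorem~\ref{thm:structural-decomposition}: the replay term is $1$ in both environments, and only the future-law correction, driven by $\dot Q$, changes. Since any map from baseline laws to targets would have to assign the same value to both environments, no such map recovers the target, and the target is not identified.

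\textbf{Main obstacle.} The only delicate point is to make sure the construction lies strictly inside the stated class while keeping everything else that is ``known'' fixed. This means keeping $P_s$ independent of $\theta$ and keeping $\pi_\theta$ common to both environments. It also means checking that a fixed $F$ is legitimate; if instead $F_\nu$ must be a functional of $\mathbb P_\nu$, the equality of the baseline laws handles it. A second check is that positivity and overlap hold, since $D_2\in(0,1)$. This shows that the failure does not come from a lack of support, unlike the overlap failure ruled out in Theorem~\ref{thm:action-only-com}.
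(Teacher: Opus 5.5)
Your construction is correct and is essentially the paper's: a horizon-$2$ example in which the round-$2$ context law is a sigmoid in the learner state with slope $\gamma$. It is pinned at $\tfrac12$ under the baseline state, so the baseline law does not depend on $\gamma$, while $\Psi^\epsilon_{\nu_\gamma,1}$ picks up $\sigma(\gamma\epsilon)$. The differences are cosmetic: your baseline state is shifted to $\theta_2=1$ and you keep a nonzero replay term, giving $1+\gamma/4$ instead of $\gamma/4$. You also go directly to the context slot, whereas the paper first does the reward case and then transfers it.
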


When unknown learner-state dependence enters context or reward generation, the baseline law does not carry enough information to predict how the future data stream will shift under a perturbation. The action-only class is the structural boundary between identifiable and non-identifiable cases when attribution is done from logged data alone.

\section{Quantifying the Gap between Replay and Intervention}\label{sec:quantifying-the-gap}

We examine how large the gap can be and when it collapses, using a two-armed Bernoulli bandit trained via online entropic mirror descent as a concrete model.

\subsection{The Directional Failure of Replay}

\begin{theorem}[Strong separation]\label{thm:strong-separation}
There exist reward configurations in a horizon-$2$ bandit such that the conditional expected replay is strictly negative while the conditional interventional influence is strictly positive.
\end{theorem}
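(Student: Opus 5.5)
We will prove the statement by explicit construction. We take $T=2$, $t=1$, the two-armed Bernoulli bandit with the entropic mirror-descent update \eqref{eq:bandit-logit}, and the terminal evaluation $F_\nu(\theta)=\theta\mu_1+(1-\theta)\mu_0$. We then compute both sides of Theorem~\ref{thm:structural-decomposition} in closed form.

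\emph{Choice of configuration.} We set $\mu_0=0$, so $F_\nu(\theta)=\mu_1\theta$. We condition on the prefix $h=z_1^\star=(A_1=1,R_1=1)$, which has positive probability when $\theta_1,\mu_1>0$. Along $h$ we have $\operatorname{logit}\theta_2^{(\epsilon)}=\operatorname{logit}\theta_1+\eta(1+\epsilon)/\theta_1$. Hence $\kappa:=d\theta_2/d\epsilon|_{0}>0$, and everything reduces to derivatives in $\theta:=\theta_2$ multiplied by $\kappa$.

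Since $\mu_0=0$, every round-2 continuation except $(A_2,R_2)=(1,1)$ leaves the state unchanged, so $\theta_3=\theta$ on those. On the continuation $(1,1)$, which has baseline probability $\theta\mu_1$, we have $\theta_3=g(\theta):=\sigma(\operatorname{logit}\theta+\eta/\theta)$.

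\emph{The two quantities.} Conditional expected replay is obtained by differentiating each fixed continuation:
\[
\mathcal R(\theta):=\kappa\mu_1\bigl[(1-\theta\mu_1)+\theta\mu_1 g'(\theta)\bigr].
\]
The interventional influence is $\kappa\,\tfrac{d}{d\theta}\mu_1\bigl[\theta+\theta\mu_1(g(\theta)-\theta)\bigr]$. By Theorem~\ref{thm:structural-decomposition}, or by direct differentiation, this equals
\[
\mathcal R(\theta)+\kappa\mu_1^2\bigl(g(\theta)-\theta\bigr).
\]
The future-law correction $\kappa\mu_1^2(g-\theta)$ is strictly positive, because $g(\theta)>\theta$ whenever $\eta>0$. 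It therefore suffices to find parameters with
\[
-\mu_1(g-\theta)<(1-\theta\mu_1)+\theta\mu_1 g'(\theta)<0 .
\]

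\emph{Making replay negative but not too negative.} We have
\[
g'(\theta)=\sigma'(\cdot)\Bigl(\tfrac{1}{\theta(1-\theta)}-\tfrac{\eta}{\theta^2}\Bigr),
\]
which is negative when $\eta>\theta/(1-\theta)$. This reflects the fact that the importance-weighted step $\eta/\theta$ shrinks as $\theta$ grows. We tune $\eta=\eta(\theta)=\theta\log((1-\theta)/\theta)$, so that the argument of $\sigma$ in $g$ is $0$. Then $g=1/2$, $\sigma'=1/4$, and
\[
\theta\mu_1 g'(\theta)\approx \mu_1\bigl(1-\log(1/\theta)\bigr)/4 .
\]
This tends to $-\infty$ as $\theta\to0$, while it is positive for moderate $\theta$. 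Two limits then hold:
\begin{itemize}
\item for $\theta$ close to $0$, the bracket $(1-\theta\mu_1)+\theta\mu_1 g'$ is very negative;
\item at moderate $\theta$ it is positive.
\end{itemize}
Along this continuous one-parameter family, by the intermediate value theorem there is a $\theta^\ast$ at which the replay bracket equals $0$. Near $\theta^\ast$ the gap term satisfies $\mu_1(g-\theta)\approx\mu_1(1/2-\theta^\ast)>0$, which is bounded away from $0$. Nudging $\theta$ slightly below $\theta^\ast$ then makes replay strictly negative while keeping its magnitude below the gap, so the interventional influence is strictly positive.

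Finally, $\theta^\ast$ is realized as $\theta_2$ by choosing $\theta_1$ and solving the first-round logit equation for it, with $\eta$ fixed to the tuned value.

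\emph{Main obstacle.} The delicate step is the intermediate-value argument. We must verify the sign of the replay bracket at the two ends of the tuned family, and check that the crossing point $\theta^\ast$ has $g-\theta$ bounded below, so that the correction strictly dominates. If the asymptotics are too loose, we will fall back on exhibiting concrete numbers, for example $\mu_1=1$ with a specific $\theta_2$ and $\eta$, and evaluate both expressions explicitly.
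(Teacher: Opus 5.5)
Your computation of the two brackets is correct. Isolating the manifestly positive correction $\kappa\mu_1^2\bigl(g(\theta)-\theta\bigr)$ via Theorem~\ref{thm:structural-decomposition} is also sound. The gap is in the last step, which you did not flag: you use the same $\eta$ in both rounds, and with a common rate the equation $\operatorname{logit}\theta_1+\eta/\theta_1=\operatorname{logit}\theta^\ast$ has no solution. In fact, under a common rate no member of your family works.

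Write $g'(\theta)=\sigma'(\cdot)\,\theta^{-2}\bigl(\theta/(1-\theta)-\eta\bigr)$. Since $1-\theta\mu_1>0$, replay can be negative only if $\eta>\theta_2/(1-\theta_2)$. But after an arm-$1$ success with rate $\eta$, the inequality $e^y\ge ey$ gives $\theta_2/(1-\theta_2)=\frac{\theta_1}{1-\theta_1}e^{\eta/\theta_1}\ge \frac{e\eta}{1-\theta_1}>\eta$. Hence $g'(\theta_2)>0$, and the replay bracket $(1-\theta_2\mu_1)+\theta_2\mu_1 g'(\theta_2)$ is strictly positive. Concretely, for $\mu_1=1$ your crossing point is $\theta^\ast\approx e^{-5}\approx 0.007$ with tuned rate $\eta\approx 0.034$. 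The smallest $\theta_2$ reachable from any $\theta_1$ at that rate is $\sigma(\log\eta+1+\eta)\approx 0.09$.

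The repair is to decouple the two learning rates, which is exactly what the paper's own proof does: it writes the update with $\eta_t$ and uses $\eta_1=(\log 3)/4$, $\eta_2>1$. Keep your tuned value as $\eta_2$. Pick any $\theta_1\in(0,\theta^\ast)$ and set $\eta_1:=\theta_1(\operatorname{logit}\theta^\ast-\operatorname{logit}\theta_1)>0$. Then $\kappa=\eta_1\theta^\ast(1-\theta^\ast)/\theta_1>0$, and the rest of your argument goes through. For example, $\mu_1=1$ and $\theta_2=0.005$ give a replay bracket of about $-0.08$ and an interventional bracket of about $0.42$.

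With this fix your route is genuinely different from the paper's. The paper takes $F(p)=p$, $\mu_1=1$, $\mu_0\to 1$, pins $p_2=1/2$, and uses $\eta_2>1$, so that both one-step maps have slope $-(\eta_2-1)/\cosh^2\eta_2$ at $1/2$. Everything is then in closed form, and the interventional factor is at least $\tanh 1$. Your construction sets $\mu_0=0$, which makes $F$ the genuine expected reward and gives the future-law correction a visible sign. The cost is an intermediate-value argument and an extreme configuration: $\theta_2\approx e^{-5}$, with importance weight $\eta_2/\theta_2\approx 5$.
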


The sign flip arises from the self-correcting nature of adaptive learning. Suppose the learner pulls a sub-optimal arm and receives a zero reward. Replay looks at this event on its fixed log and concludes that downweighting it would have improved the final model---a negative influence. But the zero reward is what drove the learner to shift toward the better arm for the next round. Intervention sees this: the perturbed model, having downweighted that corrective signal, actually explores less efficiently and ends up worse. The two targets therefore land on opposite sides of zero. Replay, which freezes the future, cannot see the self-correction.

\subsection{The Anatomy of the Gap}
The stagewise bounds derived in the appendix (and stated in the controlled-approximation section) show that the gap is driven by three compounding factors: how strongly the perturbation propagates forward through the learner's parameter updates (model propagation), how much the environment's data distribution shifts in response to a changed learner state (environment sensitivity), and how much the terminal value varies across the resulting trajectories (value oscillation). The sign flip occurs when all three are large simultaneously.

\subsection{The Stable Small-Step Regime}
Controlling these three factors forces the gap to collapse.

\begin{theorem}[Replay in a stable regime]\label{thm:bandit-locality}When the learning rate is sufficiently small and the policy is bounded away from zero and one, the gap between conditional interventional influence and conditional expected replay is $\mathcal{O}(\eta^2)$, where $\eta$ is the learning rate.\end{theorem}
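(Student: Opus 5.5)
We apply the centered decomposition of Proposition~\ref{prop:centered-future-law} to the bandit model \eqref{eq:bandit-logit} and bound its right-hand side. The claim is that the gap is a product of two factors, each $\mathcal O(\eta)$. First, $\dot Q_{\nu,t}(c\mid h)$ is $\mathcal O(\eta)$, because the perturbation enters the learner only through $\eta$. Second, the centered value difference $F_\nu(\theta_{T+1}(h,c))-F_\nu(\theta_{t+1}(h))$ is $\mathcal O(\eta)$, because the terminal policy moves only $\mathcal O(\eta)$ per round away from $\theta_{t+1}(h)$. Since $\mathcal C_t$ is finite and $T$ is fixed, summing over $c$ gives an $\mathcal O(\eta^2)$ bound. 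We take $F_\nu(\theta)=\theta\mu_1+(1-\theta)\mu_0$, the expected reward of the terminal policy, which is Lipschitz in $\theta$ with constant $|\mu_1-\mu_0|\le 1$.

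\textbf{Step 1: stability of the recursion.} Assume $\theta_s\in[\delta,1-\delta]$ along all relevant trajectories. The logit increment in \eqref{eq:bandit-logit} is then bounded by $\eta w_s/\delta\le \eta(1+\rho)/\delta$ in absolute value. Since $\sigma$ is $1/4$-Lipschitz, $|\theta_{s+1}-\theta_s|\le C\eta$ for every continuation and every weight vector in the one-coordinate family. Consequently $|\theta_{T+1}(h,c)-\theta_{t+1}(h)|\le C(T-t)\eta$, which gives the $\mathcal O(\eta)$ bound on the centered value difference.

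\textbf{Step 2: sensitivity of the future law.} The model is action-only, so by Theorem~\ref{thm:action-only-com}, $Q^\epsilon_{\nu,t}(c\mid h)=\Lambda^\epsilon_t(c;h)\,Q^0_{\nu,t}(c\mid h)$. Differentiating gives
\[
\dot Q_{\nu,t}(c\mid h)=Q^0_{\nu,t}(c\mid h)\sum_{s=t+1}^T \partial_\epsilon\log\pi_{\theta_s^{(\epsilon)}}(a_s)\big|_{\epsilon=0}.
\]
Bernoulli probabilities are at least $\delta$, so $|\partial_\theta\log\pi_\theta(a)|\le 1/\delta$. It then suffices to show $|\partial_\epsilon\theta_s^{(w^{(t,\epsilon)})}|_{\epsilon=0}|=\mathcal O(\eta)$ for each $s>t$.

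At $s=t+1$, the logit derivative is $\eta R_t(\cdots)$, which is $\mathcal O(\eta)$. For $s>t+1$, differentiate the recursion. The map $\ell\mapsto \ell+\eta R(\mathbf 1\{A=1\}/\sigma(\ell)-\mathbf 1\{A=0\}/(1-\sigma(\ell)))$ has derivative $1+\mathcal O(\eta/\delta^2)$ in $\ell$. The propagated sensitivity is therefore multiplied by at most $(1+C\eta)^{T}$ over the horizon, which stays bounded for fixed $T$. So every $\partial_\epsilon\theta_s$ is $\mathcal O(\eta)$, and hence $\sum_c|\dot Q_{\nu,t}(c\mid h)|=\mathcal O(\eta)$.

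\textbf{Step 3: combine.} Insert both bounds into \eqref{eq:centered-future-law}:
\[
\Bigl|\mathcal I^{\mathrm{int}}_t-\mathbb E[\mathcal I^{\mathrm{rep}}_t\mid h]\Bigr|\le \sum_c|\dot Q_{\nu,t}(c\mid h)|\cdot \max_c|F_\nu(\theta_{T+1}(h,c))-F_\nu(\theta_{t+1}(h))|=\mathcal O(\eta)\cdot\mathcal O(\eta).
\]
The constant depends on $\delta$, $\rho$, and $T$.

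\textbf{Main obstacle.} The delicate part is making the hypothesis ``policy bounded away from zero and one'' hold uniformly along all continuations and for perturbed learners, rather than only along the realized path. I would first state it as an explicit assumption on $\theta_{t+1}(h)$, for instance $\theta_{t+1}(h)\in[2\delta,1-2\delta]$. I would then use Step 1 to show that for $\eta\le\delta/(C T)$ every reachable state stays in $[\delta,1-\delta]$. Differentiability at $0$, needed for Theorem~\ref{thm:structural-decomposition}, is immediate because the recursion is a composition of smooth maps on this region.
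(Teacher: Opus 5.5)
Your proposal is correct and follows essentially the paper's route: the paper applies the score-centered identity \eqref{eq:score-centered}, which is your centered form once the action-only change of measure gives $\dot Q_{\nu,t}(c\mid h)=Q^0_{\nu,t}(c\mid h)\,S_{\mu,t}(h,c)$, and then multiplies an $\mathcal O(\eta)$ bound on the score (propagated sensitivity times $|\partial_\theta\log\pi_\theta(a)|\le 1/\delta$) by an $\mathcal O(\eta)$ bound on the terminal-policy drift. The differences are minor: the paper notes that the logit-scale propagation factor $1-\eta_s(1-p_s)/p_s$ lies in $[0,1]$ once $\eta_s\le\delta/(1-\delta)$, so no $(1+C\eta)^T$ factor appears, and it assumes the policy is bounded away from the edges at every round, including $s=t$ --- your reformulated hypothesis on $\theta_{t+1}(h)$ alone would not control the initial shock $\eta/\theta_t$, so also keep $\theta_t(h)\in[\delta,1-\delta]$ as an assumption.
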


A small learning rate limits how far the perturbation can propagate and how much the environment can react. In this regime the sign-flip scenario disappears and replay is an accurate first-order approximation of the interventional target.

\section{Controlled local approximations}
\label{sec:controlled-approx}

The negative results above show that the global interventional target is not, in general, recoverable from replay-side information alone. At the same time, much of the recent empirical literature does not try to reconstruct the full adaptive future. Instead it works with recent buffers, checkpoint-local targets, replay-side windows, or off-policy surrogates. The right foundational response is therefore not to dismiss locality as heuristic, but to place it on the same counterfactual map and ask what exactly is being approximated.

We begin with the most basic controlled approximation: truncate the \emph{recollection depth}. The idea is to let the perturbation change the future data law for only the next \(L\) rounds after the realized occurrence and then freeze the interaction law at its baseline form. Importantly, this truncates only the \emph{future-law channel}. The perturbed learner still continues to update on the later sampled log all the way to time \(T\). Thus the approximation is local in \emph{recollection depth}, not local in learning dynamics.

Fix \(t\in\{1,\dots,T\}\), a realized prefix \(h=z_{1:t}\in\mathcal H_t\) with \(\mathbb P_\nu(h)>0\), and an integer \(L\in\{0,\dots,T-t\}\). For a continuation \(c=z_{t+1:T}\in\mathcal C_t\), define the depth-\(L\) mixed future law by
\begin{equation}
Q_{\nu,t}^{\epsilon,\mathrm{tr},L}(c\mid h)
:=
\prod_{s=t+1}^{t+L}
K_{\nu,s}^{\theta_s^{(w^{(t,\epsilon)})}(h,c_{t+1:s-1})}
(z_s\mid h,c_{t+1:s-1})
\;
\prod_{s=t+L+1}^{T}
K_{\nu,s}^{\theta_s(h,c_{t+1:s-1})}
(z_s\mid h,c_{t+1:s-1}).
\label{eq:depth-L-mixed-law}
\end{equation}
The first product is empty when \(L=0\), and the second product is empty when \(L=T-t\).

The corresponding depth-\(L\) finite target is
\begin{equation}
\Psi_t^{\epsilon,\mathrm{tr},L}(h)
:=
\sum_{c\in\mathcal C_t}
Q_{\nu,t}^{\epsilon,\mathrm{tr},L}(c\mid h)\,
F_\nu\!\left(\theta_{T+1}^{(w^{(t,\epsilon)})}(h,c)\right),
\label{eq:depth-L-target}
\end{equation}
with finite effect
\[
\Delta_{t,\epsilon}^{\mathrm{tr},L}(F_\nu;h)
:=
\Psi_t^{\epsilon,\mathrm{tr},L}(h)-\Psi_t^0(h),
\]
and infinitesimal form
\begin{equation}
\mathcal I_t^{\mathrm{tr},L}(F_\nu;h)
:=
\left.
\frac{d}{d\epsilon}
\Psi_t^{\epsilon,\mathrm{tr},L}(h)
\right|_{\epsilon=0}.
\label{eq:depth-L-influence}
\end{equation}

This family interpolates exactly between replay and full recollection. When \(L=0\), the future law is baseline all the way to time \(T\), so \(\Psi_t^{\epsilon,\mathrm{tr},0}\) is the conditional expected replay target. When \(L=T-t\), the mixed law is the fully perturbed future law, so \(\Psi_t^{\epsilon,\mathrm{tr},T-t}=\Psi_t^\epsilon\).

To state the next theorem compactly, let \(\Gamma_s\) and \(V_s\) denote the forward state-sensitivity and baseline continuation-value objects from the exact model-based recursion, and define the stagewise contribution of round \(s\) to the replay--intervention gap by
\begin{equation}
\Xi_s(g)
:=
\sum_{z_s\in\mathcal Z_s}
\Bigl(
\nabla_\theta K_{\nu,s}^{\theta}(z_s\mid g)\big|_{\theta=\theta_s(g)}^\top
\Gamma_s(g)
\Bigr)
\bigl(
V_{s+1}(g,z_s)-V_s(g)
\bigr).
\label{eq:stagewise-xi-main}
\end{equation}

\begin{theorem}[Depth-\(L\) recollection identity]
\label{thm:depth-L-recollection}
For every \(L\in\{0,\dots,T-t\}\),
\begin{align}
\mathcal I_t^{\mathrm{tr},L}(F_\nu;h)
&=
\mathbb E_{\mathbb P_\nu}
\!\left[
\mathcal I_t^{\mathrm{rep}}(F_\nu;Z_{1:T})
\mid Z_{1:t}=h
\right]
+
\sum_{s=t+1}^{t+L}
\mathbb E_{\mathbb P_\nu}
\!\left[
\Xi_s(Z_{1:s-1})
\mid Z_{1:t}=h
\right],
\label{eq:depth-L-identity}
\\
\mathcal I_t^{\mathrm{int}}(F_\nu;h)
-
\mathcal I_t^{\mathrm{tr},L}(F_\nu;h)
&=
\sum_{s=t+L+1}^{T}
\mathbb E_{\mathbb P_\nu}
\!\left[
\Xi_s(Z_{1:s-1})
\mid Z_{1:t}=h
\right].
\label{eq:depth-L-tail}
\end{align}
In particular, \(L=0\) recovers conditional expected replay, and \(L=T-t\) recovers full conditional interventional influence.
\end{theorem}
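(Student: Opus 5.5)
The plan is to differentiate the depth-\(L\) target \eqref{eq:depth-L-target} by the product rule, exactly as in the proof of Theorem~\ref{thm:structural-decomposition}, and then rewrite the resulting future-law term stage by stage. The objects \(\Gamma_s\) and \(V_s\) have to be made explicit first; we take \(\Gamma_s(g) := \frac{d}{d\epsilon}\theta_s^{(w^{(t,\epsilon)})}(g)\big|_{\epsilon=0}\), the forward sensitivity of the replayed state along a baseline prefix \(g=z_{1:s-1}\), and \(V_s(g) := \mathbb E_{\mathbb P_\nu}[F_\nu(\theta_{T+1}(Z_{1:T}))\mid Z_{1:s-1}=g]\), the baseline continuation value. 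We assume enough smoothness (differentiability of \(\theta\mapsto K^\theta_{\nu,s}\), of the maps \(U_s\) in their state and weight arguments, and of \(F_\nu\)) so that every finite sum can be differentiated term by term.

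Since \(Q^{0,\mathrm{tr},L}_{\nu,t}=Q^0_{\nu,t}\), the product rule gives
\[
\mathcal I_t^{\mathrm{tr},L}(F_\nu;h)=\sum_{c}Q^0_{\nu,t}(c\mid h)\,\frac{d}{d\epsilon}F_\nu\bigl(\theta_{T+1}^{(w^{(t,\epsilon)})}(h,c)\bigr)\Big|_{0}+\sum_c \dot Q^{\mathrm{tr},L}(c\mid h)\,F_\nu(\theta_{T+1}(h,c)).
\]
The first sum is conditional expected replay. For the second, write \(Q^{\epsilon,\mathrm{tr},L}\) as a product of \(T-t\) kernels and differentiate. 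Only the factors \(s\le t+L\) depend on \(\epsilon\), because the later factors use baseline states. By the chain rule, factor \(s\) contributes \(\nabla_\theta K^\theta_{\nu,s}(z_s\mid g)^\top\Gamma_s(g)\) times the product of the remaining baseline kernels. This gives
\[
\sum_{s=t+1}^{t+L}\sum_{c}\Bigl(\prod_{r\ne s}K_{\nu,r}^{\theta_r}\Bigr)\bigl(\nabla_\theta K^{\theta}_{\nu,s}(z_s\mid g)^\top\Gamma_s(g)\bigr)F_\nu(\theta_{T+1}(h,c)).
\]

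Next, for each fixed \(s\), sum out \(z_{s+1:T}\) under the baseline kernels; this replaces \(F_\nu\) by \(V_{s+1}(g,z_s)\). Then use \(\sum_{z_s}\nabla_\theta K^\theta_{\nu,s}(z_s\mid g)=0\), which holds because \(K\) is a probability distribution, to subtract the constant \(V_s(g)\), in the spirit of Proposition~\ref{prop:centered-future-law}. The remaining sum over \(z_{t+1:s-1}\) under the baseline kernels is the conditional expectation given \(Z_{1:t}=h\). This yields \eqref{eq:depth-L-identity} with \(\Xi_s\) as in \eqref{eq:stagewise-xi-main}.

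For \(L=T-t\) the mixed law equals \(Q^\epsilon_{\nu,t}\), so the same identity gives \(\mathcal I^{\mathrm{int}}_t\). Subtracting the depth-\(L\) identity from this one gives the tail formula \eqref{eq:depth-L-tail}. For \(L=0\) the sum is empty, which recovers replay.

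The main obstacle is bookkeeping rather than ideas. The derivative of the product of \(\epsilon\)-dependent kernels must split into the stagewise terms, and the \(\epsilon\)-dependence of state \(s\) must enter only through \(\Gamma_s\), since the states along \((h,c_{t+1:s-1})\) depend on \(\epsilon\) through replay on the fixed prefix. We must also check that \(\Gamma_s\) is evaluated on baseline prefixes and that marginalizing the later baseline kernels legitimately produces \(V_{s+1}\). If \(V_s\) or \(\Gamma_s\) were defined in the model-based recursion with a different convention (for example, as the sensitivity of \(\theta_s\) itself), the identity would only need to be matched to that definition.
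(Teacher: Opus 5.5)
Your proposal is correct, but it is organized differently from the paper's proof. You differentiate the mixed path law \eqref{eq:depth-L-mixed-law} directly, one kernel factor at a time. You then sum out the later baseline kernels to get $V_{s+1}(g,z_s)$, center with $V_s(g)$, and read the remaining baseline-weighted sum over $z_{t+1:s-1}$ as a conditional expectation given $Z_{1:t}=h$.

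The paper instead writes $\Psi_t^{\epsilon,\mathrm{tr},L}(h)$ as the output of a backward recursion $W_s^{\epsilon,L}$, which uses perturbed kernels for $s\le t+L$ and baseline kernels afterwards. It differentiates that recursion to get $H_s^L$ and shows $H_s^L$ coincides with the replay recursion $M_s$ after the switch. It then unrolls a gap-to-go recursion $E_s^L=H_s^L-M_s$, which picks up exactly $\Xi_s$ at each pre-switch stage. For \eqref{eq:depth-L-tail} it imports the full-horizon stagewise decomposition (Theorem~\ref{thm:stagewise-gap}), which rests on the model-based recursion of Theorem~\ref{thm:model-based-exact}.

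Your route is shorter and self-contained. Its $L=T-t$ case is itself the stagewise decomposition, so you get the tail formula without the model-based recursion. It also makes transparent that $\Xi_s$ is simply the contribution of differentiating the $s$-th kernel factor. The paper's route buys a dynamic-programming form instead: it delivers the conditional gap-to-go at every intermediate prefix as a by-product, and it matches a backward-induction scheme that evaluates the target without enumerating continuations.

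One small point to tighten: define $V_{s+1}$ through the baseline kernel recursion rather than as a conditional expectation. In your expansion the term indexed by $z_s$ carries the coefficient $\nabla_\theta K_{\nu,s}^{\theta}(z_s\mid g)^\top\Gamma_s(g)$, not $K_{\nu,s}^{\theta_s(g)}(z_s\mid g)$. So it can involve prefixes $(g,z_s)$ of zero baseline probability, where a conditional expectation is undefined. Summing out the later kernels produces exactly the recursive quantity, which agrees with the conditional expectation wherever the latter is defined. Likewise, your $\Gamma_s$, defined as an $\epsilon$-derivative, agrees with the paper's recursively defined $\Gamma_s$ by the chain rule, so the convention mismatch you worried about does not arise.
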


The theorem gives a precise meaning to local attribution windows. A small value of \(L\) does not merely say ``look nearby.'' It says: allow the adaptive future to respond to the perturbation for \(L\) further rounds, then freeze the future data law and continue only the replayed learning dynamics. The bias of this approximation is exactly the omitted tail in \eqref{eq:depth-L-tail}.

To convert that identity into an approximation prescription, recall the total-variation sensitivity and value oscillation quantities
\[
L_s^{\mathrm{TV}}(g)
:=
\sup_{\|u\|=1}
\frac12
\sum_{z_s\in\mathcal Z_s}
\left|
\nabla_\theta K_{\nu,s}^{\theta}(z_s\mid g)\big|_{\theta=\theta_s(g)}^\top u
\right|,
\qquad
\operatorname{osc}_s(g)
:=
\max_{z_s}V_{s+1}(g,z_s)-\min_{z_s}V_{s+1}(g,z_s).
\]

\begin{corollary}[Adaptive-horizon truncation]
\label{cor:adaptive-horizon-truncation}
For every \(L\in\{0,\dots,T-t\}\),
\begin{equation}
\left|
\mathcal I_t^{\mathrm{int}}(F_\nu;h)
-
\mathcal I_t^{\mathrm{tr},L}(F_\nu;h)
\right|
\le
\sum_{s=t+L+1}^{T}
\mathbb E_{\mathbb P_\nu}
\!\left[
L_s^{\mathrm{TV}}(Z_{1:s-1})\,
\|\Gamma_s(Z_{1:s-1})\|\,
\operatorname{osc}_s(Z_{1:s-1})
\mid Z_{1:t}=h
\right].
\label{eq:adaptive-horizon-bound}
\end{equation}
Under the deterministic bounds
\[
\|\partial_w U_t\|\le \bar B_t,
\qquad
\|\partial_\theta U_u\|_{\mathrm{op}}\le \bar\rho_u,
\qquad
L_s^{\mathrm{TV}}\le \bar L_s,
\qquad
\operatorname{osc}_s\le \bar\Delta_s,
\]
this simplifies to
\begin{equation}
\left|
\mathcal I_t^{\mathrm{int}}(F_\nu;h)
-
\mathcal I_t^{\mathrm{tr},L}(F_\nu;h)
\right|
\le
\bar B_t
\sum_{s=t+L+1}^{T}
\bar L_s\,\bar\Delta_s\,
\prod_{u=t+1}^{s-1}\bar\rho_u.
\label{eq:adaptive-horizon-uniform-bound}
\end{equation}
Consequently, for any tolerance \(\tau>0\), any choice of \(L\) satisfying
\[
\bar B_t
\sum_{s=t+L+1}^{T}
\bar L_s\,\bar\Delta_s\,
\prod_{u=t+1}^{s-1}\bar\rho_u
\le
\tau
\]
guarantees a depth-\(L\) approximation error at most \(\tau\).
\end{corollary}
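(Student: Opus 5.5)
We start from the tail identity \eqref{eq:depth-L-tail} of Theorem~\ref{thm:depth-L-recollection}. It writes the truncation error exactly as
\[
\sum_{s=t+L+1}^{T}\mathbb E_{\mathbb P_\nu}\!\left[\Xi_s(Z_{1:s-1})\mid Z_{1:t}=h\right].
\]
The triangle inequality together with Jensen's inequality ($|\mathbb E[X]|\le\mathbb E|X|$) reduces the first bound \eqref{eq:adaptive-horizon-bound} to a pointwise estimate. For each history $g=z_{1:s-1}$ we need
\[
|\Xi_s(g)|\le L_s^{\mathrm{TV}}(g)\,\|\Gamma_s(g)\|\,\operatorname{osc}_s(g).
\]

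For the pointwise estimate, set $u=\Gamma_s(g)/\|\Gamma_s(g)\|$; if $\Gamma_s(g)=0$ the claim is trivial. Then write $d(z_s):=\nabla_\theta K^\theta_{\nu,s}(z_s\mid g)^\top\Gamma_s(g)$. Two facts drive the estimate.

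\emph{First}, $\sum_{z_s}d(z_s)=0$, because $\sum_{z_s}K^\theta_{\nu,s}(z_s\mid g)=1$ for all $\theta$ and $\mathcal Z_s$ is finite, so the gradient can be exchanged with the sum. This is the same centering argument as in Proposition~\ref{prop:centered-future-law}. It lets us replace $V_{s+1}(g,z_s)-V_s(g)$ by $V_{s+1}(g,z_s)-m$ for the midpoint
\[
m:=\tfrac12\Bigl(\max_{z_s}V_{s+1}(g,z_s)+\min_{z_s}V_{s+1}(g,z_s)\Bigr).
\]

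\emph{Second}, every summand then satisfies $|V_{s+1}(g,z_s)-m|\le\tfrac12\operatorname{osc}_s(g)$. Hence
\[
|\Xi_s(g)|\le\tfrac12\operatorname{osc}_s(g)\sum_{z_s}|d(z_s)|\le\operatorname{osc}_s(g)\,\|\Gamma_s(g)\|\,L_s^{\mathrm{TV}}(g),
\]
using the definition of $L_s^{\mathrm{TV}}$ with the unit vector $u$. The factor $\tfrac12$ in $L^{\mathrm{TV}}_s$ pairs exactly with the midpoint centering.

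For the uniform bound \eqref{eq:adaptive-horizon-uniform-bound}, it remains to bound $\|\Gamma_s(g)\|$ deterministically. From the model-based recursion, $\Gamma_s$ is the forward state sensitivity $\partial_\epsilon\theta_s^{(w^{(t,\epsilon)})}(g)\big|_{\epsilon=0}$. It is initialized at round $t+1$ by $\Gamma_{t+1}=\partial_w U_t(\theta_t,z_t,1)$ and propagated by the chain rule as $\Gamma_{u+1}=\partial_\theta U_u(\theta_u,z_u,1)\,\Gamma_u$ for $u=t+1,\dots,s-1$. Submultiplicativity of the operator norm then gives
\[
\|\Gamma_s\|\le\bar B_t\prod_{u=t+1}^{s-1}\bar\rho_u .
\]
Substituting this together with $L_s^{\mathrm{TV}}\le\bar L_s$ and $\operatorname{osc}_s\le\bar\Delta_s$, the conditional expectations become constants and the bound follows. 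The tolerance statement is then immediate: choose $L$ so that the right-hand side is at most $\tau$.

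The main obstacle is bookkeeping rather than analysis. We must confirm that $\Gamma_s$ as defined in the model-based recursion is exactly this chain-rule product, with its indexing starting at $t+1$ and the perturbation entering only through $\partial_w U_t$. We must also confirm that differentiability of $K^\theta_{\nu,s}$ in $\theta$ is available, so that the centering identity $\sum_{z_s}d(z_s)=0$ holds. I would state these as the standing smoothness assumptions inherited from Theorem~\ref{thm:depth-L-recollection}.
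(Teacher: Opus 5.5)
Your proposal is correct and follows the paper's own proof essentially step for step. Both use the tail identity with the triangle inequality, then a pointwise bound on $|\Xi_s(g)|$ obtained by midpoint centering (justified because the kernel gradients sum to zero) and the unit vector $\Gamma_s(g)/\|\Gamma_s(g)\|$ in the definition of $L_s^{\mathrm{TV}}$, and finally iteration of the forward-sensitivity recursion to get $\|\Gamma_s\|\le\bar B_t\prod_{u=t+1}^{s-1}\bar\rho_u$.
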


Corollary~\ref{cor:adaptive-horizon-truncation} gives a clean prescription for choosing how local an approximation may be. One first upper-bounds the omitted tail and then selects the smallest horizon \(L\) whose tail falls below a desired tolerance. This is the precise sense in which recent-buffer or checkpoint-local attribution can be principled: they are not arbitrary windows, but approximations to a depth-\(L\) recollection target whose error is controlled by propagation, environment sensitivity, and downstream value oscillation.

This also clarifies how several nearby empirical directions fit into the present theory. Snapshot-style recent-buffer methods are naturally interpreted as scalable proxies for checkpoint-local variants of \(\mathcal I_t^{\mathrm{tr},L}\), obtained by replacing \(F_\nu\) with a checkpoint functional and taking small \(L\) \cite{hu2025snapshot}. Replay-buffer and replay-LOO methods instead stay on the frozen-future side after the switch, even when they become non-local in time \cite{hiraoka2024which,hu2026nonlocal}. Off-policy influence methods for RLVR live on a different approximation axis: they replace online recollection with an off-policy surrogate \cite{zhu2025cropi}. Sample-level post-training influence methods such as RFT-Inf change the attribution unit and target functional rather than only the horizon length \cite{tan2025rftinf}.

The key point is not that every practical method is a literal special case of the global interventional target. Once the future-semantics axis is explicit, one can distinguish when a method targets the same object, when it targets a controlled local truncation, and when it solves a different surrogate problem.

\section{Conclusion}

Data attribution in adaptive learning requires distinguishing two counterfactuals that coincide in static learning but diverge once the learner shapes its own future data. The conditional interventional target defined here captures this distinction: it conditions on what has already occurred and allows the perturbed learner to recollect the future. The main results show that replay-side information cannot recover this target in general, even with full knowledge of the baseline future law and all fixed-log responses, while the action-only class marks the boundary where identification from logged data becomes possible. Whether one targets the full recollection or a depth-$L$ truncation, the choice of future semantics determines what attribution question is actually being asked.

\bibliographystyle{plain}
\bibliography{refs}


\appendix

\section{Additional discussion of related work}

This appendix expands on the literature discussion from the main text, focusing on where the boundaries of the present contribution fall relative to nearby work that is related in spirit but answers different questions.

\subsection{Static data attribution and deletion-by-reweighting}

The classical starting point is influence-function based data attribution~\cite{koh2017understanding}. In that framework, one studies the effect of upweighting a training point by an infinitesimal amount and then relates that differential quantity to deletion by finite reweighting. TracIn, SGD-based tracing methods, and semivalue-based approaches such as Data Shapley all fit within the broader project of attributing model behavior to training data in fixed-dataset learning~\cite{pruthi2020estimating,ghorbani2019data}. The central simplification is that the learner changes the model but does not change which training examples would exist later. This is why replay and recollection collapse in the static setting.

The present paper keeps the deletion-by-reweighting convention. A perturbation with \(\epsilon=-1\) suppresses the direct update contribution of the realized occurrence while leaving the realized prefix event itself intact. This is the standard convention in influence-function style attribution, and it is the natural one for occurrence-level attribution in adaptive learning. What changes in the adaptive setting is not the meaning of deletion-by-reweighting. What changes is that after the perturbation the future data law may also change.

\subsection{The estimand question in fixed-data attribution}

A major methodological lesson from recent attribution work is that one should separate the target from the approximation used to estimate it. Bae et al.\ show that practical influence-function calculations can track a quantity different from literal leave-one-out retraining, namely a proximal response functional that remains meaningful even when exact leave-one-out agreement fails~\cite{bae2022if}. Schioppa et al.\ analyze the assumptions behind the standard leave-one-out reading of influence functions and explain why those assumptions break down in nonlinear, non-convex training~\cite{schioppa2023theoretical}. Distributional training data attribution moves in a related direction by treating training randomness itself as part of the attribution object rather than as a nuisance and by asking how datasets affect the distribution of outputs over training runs~\cite{mlodozeniec2025distributional}.

The present paper adopts the same estimand-first discipline, but the source of extra counterfactual variation is different. Here the issue is not only training randomness over a fixed data source. The issue is that under an occurrence-level perturbation, the learner can change the future data stream itself. This makes the future-law term part of the target rather than a nuisance around a fixed-log computation.

\subsection{Trajectory-specific and run-specific attribution under exogenous data}

Another important line of work studies attribution along a realized optimization trajectory. Hara et al.\ introduced SGD-based occurrence-specific influence, where one asks about removing a point from a specific SGD step rather than from the dataset abstractly~\cite{hara2019sgd}. Wang et al.\ later formalized trajectory-specific leave-one-out influence and emphasized that the same example can have different effects when it appears at different points of training~\cite{wang2025capturing}. In-Run Data Shapley studies run-specific attribution for a target model produced by a single training run, rather than averaging over all runs that the learning algorithm might have produced~\cite{wang2024one-run-shapley}.

These papers are close to the present one in two respects. First, they reject the view that data attribution must be permutation-invariant over training examples. Second, they show that a realized occurrence can be the right unit of analysis. The difference is that the data source remains exogenous. Removing or downweighting a realized occurrence changes the optimization trajectory, but it does not change the distribution of future data. In the language of the present paper, these works study refined forms of replay-side attribution. The present paper asks what happens when the future data source is itself endogenous to the learner.

\subsection{Sequential attribution in reinforcement learning and post-training}

A growing literature studies attribution in RL and other sequential learning systems, but the targets vary considerably.

Gottesman et al.\ analyze influential transitions for off-policy evaluation~\cite{gottesman2020interpretable}. Their target is the sensitivity of an evaluation estimator to transitions in an observational dataset. That is a useful object, especially in high-stakes domains, but it is not the same as asking how a realized training occurrence changes the learned policy together with the future data the learner would collect after that perturbation.

Hiraoka et al.\ study influential experiences for replay-based RL agents~\cite{hiraoka2024which}. This work is also closely related in application domain, but experience replay changes the counterfactual structure. The object of interest is typically a transition already stored in the replay buffer, and the emphasis is on how stored experiences influence the agent through replayed optimization. The present paper instead studies an on-policy occurrence-level perturbation and the way it alters future data collection.

Hu et al.\ are the closest nearby work in terms of motivation~\cite{hu2025snapshot}. They begin from the observation that in online RL each experience both updates the policy and shapes future data collection, which is exactly the phenomenon emphasized here. Their framework, however, is deliberately local. It interprets recent training records relative to a nearby checkpoint and a recent buffer, especially in PPO-style training. The present paper studies a different target: a global occurrence-level counterfactual that conditions on the realized prefix through the perturbed round and then recollects the full remaining future. The contrast is not between right and wrong targets. It is between a local target designed for nearby interpretability and a global target designed to answer the occurrence-level counterfactual question itself.

The same distinction matters in newer post-training settings for language models. Several recent works study data influence in reinforcement fine-tuning or RL-based post-training using local, off-policy, or estimator-specific surrogates. Those developments are important, but they do not by themselves settle the question of what the global attribution target should be once future rollouts depend on the perturbed learner. The present paper is aimed at that earlier step.

\subsection{Sequential causal inference}

The paper also draws a clear line to the causal inference literature on sequential interventions. Longitudinal causal inference has long studied interventions that modify later treatment mechanisms while conditioning on evolving histories, together with identification by g-computation, weighting, and doubly robust methods. The literature on longitudinal modified treatment policies is especially relevant because it treats stochastic, history-dependent interventions that change an observed treatment mechanism without forcing treatment to a fixed value~\cite{diaz2021lmtp}.

We do not claim novelty at that level. The present paper does not propose a new foundation for sequential causal inference. What it contributes is an attribution-theoretic use of that perspective. In adaptive learning, the training history is both a learning trace and a data-collection trace. Once one asks for occurrence-level attribution, sequential intervention ideas become part of the right formal language. The paper then asks three questions that are specific to attribution in adaptive learning: which conditioning level identifies a realized occurrence while still leaving the future free to change, which replay-side objects fail to determine the interventional target, and which structural restrictions recover identification.

\subsection{The contribution in context}

The paper makes three technical claims. First, when future training data are endogenous, replay and recollection are genuinely different counterfactuals, and conditional intervention is the natural global occurrence-level object. Second, the interventional target is not identified by replay-side information alone, even when that information includes the baseline future law and the complete family of fixed-log replay responses for every continuation. Third, in the action-only class---where the learner affects future data only through a known action law---the perturbed future law has an exact change-of-measure form and the target is identified under overlap; unknown learner-state dependence in contexts or rewards breaks identification.

The fixed-data attribution literature has shown that attribution can depend on the realized trajectory, the specific training run, and the exact occurrence at which a point appears. The adaptive-learning literature has recognized that online RL creates a tension between attribution and endogenous future data, but has mostly worked with local or estimator-specific targets. The causal literature on sequential interventions supplies the formal language needed to state the global occurrence-level question cleanly. The present paper brings these threads together: it formulates the target, establishes the replay-side insufficiency, and identifies the structural class where logged-data attribution is possible.

\section{Extended positioning of nearby work under the taxonomy}
\label{app:extended-taxonomy}

This appendix expands on Table~\ref{tab:adaptive-attribution-taxonomy}. The goal is to state carefully which nearby methods are direct special cases, which ones are controlled surrogates, and which ones answer different questions.

\paragraph{Trajectory-specific attribution on exogenous training streams.}
The cleanest direct bridge is to trajectory-specific attribution in ordinary training runs with exogenous data. Hara et al.\ study occurrence-specific influence along SGD trajectories, and Wang et al.\ formalize trajectory-specific leave-one-out influence for removing a data point from a specific iteration of training \cite{hara2019sgd,wang2025capturing}. In our language, these works agree with the present paper on the importance of the realized occurrence, but they lie in the exogenous limit where the perturbation does not change the law of future data. The future-law term therefore vanishes, and replay-side attribution becomes exact.

\paragraph{Replay-buffer experience influence.}
Hiraoka et al.\ study influential experiences stored in a replay buffer and estimate their leave-one-out effects on later RL training \cite{hiraoka2024which}. This is close in application domain but not literally the same process as the present paper. If the replay buffer is treated as fixed, then the counterfactual stays on the replay side. To subsume replay-buffer learning exactly, one would need to enlarge the learner state so that it includes both the policy parameters and the buffer contents, together with two event types: environment-interaction events and replay-update events.

\paragraph{Local and truncated online RL attribution.}
Snapshot studies online RL directly but adopts a deliberately local target, interpreting checkpoints with respect to records in the recent training buffer \cite{hu2025snapshot}. I-PPO similarly uses attribution or filtering at the level of rollout-buffer episodes in PPO-style post-training \cite{liu2026ippo}. These works are best understood as moving along the future-semantics axis from full recollection toward truncated local windows, often together with a checkpoint-local target functional. Section~\ref{sec:controlled-approx} provides the natural formal bridge: a checkpoint-local or short-horizon version of the depth-\(L\) recollection target.

\paragraph{Replay-based non-local on-policy attribution.}
Recent workshop work on non-local attribution for on-policy RL extends the time range of attribution beyond a single recent rollout, but does so using a replay-based leave-one-out objective under fixed rollout buffers \cite{hu2026nonlocal}. This is important to distinguish from the present target. It is non-local in time, but it remains on the replay side of the future-semantics axis. In the language of the present paper, it enlarges the replay window without recollecting the future under the perturbed learner.

\paragraph{Estimator sensitivity and off-policy surrogates.}
Gottesman et al.\ study influential transitions for off-policy evaluation and compute exact influence functions for fitted Q-evaluation and importance-sampling variants \cite{gottesman2020interpretable}. This changes the target functional entirely: the object is the OPE estimator rather than the final learned system after a perturbed occurrence.

CROPI is different again \cite{zhu2025cropi}. It studies RLVR and uses offline trajectories to estimate data influence without fresh online rollouts. In the present taxonomy this is best classified as an \emph{off-policy surrogate} rather than as the same counterfactual object. Conceptually it sits near the positive frontier identified by the action-only class, but theorem-level justification of such off-policy estimators belongs to a second-stage estimation paper rather than to the present foundations paper.

\paragraph{RLVR and RFT sample influence.}
RFT-Inf is especially relevant for post-training \cite{tan2025rftinf}. It defines influence at the level of a training sample, and measures how removing that sample changes the final training reward. This is a genuinely global influence target, but it changes the attribution unit from realized occurrence to dataset sample identity or sample presentation. The exact bridge to the present framework is to decide whether repeated presentations of the same sample are treated as distinct occurrences or collapsed into a single sample-level object. The present paper takes the former route because adaptive learning makes the exact occurrence time and learner state part of the causal object.

\paragraph{Offline world-model valuation and semivalue-style methods.}
Action Shapley studies data valuation for training a world model in RL \cite{zhang2026actionshapley}. This is not occurrence-level adaptive attribution of an online learner changing its own future data. It is an offline data-valuation problem for an exogenous training set. This matters for how one states any semivalue obstruction in the present setting: the obstruction is about realized adaptive occurrences, not about all RL-flavored Shapley methods in general.

\paragraph{Conceptual ancestors and methodological support.}
Several papers are important not because they are literal special cases, but because they support the estimand-first stance of the present work. Bae et al.\ and Schioppa et al.\ argue, in different ways, that one must first decide which counterfactual object an influence method is approximating \cite{bae2022if,schioppa2023theoretical}. Distributional training-data attribution asks how datasets change the distribution of outcomes across training runs, rather than only one endpoint \cite{mlodozeniec2025distributional}. Bottou et al.\ articulate the broader counterfactual-learning-systems perspective in which interventions can change later observations \cite{bottou2013counterfactual}. Longitudinal modified treatment policy work supplies the causal language for stochastic, history-dependent interventions on evolving processes \cite{diaz2021lmtp}. These works are philosophical and methodological supports, not theorem-level reductions of the present paper.

\paragraph{What the taxonomy contributes.}
The value of the taxonomy is not that it forces every nearby method into one theorem. Its value is that it prevents category mistakes. Once the attribution unit, the future semantics, and the target functional are written down separately, it becomes possible to ask whether a nearby method is a direct special case of the global occurrence-level target, whether it is better understood as a controlled truncation or replay-side surrogate, and whether it changes the attribution unit or target functional and therefore answers a different question. The present paper is strongest on the full-recollection corner and on the frontier between identifiable, non-identifiable, replay-side, and truncated targets.

\section{Proofs for Section~\ref{sec:structural-gap}}

\subsection{Proof of Theorem~\ref{thm:structural-decomposition}}

\begin{proof}
Fix \(t\in\{1,\dots,T\}\), fix a realized prefix \(h=z_{1:t}\in\mathcal H_t\) with \(\mathbb P_\nu(h)>0\), and write \(\mathcal C_t=\mathcal Z_{t+1}\times\cdots\times\mathcal Z_T\).

For each continuation \(c\in\mathcal C_t\), define
\[
Q_\epsilon(c):=Q_{\nu,t}^{\epsilon}(c\mid h),
\qquad
G_\epsilon(c):=
F_\nu\!\left(\theta_{T+1}^{(w^{(t,\epsilon)})}(h,c)\right).
\]
Then by the definition of the expected terminal target,
$$
\Psi_t^\epsilon(h)=\sum_{c\in\mathcal C_t} Q_\epsilon(c) G_\epsilon(c).
$$

Therefore, by the definition of conditional interventional influence,
$$
\mathcal I_t^{\mathrm{int}}(F_\nu;h)
=
\left.
\frac{d}{d\epsilon}
\sum_{c\in\mathcal C_t} Q_\epsilon(c)G_\epsilon(c)
\right|_{\epsilon=0}.
$$

The continuation space \(\mathcal C_t\) is finite, so we may differentiate term by term:
\[
\mathcal I_t^{\mathrm{int}}(F_\nu;h)
=
\sum_{c\in\mathcal C_t}
\left.
\frac{d}{d\epsilon}
\bigl[Q_\epsilon(c)G_\epsilon(c)\bigr]
\right|_{\epsilon=0}.
\]
Applying the ordinary product rule to each summand,
\[
\mathcal I_t^{\mathrm{int}}(F_\nu;h)
=
\sum_{c\in\mathcal C_t}
\left.
\frac{d}{d\epsilon}Q_\epsilon(c)
\right|_{\epsilon=0}
G_0(c)
+
\sum_{c\in\mathcal C_t}
Q_0(c)
\left.
\frac{d}{d\epsilon}G_\epsilon(c)
\right|_{\epsilon=0}.
\]

We now identify the two sums.

For the first sum, by definition of \(\dot Q_{\nu,t}(c\mid h)\) in \eqref{eq:dotQ},
\[
\left.
\frac{d}{d\epsilon}Q_\epsilon(c)
\right|_{\epsilon=0}
=
\dot Q_{\nu,t}(c\mid h).
\]
Also,
\[
G_0(c)=F_\nu\!\left(\theta_{T+1}(h,c)\right).
\]
Hence the first sum is
\[
\sum_{c\in\mathcal C_t}
\dot Q_{\nu,t}(c\mid h)
F_\nu\!\left(\theta_{T+1}(h,c)\right).
\]

For the second sum, note that
\[
Q_0(c)=Q_{\nu,t}^{0}(c\mid h)=\mathbb P_\nu(c\mid h),
\]
because \(\epsilon=0\) gives the baseline process.

Moreover, by the definition of replay influence,
$$
\left.
\frac{d}{d\epsilon}
G_\epsilon(c)
\right|_{\epsilon=0}
=
\mathcal I_t^{\mathrm{rep}}(F_\nu;h,c).
$$

Therefore the second sum is
\[
\sum_{c\in\mathcal C_t}
\mathbb P_\nu(c\mid h)
\mathcal I_t^{\mathrm{rep}}(F_\nu;h,c).
\]
By the definition of conditional expectation on a finite space, this is exactly
\[
\mathbb E_{\mathbb P_\nu}
\!\left[
\mathcal I_t^{\mathrm{rep}}(F_\nu;Z_{1:T})
\mid Z_{1:t}=h
\right].
\]

Combining the two sums yields
\[
\mathcal I_t^{\mathrm{int}}(F_\nu;h)
=
\mathbb E_{\mathbb P_\nu}
\!\left[
\mathcal I_t^{\mathrm{rep}}(F_\nu;Z_{1:T})
\mid Z_{1:t}=h
\right]
+
\sum_{c\in\mathcal C_t}
\dot Q_{\nu,t}(c\mid h)
F_\nu\!\left(\theta_{T+1}(h,c)\right),
\]
which is the claimed identity.
\end{proof}

\subsection{Proof of Proposition~\ref{prop:centered-future-law}}

\begin{proof}
Fix \(h=z_{1:t}\) with \(\mathbb P_\nu(h)>0\). For every \(\epsilon\in[-1,\rho]\), the conditional future law \(Q_{\nu,t}^{\epsilon}(\cdot\mid h)\) is a probability distribution on \(\mathcal C_t\). Hence
\[
\sum_{c\in\mathcal C_t} Q_{\nu,t}^{\epsilon}(c\mid h)=1.
\]
Differentiating at \(\epsilon=0\) gives
\[
\sum_{c\in\mathcal C_t} \dot Q_{\nu,t}(c\mid h)=0.
\]

Now subtract the conditional expected replay term from both sides of \eqref{eq:structural-decomposition}. This gives
\[
\mathcal I_t^{\mathrm{int}}(F_\nu;h)
-
\mathbb E_{\mathbb P_\nu}
\!\left[
\mathcal I_t^{\mathrm{rep}}(F_\nu;Z_{1:T})
\mid Z_{1:t}=h
\right]
=
\sum_{c\in\mathcal C_t}
\dot Q_{\nu,t}(c\mid h)
F_\nu\!\left(\theta_{T+1}(h,c)\right).
\]

Because the coefficients \(\dot Q_{\nu,t}(c\mid h)\) sum to zero, we may subtract the same constant from each summand without changing the total. Choose the constant
\[
F_\nu\!\left(\theta_{t+1}(h)\right).
\]
This depends only on the fixed prefix \(h\), not on the continuation \(c\). Therefore
\begin{align*}
\sum_{c\in\mathcal C_t}
\dot Q_{\nu,t}(c\mid h)
F_\nu\!\left(\theta_{T+1}(h,c)\right)
&=
\sum_{c\in\mathcal C_t}
\dot Q_{\nu,t}(c\mid h)
\Bigl[
F_\nu\!\left(\theta_{T+1}(h,c)\right)
-
F_\nu\!\left(\theta_{t+1}(h)\right)
\Bigr].
\end{align*}
Substituting this into the previous display yields \eqref{eq:centered-future-law}.
\end{proof}

\subsection{Proof of Proposition~\ref{prop:exogenous-reduction}}
\label{app:proof-exogenous-reduction}

\begin{proof}
Fix \(t\in\{1,\dots,T\}\) and fix a realized prefix
\(h=z_{1:t}\in\mathcal H_t\) with \(\mathbb P_\nu(h)>0\).

For each future round \(s\in\{t+1,\dots,T\}\) and each history prefix
\(y_{1:s-1}\in\mathcal H_{s-1}\), the hypothesis implies that the kernel does
not depend on the learner state. Therefore there exists a single probability
distribution, which we denote by
\[
\bar K_{\nu,s}(\cdot\mid y_{1:s-1}),
\]
such that
\[
K_{\nu,s}^{\theta}(\cdot\mid y_{1:s-1})
=
\bar K_{\nu,s}(\cdot\mid y_{1:s-1})
\qquad
\text{for every }\theta\in\Theta.
\]

We first identify the perturbed conditional future law. Fix
\(\epsilon\in[-1,\rho]\), and fix a continuation
\[
c=z_{t+1:T}\in\mathcal C_t
=
\mathcal Z_{t+1}\times\cdots\times\mathcal Z_T.
\]
By definition of the conditional future law under the one-coordinate
perturbation \(w^{(t,\epsilon)}\),
\begin{align}
Q_{\nu,t}^{\epsilon}(c\mid h)
&=
\mathbb P_\nu^{(w^{(t,\epsilon)})}(z_{t+1:T}\mid z_{1:t}=h)
\notag\\
&=
\prod_{s=t+1}^{T}
K_{\nu,s}^{\theta_s^{(w^{(t,\epsilon)})}(h,z_{t+1:s-1})}
\!\left(
z_s
\mid
h,z_{t+1:s-1}
\right).
\label{eq:exog-proof-perturbed-future}
\end{align}
Because each future kernel is state-independent, every factor in
\eqref{eq:exog-proof-perturbed-future} equals the common kernel
\(\bar K_{\nu,s}\). Hence
\[
Q_{\nu,t}^{\epsilon}(c\mid h)
=
\prod_{s=t+1}^{T}
\bar K_{\nu,s}(z_s\mid h,z_{t+1:s-1}).
\]
The same calculation holds at \(\epsilon=0\), so
\[
Q_{\nu,t}^{0}(c\mid h)
=
\prod_{s=t+1}^{T}
\bar K_{\nu,s}(z_s\mid h,z_{t+1:s-1}).
\]
Therefore
\[
Q_{\nu,t}^{\epsilon}(c\mid h)
=
Q_{\nu,t}^{0}(c\mid h)
\qquad
\text{for every }c\in\mathcal C_t.
\]
Since \(Q_{\nu,t}^{0}(c\mid h)=\mathbb P_\nu(c\mid h)\) by definition of the
baseline process, we obtain
\[
Q_{\nu,t}^{\epsilon}(c\mid h)
=
Q_{\nu,t}^{0}(c\mid h)
=
\mathbb P_\nu(c\mid h),
\]
which is the first claim.

We now compute the finite interventional target. By definition,
\[
\Psi_t^\epsilon(h)
=
\sum_{c\in\mathcal C_t}
Q_{\nu,t}^{\epsilon}(c\mid h)\,
F_\nu\!\left(
\theta_{T+1}^{(w^{(t,\epsilon)})}(h,c)
\right).
\]
Substituting the already-proved identity for the future law gives
\[
\Psi_t^\epsilon(h)
=
\sum_{c\in\mathcal C_t}
\mathbb P_\nu(c\mid h)\,
F_\nu\!\left(
\theta_{T+1}^{(w^{(t,\epsilon)})}(h,c)
\right).
\]
Because the state space is finite and \(h\) has positive baseline probability,
this is exactly the conditional expectation
\[
\mathbb E_{\mathbb P_\nu}
\!\left[
F_\nu\!\left(
\theta_{T+1}^{(w^{(t,\epsilon)})}(Z_{1:T})
\right)
\mid Z_{1:t}=h
\right].
\]
This proves the displayed formula for \(\Psi_t^\epsilon(h)\).

At \(\epsilon=0\), we similarly have
\[
\Psi_t^0(h)
=
\mathbb E_{\mathbb P_\nu}
\!\left[
F_\nu\!\left(
\theta_{T+1}(Z_{1:T})
\right)
\mid Z_{1:t}=h
\right].
\]
Subtracting the \(\epsilon=0\) identity from the \(\epsilon\)-identity yields
\begin{align*}
\Delta_{t,\epsilon}^{\mathrm{int}}(F_\nu;h)
&=
\Psi_t^\epsilon(h)-\Psi_t^0(h)
\\
&=
\mathbb E_{\mathbb P_\nu}
\!\left[
F_\nu\!\left(
\theta_{T+1}^{(w^{(t,\epsilon)})}(Z_{1:T})
\right)
-
F_\nu\!\left(
\theta_{T+1}(Z_{1:T})
\right)
\mid Z_{1:t}=h
\right]
\\
&=
\mathbb E_{\mathbb P_\nu}
\!\left[
\Delta_{t,\epsilon}^{\mathrm{rep}}(F_\nu;Z_{1:T})
\mid Z_{1:t}=h
\right].
\end{align*}
This proves the finite-effect identity.

Finally, assume the derivative at \(0\) exists. Since \(\mathcal C_t\) is
finite, we may differentiate term by term:
\begin{align*}
\mathcal I_t^{\mathrm{int}}(F_\nu;h)
&=
\left.
\frac{d}{d\epsilon}
\Psi_t^\epsilon(h)
\right|_{\epsilon=0}
\\
&=
\left.
\frac{d}{d\epsilon}
\sum_{c\in\mathcal C_t}
\mathbb P_\nu(c\mid h)\,
F_\nu\!\left(
\theta_{T+1}^{(w^{(t,\epsilon)})}(h,c)
\right)
\right|_{\epsilon=0}
\\
&=
\sum_{c\in\mathcal C_t}
\mathbb P_\nu(c\mid h)\,
\left.
\frac{d}{d\epsilon}
F_\nu\!\left(
\theta_{T+1}^{(w^{(t,\epsilon)})}(h,c)
\right)
\right|_{\epsilon=0}
\\
&=
\sum_{c\in\mathcal C_t}
\mathbb P_\nu(c\mid h)\,
\mathcal I_t^{\mathrm{rep}}(F_\nu;h,c)
\\
&=
\mathbb E_{\mathbb P_\nu}
\!\left[
\mathcal I_t^{\mathrm{rep}}(F_\nu;Z_{1:T})
\mid Z_{1:t}=h
\right].
\end{align*}
This proves the derivative identity.

Under the smoothness assumptions used later in the paper, the same state
independence also implies that \(Q_{\nu,t}^{\epsilon}(\cdot\mid h)\) is
constant in \(\epsilon\), so \(\dot Q_{\nu,t}(\cdot\mid h)\equiv 0\), and that
\(\nabla_\theta K_{\nu,s}^{\theta}(\cdot\mid z_{1:s-1})\equiv 0\) for every
\(s>t\), so every stagewise future-law correction \(\Xi_s\) vanishes as well.
\end{proof}

\section{Proof of Theorem~\ref{thm:replay-oracle-insufficiency}}
\label{app:proof-replay-oracle-insufficiency}

\begin{proof}
We construct the promised smooth class explicitly.

Take horizon \(T=2\). Let
\[
\mathcal Z_1=\{z_1^\star\},
\qquad
\mathcal Z_2=\{0,1\},
\qquad
\Theta=\mathbb R,
\qquad
\theta_1=0.
\]
The first interaction is deterministic:
\[
K_{\nu,1}^{\theta}(z_1^\star\mid \emptyset)=1
\qquad
\text{for every } \nu \text{ and every } \theta.
\]

Define the first update map by
\[
U_1(\theta,z_1^\star,w):=w-1.
\]
Thus under the baseline weight \(w_1=1\),
\[
\theta_2(z_1^\star)=0,
\]
while under the one-coordinate perturbation \(w^{(1,\epsilon)}\),
\[
\theta_2^{(w^{(1,\epsilon)})}(z_1^\star)=\epsilon.
\]

For the second round, define the update and target by
\[
U_2(\theta,z_2,1):=z_2,
\qquad
F(\theta_3):=\theta_3.
\]
Since \(z_2\in\{0,1\}\), this means that the terminal value under any full history \((z_1^\star,z_2)\) is just \(z_2\).

For each parameter \(\gamma\in\mathbb R\), define an environment \(\nu_\gamma\) by the round-\(2\) kernel
\[
K_{\nu_\gamma,2}^{\theta}(1\mid z_1^\star)=\sigma(\gamma\theta),
\qquad
K_{\nu_\gamma,2}^{\theta}(0\mid z_1^\star)=1-\sigma(\gamma\theta),
\]
where \(\sigma(x)=1/(1+e^{-x})\) is the logistic sigmoid.

All these objects are smooth in the obvious sense. The update maps are smooth in their real arguments, the round-\(2\) kernel masses are smooth functions of \(\theta\), and \(F\) is linear.

We now compare the replay oracles at the realized prefix \(z_1^\star\).

First, we compute the baseline future law. Under the baseline process, the first update produces the state \(\theta_2(z_1^\star)=0\). Therefore
\[
Q_{\nu_\gamma,1}^{0}(1\mid z_1^\star)
=
K_{\nu_\gamma,2}^{0}(1\mid z_1^\star)
=
\sigma(0)
=
\frac12,
\]
and similarly
\[
Q_{\nu_\gamma,1}^{0}(0\mid z_1^\star)=\frac12.
\]
Thus the baseline conditional future law at \(z_1^\star\) is the same for every \(\gamma\).

Next, we compute the replay response curves. Fix \(c\in\{0,1\}\). By construction,
\[
\theta_3^{(w^{(1,\epsilon)})}(z_1^\star,c)
=
U_2\!\left(
\theta_2^{(w^{(1,\epsilon)})}(z_1^\star),
c,
1
\right)
=
c.
\]
Therefore
\[
\phi_{\nu_\gamma,1,c}(\epsilon;z_1^\star)
=
F\!\left(\theta_3^{(w^{(1,\epsilon)})}(z_1^\star,c)\right)
=
c
\]
for every \(\epsilon\) and every \(\gamma\). So the entire family of fixed-log replay response curves is also the same for every \(\gamma\).

We have shown that for every \(\alpha,\beta\in\mathbb R\),
\[
\mathcal R_{\nu_\alpha,1}(z_1^\star)
=
\mathcal R_{\nu_\beta,1}(z_1^\star).
\]

We now compute the finite interventional target. Under the perturbation \(w^{(1,\epsilon)}\), the state at time \(2\) is \(\theta_2^\epsilon=\epsilon\). Therefore, conditional on the realized prefix \(z_1^\star\),
\[
Q_{\nu_\gamma,1}^{\epsilon}(1\mid z_1^\star)
=
K_{\nu_\gamma,2}^{\epsilon}(1\mid z_1^\star)
=
\sigma(\gamma\epsilon).
\]
Since the terminal target equals the second-round interaction,
\[
\Psi_{\nu_\gamma,1}^{\epsilon}(z_1^\star)
=
\sum_{c\in\{0,1\}}
Q_{\nu_\gamma,1}^{\epsilon}(c\mid z_1^\star)\,c
=
Q_{\nu_\gamma,1}^{\epsilon}(1\mid z_1^\star)
=
\sigma(\gamma\epsilon).
\]
Hence for \(\alpha\neq\beta\),
\[
\Psi_{\nu_\alpha,1}^{\epsilon}(z_1^\star)
\neq
\Psi_{\nu_\beta,1}^{\epsilon}(z_1^\star)
\]
whenever \(\alpha\epsilon\neq \beta\epsilon\).

Differentiating at \(\epsilon=0\), we obtain
\[
\mathcal I_{1,\nu_\gamma}^{\mathrm{int}}(F;z_1^\star)
=
\left.
\frac{d}{d\epsilon}
\sigma(\gamma\epsilon)
\right|_{\epsilon=0}
=
\gamma \sigma'(0)
=
\frac{\gamma}{4}.
\]
Thus for \(\alpha\neq\beta\),
\[
\mathcal I_{1,\nu_\alpha}^{\mathrm{int}}(F;z_1^\star)
\neq
\mathcal I_{1,\nu_\beta}^{\mathrm{int}}(F;z_1^\star).
\]

Finally, suppose for contradiction that over this class the conditional interventional target were identified by a functional of the replay oracle. Then there would exist a measurable map \(\Phi\) such that for every \(\gamma\),
\[
\mathcal I_{1,\nu_\gamma}^{\mathrm{int}}(F;z_1^\star)
=
\Phi\!\left(\mathcal R_{\nu_\gamma,1}(z_1^\star)\right).
\]
But the replay oracles agree for \(\nu_\alpha\) and \(\nu_\beta\), so this would imply
\[
\mathcal I_{1,\nu_\alpha}^{\mathrm{int}}(F;z_1^\star)
=
\mathcal I_{1,\nu_\beta}^{\mathrm{int}}(F;z_1^\star),
\]
contradicting the calculation above. Therefore the target is not identified by any functional of the replay oracle over this class.
\end{proof}

\section{Proofs for Section~\ref{sec:action-only-frontier}}

\subsection{Proof of Theorem~\ref{thm:action-only-com}}

\begin{proof}
Fix \(t\in\{1,\dots,T\}\), fix a realized prefix \(h=z_{1:t}\in\mathcal H_t\) with \(\mathbb P_\nu(h)>0\), and fix a continuation
\[
c=((x_{t+1},a_{t+1},r_{t+1}),\dots,(x_T,a_T,r_T))\in\mathcal C_t.
\]

We first prove the density-ratio identity \eqref{eq:action-only-density-ratio} for continuations \(c\) with \(Q_{\nu,t}^{0}(c\mid h)>0\).

By the definition of conditional future law and the factorization \eqref{eq:action-only-factorization},
\begin{align}
Q_{\nu,t}^{\epsilon}(c\mid h)
&=
\prod_{s=t+1}^{T}
K_{\nu,s}^{\theta_s^{(w^{(t,\epsilon)})}(h,c_{t+1:s-1})}
(x_s,a_s,r_s\mid h,c_{t+1:s-1})
\notag\\
&=
\prod_{s=t+1}^{T}
D_s(x_s\mid h,c_{t+1:s-1})\,
\pi_{\theta_s^{(w^{(t,\epsilon)})}(h,c_{t+1:s-1})}(a_s\mid x_s)\,
P_s(r_s\mid x_s,a_s,h,c_{t+1:s-1}).
\label{eq:proof-action-only-perturbed}
\end{align}
Similarly, the baseline conditional future law is
\begin{align}
Q_{\nu,t}^{0}(c\mid h)
&=
\prod_{s=t+1}^{T}
D_s(x_s\mid h,c_{t+1:s-1})\,
\pi_{\theta_s(h,c_{t+1:s-1})}(a_s\mid x_s)\,
P_s(r_s\mid x_s,a_s,h,c_{t+1:s-1}).
\label{eq:proof-action-only-baseline}
\end{align}

Because \(Q_{\nu,t}^{0}(c\mid h)>0\), every factor in \eqref{eq:proof-action-only-baseline} corresponding to the actually realized action \(a_s\) is positive, and therefore the ratio
\[
\frac{
\pi_{\theta_s^{(w^{(t,\epsilon)})}(h,c_{t+1:s-1})}(a_s\mid x_s)
}{
\pi_{\theta_s(h,c_{t+1:s-1})}(a_s\mid x_s)
}
\]
is well defined for each \(s\in\{t+1,\dots,T\}\).

Dividing \eqref{eq:proof-action-only-perturbed} by \eqref{eq:proof-action-only-baseline}, the context terms \(D_s\) and reward terms \(P_s\) cancel identically. We obtain
\[
\frac{Q_{\nu,t}^{\epsilon}(c\mid h)}{Q_{\nu,t}^{0}(c\mid h)}
=
\prod_{s=t+1}^{T}
\frac{
\pi_{\theta_s^{(w^{(t,\epsilon)})}(h,c_{t+1:s-1})}(a_s\mid x_s)
}{
\pi_{\theta_s(h,c_{t+1:s-1})}(a_s\mid x_s)
}
=
\Lambda_t^\epsilon(c;h).
\]
Multiplying both sides by \(Q_{\nu,t}^{0}(c\mid h)\) gives
\[
Q_{\nu,t}^{\epsilon}(c\mid h)
=
\Lambda_t^\epsilon(c;h)\,
Q_{\nu,t}^{0}(c\mid h),
\]
which is \eqref{eq:action-only-density-ratio}.

We now prove the expectation representation \eqref{eq:action-only-psi-rep} under the overlap condition \eqref{eq:action-only-overlap}. Fix \(\epsilon\in(-\epsilon_0,\epsilon_0)\). By the definition of \(\Psi_t^\epsilon(h)\),
\[
\Psi_t^\epsilon(h)
=
\sum_{c\in\mathcal C_t}
Q_{\nu,t}^{\epsilon}(c\mid h)
F_\nu\!\left(\theta_{T+1}^{(w^{(t,\epsilon)})}(h,c)\right).
\]
If \(Q_{\nu,t}^{\epsilon}(c\mid h)>0\), the overlap condition implies \(Q_{\nu,t}^{0}(c\mid h)>0\). Therefore the sum may be taken over continuations with positive baseline conditional probability only:
\[
\Psi_t^\epsilon(h)
=
\sum_{c:\,Q_{\nu,t}^{0}(c\mid h)>0}
Q_{\nu,t}^{\epsilon}(c\mid h)
F_\nu\!\left(\theta_{T+1}^{(w^{(t,\epsilon)})}(h,c)\right).
\]
Using the already-proved ratio identity,
\[
\Psi_t^\epsilon(h)
=
\sum_{c:\,Q_{\nu,t}^{0}(c\mid h)>0}
\Lambda_t^\epsilon(c;h)\,
Q_{\nu,t}^{0}(c\mid h)\,
F_\nu\!\left(\theta_{T+1}^{(w^{(t,\epsilon)})}(h,c)\right).
\]
This is exactly the conditional expectation
\[
\mathbb E_{\mathbb P_\nu}
\!\left[
\Lambda_t^\epsilon(Z_{t+1:T};h)\,
F_\nu\!\left(\theta_{T+1}^{(w^{(t,\epsilon)})}(Z_{1:T})\right)
\mid Z_{1:t}=h
\right],
\]
because under the baseline law,
\[
Q_{\nu,t}^{0}(c\mid h)=\mathbb P_\nu(c\mid h).
\]
This proves \eqref{eq:action-only-psi-rep}.
\end{proof}

\subsection{Proof of Corollary~\ref{cor:action-only-identification}}

\begin{proof}
Fix \(\epsilon\in(-\epsilon_0,\epsilon_0)\). By Theorem~\ref{thm:action-only-com},
\[
\Psi_t^\epsilon(h)
=
\mathbb E_{\mathbb P_\nu}
\!\left[
\Lambda_t^\epsilon(Z_{t+1:T};h)\,
F_\nu\!\left(\theta_{T+1}^{(w^{(t,\epsilon)})}(Z_{1:T})\right)
\mid Z_{1:t}=h
\right].
\]

We now show that every quantity inside this conditional expectation is determined by \((\mathbb P_\nu,h)\) under the stated assumptions.

First, because the update maps \(U_1,\dots,U_T\) are known, the realized prefix \(h\) and continuation \(c\) determine the replayed learner states
\[
\theta_s(h,c_{t+1:s-1})
\quad\text{and}\quad
\theta_s^{(w^{(t,\epsilon)})}(h,c_{t+1:s-1})
\]
for every \(s\in\{t+1,\dots,T+1\}\). Therefore the policy ratio
\[
\Lambda_t^\epsilon(c;h)
\]
is determined from \(h\), \(c\), \(\epsilon\), and the known action law \(\pi_\theta\).

Second, for every continuation \(c\) with \(Q_{\nu,t}^{\epsilon}(c\mid h)>0\), the terminal state
\[
\theta_{T+1}^{(w^{(t,\epsilon)})}(h,c)
\]
belongs to the reachable set \(\mathcal R_{\nu,t}^{\epsilon_0}(h)\) by definition. Hence the assumption of the corollary gives
\[
F_\nu\!\left(\theta_{T+1}^{(w^{(t,\epsilon)})}(h,c)\right)
=
\mathfrak F\!\left(
\mathbb P_\nu,
\theta_{T+1}^{(w^{(t,\epsilon)})}(h,c)
\right).
\]
Thus the terminal value appearing inside the expectation is also a measurable functional of \((\mathbb P_\nu,h,c,\epsilon)\).

It follows that the random variable
\[
\Lambda_t^\epsilon(Z_{t+1:T};h)\,
F_\nu\!\left(\theta_{T+1}^{(w^{(t,\epsilon)})}(Z_{1:T})\right)
\]
is measurable with respect to \((\mathbb P_\nu,h,Z_{t+1:T})\). Since the state and continuation spaces are finite, the conditional expectation given \(Z_{1:t}=h\) is itself a measurable functional of \((\mathbb P_\nu,h)\). Therefore \(\Psi_t^\epsilon(h)\) is identified from the baseline law.

If the derivative at \(0\) exists, then
\[
\mathcal I_t^{\mathrm{int}}(F_\nu;h)
=
\left.
\frac{d}{d\epsilon}\Psi_t^\epsilon(h)
\right|_{\epsilon=0}
\]
is also determined by the baseline-law function \(\epsilon\mapsto \Psi_t^\epsilon(h)\), and is therefore identified from the baseline law as well.
\end{proof}

\subsection{Proof of the Negative Frontier}
\begin{proof}
We show directly that the target is not identified from the baseline law over the enlarged class by constructing two environments with the exact same baseline law but strictly divergent interventional targets. We first demonstrate this for reward-state dependence.Take horizon $T=2$. Let the first interaction space be a singleton $\mathcal Z_1=\{z_1^\star\}$, and let the second interaction consist only of a reward, $\mathcal Z_2=\{0,1\}$. Equivalently, one may view the context and action spaces as singletons and the reward space as $\{0,1\}$. Let the learner state space be $\Theta=\mathbb R$, with initial state $\theta_1=0$.Define the first-round kernel to be deterministic:$$K_{\nu,1}^{\theta}(z_1^\star\mid\emptyset)=1$$Define the learner update maps and terminal target by:$$U_1(\theta,z_1^\star,w)=w-1, \qquad U_2(\theta,r,1)=r, \qquad F(\theta_3)=\theta_3$$Therefore, under the perturbation $w^{(1,\epsilon)}$, the learner state after round $1$ is exactly the perturbation value:$$\theta_2^{(w^{(1,\epsilon)})}(z_1^\star)=\epsilon$$For each parameter $\gamma\in\mathbb R$, we define an environment $\nu_\gamma$ by its reward law:$$P_\gamma^\theta(R_2=1\mid z_1^\star)=\sigma(\gamma\theta), \qquad P_\gamma^\theta(R_2=0\mid z_1^\star)=1-\sigma(\gamma\theta)$$where $\sigma$ is the logistic sigmoid function. Because context and action spaces are singletons, this satisfies the factorization for reward-state dependence.We first compute the baseline law. Under the baseline process (where $w_1=1$), the state entering round $2$ is $\theta_2(z_1^\star)=0$. Hence:$$P_\gamma^0(R_2=1\mid z_1^\star)=\sigma(0)=\frac12, \qquad P_\gamma^0(R_2=0\mid z_1^\star)=\frac12$$Since the first round is deterministic, the full baseline law on $\mathcal H_2$ is:$$\mathbb P_{\nu_\gamma}(z_1^\star,1)=\frac12, \qquad \mathbb P_{\nu_\gamma}(z_1^\star,0)=\frac12$$This is entirely independent of $\gamma$. Thus, for any two environments $\alpha,\beta\in\mathbb R$, their baseline laws are identical: $\mathbb P_{\nu_\alpha}=\mathbb P_{\nu_\beta}$.We now compute the interventional target at the realized prefix $z_1^\star$. Under the perturbation $w^{(1,\epsilon)}$, the state at round $2$ is $\epsilon$, so the perturbed future law is:$$Q_{\nu_\gamma,1}^{\epsilon}(1\mid z_1^\star)=\sigma(\gamma\epsilon)$$Because $U_2(\theta,r,1)=r$ and $F(\theta_3)=\theta_3$, the terminal target equals the round-$2$ reward. Therefore, the finite conditional interventional effect is:$$\Psi_{\nu_\gamma,1}^{\epsilon}(z_1^\star) = \sigma(\gamma\epsilon)$$Differentiating at $0$ yields the influence:$$\mathcal I_{1,\nu_\gamma}^{\mathrm{int}}(F;z_1^\star) = \left. \frac{d}{d\epsilon} \sigma(\gamma\epsilon) \right|_{\epsilon=0} = \frac{\gamma}{4}$$Hence, for $\alpha\neq\beta$, their interventional targets strictly diverge:$$\mathcal I_{1,\nu_\alpha}^{\mathrm{int}}(F;z_1^\star) \neq \mathcal I_{1,\nu_\beta}^{\mathrm{int}}(F;z_1^\star)$$If the target were identified from the baseline law over this class, it would have to take the same value on $\nu_\alpha$ and $\nu_\beta$ because their baseline laws agree exactly. Since the values are different, this is impossible.

\paragraph{Extension to Context-State Dependence:}To prove the exact same impossibility for context-dependent environments, we use the identical mathematical construction but treat the second-round interaction as a context $X_2\in\{0,1\}$ rather than a reward. We set $U_2(\theta,x,1)=x$ and define the context law as $D_\gamma^\theta(X_2=1\mid z_1^\star)=\sigma(\gamma\theta)$. The baseline laws again remain perfectly identical across all $\gamma$, but the interventional targets diverge exactly as shown above. Therefore, identification fails in both cases.\end{proof}

\section{Proofs for section~\ref{sec:quantifying-the-gap}}

\subsection{Proofs for the Directional Failure of Replay (Theorem \ref{thm:strong-separation})}

To formally prove the directional failure of replay in the horizon-$2$ bandit (Theorem \ref{thm:strong-separation}), we must first explicitly define the update dynamics of the environment, define the intermediate policy states, and derive the exact closed-form equations for both the interventional and replay targets.

\paragraph{Step 1: The Bandit Update Dynamics.}
The learner updates its policy (the probability $p_t$ of pulling arm $1$) using two-arm entropic mirror-descent with an importance-weighted reward estimate. On the logit scale, where $\text{logit}(p) = \log(p/(1-p))$, the perturbed update rule at round $t$ with learning weight $w_t$ is:
$$
\text{logit}\!\bigl(p_{t+1}^{(w)}\bigr)
=
\text{logit}\!\bigl(p_t^{(w)}\bigr)
+
\eta_t w_t R_t
\left(
\frac{\mathbf 1\{A_t=1\}}{p_t^{(w)}}
-
\frac{\mathbf 1\{A_t=0\}}{1-p_t^{(w)}}
\right)
$$

\paragraph{Step 2: Horizon-2 Setup and the First-Round Derivative.}
We specialize to horizon $T=2$ and condition on the realized first interaction $z_1^\star:=(1,1)$ (the learner pulled arm $1$ and received reward $1$). The initial policy is $p_1=q$. Under the one-coordinate perturbation at round $1$ (where $w_1 = 1+\epsilon$), the intermediate policy entering round $2$ is:
$$
p_2^\epsilon
=
\sigma\!\left(
\text{logit}(q)+\eta_1(1+\epsilon)\frac{1}{q}
\right)
$$
where $\sigma$ is the logistic sigmoid. Differentiating this with respect to $\epsilon$ at $0$ yields the first-round sensitivity factor:
$$
c_{q,\eta_1,p} := \left. \frac{d}{d\epsilon} p_2^\epsilon \right|_{\epsilon=0} = \frac{\eta_1}{q}\,p(1-p)
$$
where $p := p_2^0$ is the baseline intermediate policy.

\paragraph{Step 3: Round-2 Policy Functions.}
We now express the final policy $p_3$ as a function of the intermediate policy $p$. Based on the logit update rule, if the round-$2$ reward is $0$, the policy is unchanged. If the reward is $1$, the update depends on the arm pulled. We define these two potential final policy functions:
$$f_{\eta_2}(p):=\sigma\!\left(\text{logit}(p)+\frac{\eta_2}{p}\right) \quad \text{(if Arm 1 is pulled)}$$
$$g_{\eta_2}(p):=\sigma\!\left(\text{logit}(p)-\frac{\eta_2}{1-p}\right) \quad \text{(if Arm 0 is pulled)}$$

\paragraph{Step 4: Formulating the Targets.}
Using these functions, we define two expectations. The first is the expected final policy after recollecting round $2$ under the true environment parameters $\mu$:
$$
G_{\mu,\eta_2}(p)
:=
p\bigl(\mu_1 f_{\eta_2}(p)+(1-\mu_1)p\bigr)
+
(1-p)\bigl(\mu_0 g_{\eta_2}(p)+(1-\mu_0)p\bigr)
$$
The second is the corresponding conditional expected replay derivative factor (which differentiates the fixed paths rather than the expectations):
$$
R_{\mu,\eta_2}(p)
:=
p\bigl(\mu_1 f_{\eta_2}'(p)+(1-\mu_1)\bigr)
+
(1-p)\bigl(\mu_0 g_{\eta_2}'(p)+(1-\mu_0)\bigr)
$$

\begin{proposition}[Two-step exact formulas]
\label{prop:two-step-formulas}
In the horizon-$2$ bandit conditioned on $Z_1=z_1^\star=(1,1)$, the targets resolve exactly to:
$$ \mathcal I_1^{\mathrm{int}}(F; z_1^\star) = c_{q,\eta_1,p}\,G_{\mu,\eta_2}'(p) $$
$$ \mathbb E_{\mathbb P_\mu} \!\left[ \mathcal I_1^{\mathrm{rep}}(F; Z_{1:2}) \mid Z_1=z_1^\star \right] = c_{q,\eta_1,p}\,R_{\mu,\eta_2}(p) $$
\end{proposition}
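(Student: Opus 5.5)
The plan is a direct chain-rule computation that exploits the two-stage structure of the horizon-$2$ problem. I take $F(\theta_3)=p_3$, the terminal probability of pulling arm $1$, which is the evaluation implicit in the definitions of $G_{\mu,\eta_2}$ and $R_{\mu,\eta_2}$. Conditioning on $Z_1=z_1^\star=(1,1)$ fixes the first interaction. Step 2 shows that the perturbation enters only through the intermediate policy $p_2^\epsilon$, and that $\epsilon\mapsto p_2^\epsilon$ is smooth with derivative $c_{q,\eta_1,p}$ at $\epsilon=0$. Round $2$ carries weight $1$, so both the round-$2$ data law and the round-$2$ update depend on $\epsilon$ only through the current policy $p_2^\epsilon$. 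Both targets should therefore factor as (a function of $p$) composed with $p_2^\epsilon$, and the chain rule produces the common factor $c_{q,\eta_1,p}$.

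\emph{Interventional formula.} For the continuation $c=(a,r)$, the perturbed conditional future law is $Q^\epsilon_{\nu,1}(c\mid z_1^\star)=K^{p_2^\epsilon}_{\nu,2}(a,r)$. That is, arm $1$ is pulled with probability $p_2^\epsilon$ and the reward is $\mathrm{Bernoulli}(\mu_a)$. The terminal state along $(a,r)$ under the perturbed learner has four cases:
\begin{itemize}
\item $(1,1)$ gives $f_{\eta_2}(p_2^\epsilon)$;
\item $(0,1)$ gives $g_{\eta_2}(p_2^\epsilon)$;
\item $(1,0)$ and $(0,0)$ leave the policy unchanged, giving $p_2^\epsilon$.
\end{itemize}
Summing these four terms against $Q^\epsilon$ gives exactly $\Psi_1^\epsilon(z_1^\star)=G_{\mu,\eta_2}(p_2^\epsilon)$. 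The chain rule at $\epsilon=0$ then yields $c_{q,\eta_1,p}\,G'_{\mu,\eta_2}(p)$.

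\emph{Replay formula.} Fix a continuation $c=(a,r)$. The replay response is $F(\theta_3^{(w^{(1,\epsilon)})}(z_1^\star,c))=h_c(p_2^\epsilon)$, where $h_c$ is $f_{\eta_2}$, $g_{\eta_2}$, or the identity, according to the same four cases. Hence $\mathcal I_1^{\mathrm{rep}}(F;z_1^\star,c)=c_{q,\eta_1,p}\,h_c'(p)$, and the identity contributes derivative $1$. Averaging over the baseline law, which puts weight $p\mu_1$ on $(1,1)$, $p(1-\mu_1)$ on $(1,0)$, $(1-p)\mu_0$ on $(0,1)$, and $(1-p)(1-\mu_0)$ on $(0,0)$, gives exactly $c_{q,\eta_1,p}\,R_{\mu,\eta_2}(p)$. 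As a consistency check, Theorem~\ref{thm:structural-decomposition} should hold: $G'-R$ equals the derivative of the weights, namely $\mu_1 f+(1-\mu_1)p-\mu_0 g-(1-\mu_0)p$, which is the future-law correction.

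The only real care point is differentiability. Both $f_{\eta_2}$ and $g_{\eta_2}$ are smooth compositions of $\sigma$, $\mathrm{logit}$, and $1/p$ and $1/(1-p)$ on $(0,1)$. The policy $p_2^\epsilon$ lies in $(0,1)$ for $\epsilon$ near $0$, because $q\in(0,1)$. This makes the term-by-term differentiation over the four continuations legitimate. The main obstacle is bookkeeping rather than analysis: the two zero-reward branches must be assigned terminal value $p_2^\epsilon$, not $p$. This is what produces the $(1-\mu_1)$ and $(1-\mu_0)$ terms in $R$.
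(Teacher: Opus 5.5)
Your proposal is correct and takes the same route as the paper's proof. You write $\Psi_1^\epsilon(z_1^\star)=G_{\mu,\eta_2}(p_2^\epsilon)$ and apply the chain rule, then differentiate each of the four fixed round-$2$ paths through $p_2^\epsilon$ and average with the baseline weights. Your explicit treatment of the zero-reward branches (value $p_2^\epsilon$, derivative $1$) and the consistency check against Theorem~\ref{thm:structural-decomposition} are both correct and simply fill in bookkeeping the paper leaves implicit.
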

\begin{proof}
For the interventional target, we evaluate the recollected expectation given the perturbed intermediate policy $p_2^\epsilon$, so $\Psi_1^\epsilon(z_1^\star)=G_{\mu,\eta_2}(p_2^\epsilon)$. Applying the chain rule yields $G_{\mu,\eta_2}'(p) \cdot \left. \frac{d}{d\epsilon} p_2^\epsilon \right|_{\epsilon=0}$, which equals $c_{q,\eta_1,p}\,G_{\mu,\eta_2}'(p)$.

For the replay target, we evaluate the derivative along each fixed path and then take the baseline expectation. If $(A_2, R_2) = (1,1)$, the final policy is $f_{\eta_2}(p_2^\epsilon)$, and its derivative at $0$ is $c_{q,\eta_1,p}f_{\eta_2}'(p)$. Weighting the derivatives of all four possible round-2 outcomes by their baseline probabilities ($p\mu_1$, $p(1-\mu_1)$, $(1-p)\mu_0$, $(1-p)(1-\mu_0)$) and factoring out $c_{q,\eta_1,p}$ yields exactly $c_{q,\eta_1,p}\,R_{\mu,\eta_2}(p)$.
\end{proof}

\paragraph{Step 5: Auxiliary Half-Point Lemmas.}
To prove the sign separation, we need to evaluate these derivatives at the specific intermediate policy $p=1/2$. 

\begin{lemma}[Half-point identities]
\label{lem:half-point-identities}
For every $\eta>0$, $f_{\eta}'(1/2) = g_{\eta}'(1/2) = -\frac{\eta-1}{\cosh^2(\eta)}$.
\end{lemma}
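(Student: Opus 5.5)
The plan is a direct chain-rule computation. Both $f_\eta$ and $g_\eta$ have the form $\sigma\circ u$ for an explicit inner function of $p$, so each derivative factors as $\sigma'(u(p))\,u'(p)$. I will evaluate the two factors at $p=1/2$ separately and check that the products coincide.

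For $f_\eta$, set $u(p):=\text{logit}(p)+\eta/p$.
\begin{itemize}
\item Since $\frac{d}{dp}\text{logit}(p)=\frac{1}{p(1-p)}$, we have $u'(p)=\frac{1}{p(1-p)}-\frac{\eta}{p^2}$.
\item At $p=1/2$ this gives $u(1/2)=0+2\eta=2\eta$ and $u'(1/2)=4-4\eta$.
\end{itemize}

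For $g_\eta$, set $v(p):=\text{logit}(p)-\eta/(1-p)$.
\begin{itemize}
\item Here $v'(p)=\frac{1}{p(1-p)}-\frac{\eta}{(1-p)^2}$.
\item At $p=1/2$ this gives $v(1/2)=-2\eta$ and $v'(1/2)=4-4\eta$.
\end{itemize}
So both inner derivatives equal $4(1-\eta)$. The inner values are $\pm 2\eta$.

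Next I will use the identity $\sigma'(x)=\sigma(x)(1-\sigma(x))=\frac{1}{4\cosh^2(x/2)}$. It follows from
\[
\sigma(x)\bigl(1-\sigma(x)\bigr)=\frac{1}{(e^{x/2}+e^{-x/2})^2}.
\]
Because this expression is even in $x$, $\sigma'(2\eta)=\sigma'(-2\eta)=\frac{1}{4\cosh^2(\eta)}$.

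Multiplying the two factors then gives
\[
f_\eta'(1/2)=g_\eta'(1/2)=\frac{4(1-\eta)}{4\cosh^2(\eta)}=-\frac{\eta-1}{\cosh^2(\eta)},
\]
which is the claim.

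No step is a real obstacle. The only points needing care are the evenness and $\cosh$ form of $\sigma'$, and the sign of the $\eta/(1-p)$ term in $v'$. The symmetry $p\leftrightarrow 1-p$ at the half point explains why the two derivatives agree.
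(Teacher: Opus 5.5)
Your proof is correct and follows the paper's route: the chain rule gives $\sigma'(\pm 2\eta)\cdot 4(1-\eta)$, and the identity $\sigma(x)(1-\sigma(x)) = 1/(4\cosh^2(x/2))$ finishes the computation. The only difference is that you compute $g_\eta'(1/2)$ explicitly, whereas the paper appeals to symmetry, namely $g_\eta(p) = 1 - f_\eta(1-p)$.
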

\begin{proof}
Applying the chain rule to $f_\eta(p)$ and evaluating at $p=1/2$ (where $\text{logit}(1/2)=0$ and $1/(p(1-p))=4$), we find $f_\eta'(1/2) = f_\eta(1/2)(1-f_\eta(1/2)) \cdot 4(1-\eta)$. Using the logistic variance identity $\sigma(x)(1-\sigma(x)) = 1 / (4\cosh^2(x/2))$ evaluated at $x=2\eta$, we obtain $-\frac{\eta-1}{\cosh^2(\eta)}$. The proof for $g_\eta'$ is identical due to symmetry.
\end{proof}

\begin{lemma}[Closed forms at the half point]
\label{lem:R-and-Gprime-half}
Suppose $\mu_1=1$. Then for every $\eta>0$:
$$ R_{\mu,\eta}(1/2) = \frac12 \left[ (1-\mu_0) - (1+\mu_0)\frac{\eta-1}{\cosh^2(\eta)} \right] $$
$$ G_{\mu,\eta}'(1/2) = (1+\mu_0)\sigma(2\eta)-\mu_0 - \frac{1+\mu_0}{2}\frac{\eta-1}{\cosh^2(\eta)} $$
\end{lemma}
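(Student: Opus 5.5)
The plan is to substitute \(\mu_1=1\) into the definitions of \(R_{\mu,\eta}\) and \(G_{\mu,\eta}\) from Step~4 and evaluate at \(p=1/2\). Lemma~\ref{lem:half-point-identities} supplies the derivative values. Throughout I abbreviate \(\kappa:=\frac{\eta-1}{\cosh^2(\eta)}\), so that Lemma~\ref{lem:half-point-identities} reads \(f_\eta'(1/2)=g_\eta'(1/2)=-\kappa\).

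\emph{The replay factor.} With \(\mu_1=1\) the definition simplifies to
\[
R_{\mu,\eta}(p)=p\,f_\eta'(p)+(1-p)\bigl(\mu_0 g_\eta'(p)+(1-\mu_0)\bigr).
\]
Setting \(p=1/2\) and inserting \(-\kappa\) for both derivatives gives
\[
\tfrac12\bigl[-\kappa-\mu_0\kappa+(1-\mu_0)\bigr],
\]
which is the claimed expression. This step is a direct substitution.

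\emph{The interventional factor.} With \(\mu_1=1\),
\[
G_{\mu,\eta}(p)=p\,f_\eta(p)+(1-p)\mu_0 g_\eta(p)+(1-\mu_0)p(1-p).
\]
I would differentiate this by the product rule:
\[
G_{\mu,\eta}'(p)=f_\eta(p)+p f_\eta'(p)-\mu_0 g_\eta(p)+(1-p)\mu_0 g_\eta'(p)+(1-\mu_0)(1-2p).
\]
At \(p=1/2\) the last term vanishes. I then need the undifferentiated values of \(f_\eta\) and \(g_\eta\) at the half point. Since \(\operatorname{logit}(1/2)=0\) and \(1/p=1/(1-p)=2\), these are
\[
f_\eta(1/2)=\sigma(2\eta),\qquad g_\eta(1/2)=\sigma(-2\eta)=1-\sigma(2\eta).
\]
Substituting gives
\[
\sigma(2\eta)-\mu_0\bigl(1-\sigma(2\eta)\bigr)-\tfrac12\kappa(1+\mu_0)=(1+\mu_0)\sigma(2\eta)-\mu_0-\tfrac{1+\mu_0}{2}\kappa,
\]
as claimed.

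\emph{Main obstacle.} There is no real conceptual difficulty here. The only places an error could slip in are the product-rule bookkeeping for the \((1-\mu_0)p(1-p)\) term, whose derivative vanishes exactly at \(1/2\), and the symmetry \(\sigma(-x)=1-\sigma(x)\) used to express \(g_\eta(1/2)\) through \(\sigma(2\eta)\). I would verify both explicitly. I would also double-check that the common value \(-\kappa\) from Lemma~\ref{lem:half-point-identities} is applied to \(f_\eta'\) and \(g_\eta'\) with the same sign.
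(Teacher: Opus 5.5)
Your proposal is correct and takes the same route as the paper: substitute $\mu_1=1$ and $p=1/2$, then apply Lemma~\ref{lem:half-point-identities}. The only difference is that you spell out the product-rule derivative of $G_{\mu,\eta}$ and the values $f_\eta(1/2)=\sigma(2\eta)$ and $g_\eta(1/2)=1-\sigma(2\eta)$, which the paper's one-line proof leaves implicit.
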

\begin{proof}
Substitute $\mu_1=1$ and $p=1/2$ into the definitions of $R_{\mu,\eta}(p)$ and $G_{\mu,\eta}'(p)$. Applying the identities from Lemma \ref{lem:half-point-identities} isolates the stated closed-form equations.
\end{proof}

\paragraph{Step 6: Proof of Theorem \ref{thm:strong-separation} (Strong Separation).}
\begin{proof}
We force the intermediate policy to be exactly $p=1/2$ by setting $q=1/4$ and $\eta_1=(\log 3)/4$. This yields a strictly positive first-round sensitivity: $c_{q,\eta_1,p} = (\log 3)/4 > 0$.

We first analyze replay. By Proposition \ref{prop:two-step-formulas} and Lemma \ref{lem:R-and-Gprime-half}, the replay factor depends on the bracketed term in $R_{\mu,\eta_2}(1/2)$. Because we fix $\eta_2>1$, the term $\frac{\eta_2-1}{\cosh^2(\eta_2)}$ is strictly positive. As $\mu_0 \to 1$, the bracket approaches $0 - 2\frac{\eta_2-1}{\cosh^2(\eta_2)} < 0$. By continuity, there exists a neighborhood $1-\delta_1(\eta_2)<\mu_0<1$ where the conditional expected replay influence is strictly negative.

We now analyze intervention. The target is proportional to $G_{\mu,\eta_2}'(1/2)$. At the limit $\mu_0=1$, this evaluates to $2\sigma(2\eta_2)-1-\frac{\eta_2-1}{\cosh^2(\eta_2)}$, which simplifies to $\tanh(\eta_2)-\frac{\eta_2-1}{\cosh^2(\eta_2)}$. Differentiating this expression with respect to $\eta_2$ reveals its global minimum over $(0,\infty)$ occurs at $\eta_2=1$, where it evaluates to $\tanh(1) > 0$. Therefore, the interventional factor is strictly positive for all $\eta_2>1$ at $\mu_0=1$. By continuity, there exists a neighborhood $1-\delta_2(\eta_2)<\mu_0<1$ where the conditional interventional influence is strictly positive. 

Taking $\delta = \min(\delta_1, \delta_2)$ completes the proof: in this regime, replay evaluates negative while intervention is positive.
\end{proof}

\begin{corollary}[A realized sign flip]
\label{cor:realized-sign-flip}
Under the hypotheses of Theorem \ref{thm:strong-separation}, there exists at least one second-round outcome $z_2^\dagger$ with positive baseline probability such that $\mathcal I_1^{\mathrm{rep}}(F; z_1^\star,z_2^\dagger)<0< \mathcal I_1^{\mathrm{int}}(F; z_1^\star)$.
\end{corollary}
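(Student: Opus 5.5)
The argument is an averaging one. By Proposition~\ref{prop:two-step-formulas}, the conditional expected replay influence is
\[
\mathbb E_{\mathbb P_\mu}\!\left[\mathcal I_1^{\mathrm{rep}}(F;Z_{1:2})\mid Z_1=z_1^\star\right]
=\sum_{z_2}\mathbb P_\mu(z_2\mid z_1^\star)\,\mathcal I_1^{\mathrm{rep}}(F;z_1^\star,z_2).
\]
This is a finite convex combination of the pathwise replay influences over the four second-round outcomes $z_2\in\{(1,1),(1,0),(0,1),(0,0)\}$. The weights are the baseline probabilities $p\mu_1$, $p(1-\mu_1)$, $(1-p)\mu_0$ and $(1-p)(1-\mu_0)$. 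Outcomes with zero baseline probability contribute nothing to the sum.

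The steps are as follows.

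\emph{Step 1.} Fix the parameter regime produced in the proof of Theorem~\ref{thm:strong-separation}: $q=1/4$, $\eta_1=(\log 3)/4$ (so that $p=1/2$), $\mu_1=1$, $\eta_2>1$, and $\mu_0\in(1-\delta,1)$. In this regime the conditional expected replay influence is strictly negative and $\mathcal I_1^{\mathrm{int}}(F;z_1^\star)>0$.

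\emph{Step 2.} Argue by contradiction. Suppose every outcome $z_2$ with positive baseline probability satisfied $\mathcal I_1^{\mathrm{rep}}(F;z_1^\star,z_2)\ge 0$. Then the weighted average above, whose weights are nonnegative and sum to one over the support, would be $\ge 0$. This contradicts Step~1.

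\emph{Step 3.} Conclude that there is some $z_2^\dagger$ with $\mathbb P_\mu(z_2^\dagger\mid z_1^\star)>0$ and $\mathcal I_1^{\mathrm{rep}}(F;z_1^\star,z_2^\dagger)<0$. Since the first round is conditioned on, this outcome has positive baseline probability as a full history given the prefix. Combining with $\mathcal I_1^{\mathrm{int}}(F;z_1^\star)>0$ from Step~1 gives the claimed chain of inequalities.

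For concreteness, the witness can be exhibited explicitly. At $p=1/2$ with $\mu_1=1$, the outcome $(1,1)$ has probability $1/2>0$. Its replay influence is $c_{q,\eta_1,p}\,f_{\eta_2}'(1/2)$, which by Lemma~\ref{lem:half-point-identities} equals $-c_{q,\eta_1,p}(\eta_2-1)/\cosh^2(\eta_2)$. This is strictly negative for $\eta_2>1$, since $c_{q,\eta_1,p}=(\log 3)/4>0$. So $z_2^\dagger=(1,1)$ works directly.

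The only delicate point is confirming that the weights in the replay average really are the baseline conditional probabilities, so that the positive-probability requirement is met. This is supplied by the proof of Proposition~\ref{prop:two-step-formulas}, which also gives the per-path derivative formula. Beyond that, no real obstacle is expected.
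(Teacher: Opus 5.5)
Your proposal is correct and follows the paper's own argument: Theorem~\ref{thm:strong-separation} gives strict negativity of the conditional expected replay, and a convex combination over positive-probability continuations can only be negative if at least one term is. Your explicit witness $z_2^\dagger=(1,1)$ also checks out: it has conditional probability $1/2$, and by Lemma~\ref{lem:half-point-identities} its replay influence is $-c_{q,\eta_1,p}(\eta_2-1)/\cosh^2(\eta_2)<0$, which is slightly more concrete than the paper's purely existential conclusion.
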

\begin{proof}
By Theorem \ref{thm:strong-separation}, the conditional expected replay is strictly negative. Because an expectation is a convex combination of its realized path values, it is mathematically impossible for every positive-probability continuation to have a non-negative replay influence. Therefore, at least one realized continuation must have a strictly negative replay influence, while the interventional target remains strictly positive.
\end{proof}

\subsection{Proofs for the Anatomy of the Gap (Dynamic Programming and Bounds)}

To rigorously prove the structural bounds discussed in Section \ref{sec:quantifying-the-gap}, we must establish an exact dynamic programming recursion that calculates the interventional target for a known, smooth adaptive model. We assume the learner state space is $\Theta\subseteq \mathbb R^d$, and all relevant update maps $U_s$, kernel masses $K_{\nu,s}$, and the terminal target $F_\nu$ are continuously differentiable.

\paragraph{Step 1: Forward State Sensitivity.}
The perturbation at round $t$ alters the learner's state, and this alteration ripples forward through time. We define the forward sensitivity of the replayed state with respect to the round-$t$ learning weight as $\Gamma$. At the step immediately following the perturbation, this is:
$$ \Gamma_{t+1}(z_{1:t}) := \partial_w U_t(\theta_t(z_{1:t-1}),z_t,1) $$
For all subsequent steps $s=t+1,\dots,T$, the sensitivity propagates via the Jacobian of the update map:
$$ \Gamma_{s+1}(z_{1:s}) := \partial_\theta U_s(\theta_s(z_{1:s-1}),z_s,1)\, \Gamma_s(z_{1:s-1}) $$

\paragraph{Step 2: Backward Continuation Value.}
We next define the baseline expected downstream reward from any given state. At the terminal step $T+1$, this is simply the target function:
$$ V_{T+1}(z_{1:T}) := F_\nu(\theta_{T+1}(z_{1:T})) $$
For prior steps $s=T,\dots,t+1$, we define this recursively by taking the expectation over the baseline next-step interaction:
$$ V_s(z_{1:s-1}) := \sum_{z_s\in\mathcal Z_s} K_{\nu,s}^{\theta_s(z_{1:s-1})}(z_s\mid z_{1:s-1})\, V_{s+1}(z_{1:s}) $$

\paragraph{Step 3: The Exact Combined Target.}
We combine the forward sensitivity and backward value to compute the exact interventional target. Define the terminal gradient sequence:
$$ G_{T+1}(z_{1:T}) := \nabla F_\nu(\theta_{T+1}(z_{1:T}))^\top \Gamma_{T+1}(z_{1:T}) $$
And for $s=T,\dots,t+1$, define the backward recursion:
$$ G_s(z_{1:s-1}) := \sum_{z_s\in\mathcal Z_s} K_{\nu,s}^{\theta_s(z_{1:s-1})}(z_s\mid z_{1:s-1})\, G_{s+1}(z_{1:s}) + \sum_{z_s\in\mathcal Z_s} \Bigl( \nabla_\theta K_{\nu,s}^{\theta}(z_s\mid z_{1:s-1}) \big|_{\theta=\theta_s(z_{1:s-1})}^\top \Gamma_s(z_{1:s-1}) \Bigr)\, V_{s+1}(z_{1:s}) $$

\begin{theorem}[Exact model-based computation]
\label{thm:model-based-exact}
Under the smoothness assumptions above, the interventional target is exactly: $\mathcal I_t^{\mathrm{int}}(F_\nu; h)=G_{t+1}(h)$.
\end{theorem}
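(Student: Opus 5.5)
The plan is a backward induction on a family of perturbed continuation values, differentiated at $\epsilon=0$. For $s\in\{t+1,\dots,T+1\}$ and a prefix $g=z_{1:s-1}$ extending $h$, define the perturbed continuation value
\[
V_s^\epsilon(z_{1:s-1}) := \sum_{z_{s:T}} \prod_{u=s}^{T} K_{\nu,u}^{\theta_u^{(w^{(t,\epsilon)})}(z_{1:u-1})}(z_u\mid z_{1:u-1})\, F_\nu\!\left(\theta_{T+1}^{(w^{(t,\epsilon)})}(z_{1:T})\right).
\]
At the terminal step this is $V_{T+1}^\epsilon(z_{1:T})=F_\nu(\theta_{T+1}^{(w^{(t,\epsilon)})}(z_{1:T}))$. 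It satisfies the one-step recursion
\[
V_s^\epsilon(g)=\sum_{z_s}K_{\nu,s}^{\theta_s^{(w^{(t,\epsilon)})}(g)}(z_s\mid g)\,V_{s+1}^\epsilon(g,z_s).
\]
By the definition of $Q_{\nu,t}^\epsilon$ as the product of the round-$s$ kernels, $\Psi_t^\epsilon(h)=V_{t+1}^\epsilon(h)$, and at $\epsilon=0$ we get $V_s^0=V_s$, the baseline continuation value of Step 2. The theorem therefore reduces to the claim
\[
\left.\frac{d}{d\epsilon}V_s^\epsilon(z_{1:s-1})\right|_{\epsilon=0}=G_s(z_{1:s-1}) \qquad \text{for all } s=t+1,\dots,T+1,
\]
which I would prove by downward induction on $s$.

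The first ingredient is the forward chain rule. I would show by forward induction that $\frac{d}{d\epsilon}\theta_s^{(w^{(t,\epsilon)})}(z_{1:s-1})\big|_{\epsilon=0}=\Gamma_s(z_{1:s-1})$ for every $s\ge t+1$.
\begin{itemize}
\item For $s\le t$ the state does not depend on $\epsilon$, since the weights before round $t$ are all 1.
\item At $s=t+1$ we have $\theta_{t+1}^{(w^{(t,\epsilon)})}=U_t(\theta_t,z_t,1+\epsilon)$, whose derivative is $\partial_w U_t(\theta_t,z_t,1)=\Gamma_{t+1}$.
\item For $s>t+1$ we have $\theta_{s}^{(w^{(t,\epsilon)})}=U_{s-1}(\theta_{s-1}^{(w^{(t,\epsilon)})},z_{s-1},1)$, so the chain rule with $\partial_\theta U_{s-1}$ reproduces the recursion defining $\Gamma_s$.
\end{itemize}
Continuous differentiability of the $U_s$ justifies each step.

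The base case $s=T+1$ is then the chain rule applied to $F_\nu$, which gives exactly $\nabla F_\nu(\theta_{T+1})^\top\Gamma_{T+1}=G_{T+1}$.

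For the inductive step, fix $s\le T$ and differentiate the one-step recursion term by term; this is legitimate because $\mathcal Z_s$ is finite. The product rule gives two sums.
\begin{itemize}
\item The first sum is $\sum_{z_s}K_{\nu,s}^{\theta_s(g)}(z_s\mid g)\,\frac{d}{d\epsilon}V_{s+1}^\epsilon(g,z_s)\big|_0$. By the induction hypothesis this equals $\sum_{z_s}K\,G_{s+1}$.
\item The second sum is $\sum_{z_s}\frac{d}{d\epsilon}K_{\nu,s}^{\theta_s^{(w^{(t,\epsilon)})}(g)}(z_s\mid g)\big|_0\,V_{s+1}(g,z_s)$. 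The chain rule through the $C^1$ kernel mass, together with the forward sensitivity, turns the derivative of $K$ into $\nabla_\theta K_{\nu,s}^{\theta}(z_s\mid g)\big|_{\theta_s(g)}^\top\Gamma_s(g)$.
\end{itemize}
Adding the two sums gives precisely the definition of $G_s$, which closes the induction. Evaluating at $s=t+1$ yields $\mathcal I_t^{\mathrm{int}}(F_\nu;h)=G_{t+1}(h)$.

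The main obstacle is bookkeeping rather than analysis: one has to check that the nested-sum definition of $V_s^\epsilon$ genuinely satisfies the one-step recursion, and that $V_{t+1}^\epsilon(h)$ coincides with $\Psi_t^\epsilon(h)$. This relies on the path-law factorization \eqref{eq:path-law} and on prefix invariance, which together let the conditional law $Q_{\nu,t}^\epsilon(\cdot\mid h)$ be written as the product of the round-$s$ kernels for $s>t$. A second point to verify is that differentiability holds at every node, including prefixes of zero baseline probability. Because the recursion runs over all prefixes, not just positive-probability ones, the $C^1$ assumption suffices and no division by probabilities ever occurs.
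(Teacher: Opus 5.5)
Your proposal is correct and follows the paper's proof essentially step for step. Both arguments use a forward induction identifying $\left.\frac{d}{d\epsilon}\theta_s^{(w^{(t,\epsilon)})}\right|_{\epsilon=0}$ with $\Gamma_s$, perturbed continuation values $V_s^\epsilon$ with $V_{t+1}^\epsilon(h)=\Psi_t^\epsilon(h)$ and $V_s^0=V_s$, and a backward induction via the product and chain rules showing the derivative equals $G_s$. Your version also spells out bookkeeping the paper leaves as ``one can verify'': the nested-sum definition, the kernel-product form of $Q_{\nu,t}^\epsilon$ via prefix invariance, and the fact that no division by prefix probabilities occurs.
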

\begin{proof}
Let $\theta_s^\epsilon(g) := \theta_s^{(w^{(t,\epsilon)})}(g)$ be the perturbed replayed state. By induction on the replay dynamics, one can verify that its derivative evaluated at $\epsilon=0$ is exactly the forward sensitivity $\Gamma_s(g)$. Next, we define the perturbed continuation value $V_s^\epsilon(g)$ by substituting $\theta^\epsilon$ into the environment kernels and terminal target. By definition, $V_{t+1}^\epsilon(h) = \Psi_t^\epsilon(h)$, meaning the interventional target is exactly $\left. \frac{d}{d\epsilon} V_{t+1}^\epsilon(h) \right|_{\epsilon=0}$.

Let $H_s(g) := \left. \frac{d}{d\epsilon} V_s^\epsilon(g) \right|_{\epsilon=0}$. We show by backward induction that $H_s = G_s$. At $T+1$, the chain rule gives $H_{T+1} = \nabla F_\nu^\top \Gamma_{T+1} = G_{T+1}$. For step $s$, applying the product and chain rules to the recursive definition of $V_s^\epsilon$ yields exactly the two-term sum in the definition of $G_s(z_{1:s-1})$. The first term captures the fixed-law value, and the second captures the shift in the interaction law. Thus, $H_{t+1}(h) = G_{t+1}(h)$.
\end{proof}

\paragraph{Interlude: Future-law score form.}
When the perturbed conditional future law is differentiable and its support is locally stable around \(\epsilon=0\), the future-law correction can also be written with a score. For histories in the support of the baseline conditional future law given \(h\), define
\begin{equation}
S_{\nu,t}(z_{1:T})
:=
\sum_{s=t+1}^{T}
\nabla_\theta
\log K_{\nu,s}^{\theta}(z_s\mid z_{1:s-1})
\big|_{\theta=\theta_s(z_{1:s-1})}^\top
\Gamma_s(z_{1:s-1}).
\label{eq:future-score}
\end{equation}
Setting the score to zero on zero-probability continuations, the replay--intervention gap also admits the centered representation
\begin{equation}
\mathcal I_t^{\mathrm{int}}(F_\nu; h)
-
\mathbb E_{\mathbb P_\nu}
\!\left[
\mathcal I_t^{\mathrm{rep}}(F_\nu; Z_{1:T})
\mid Z_{1:t}=h
\right]
=
\mathbb E_{\mathbb P_\nu}
\!\left[
\bigl(
F_\nu(\theta_{T+1}(Z_{1:T}))
-
F_\nu(\theta_{t+1}(h))
\bigr)
S_{\nu,t}(Z_{1:T})
\mid Z_{1:t}=h
\right].
\label{eq:score-centered}
\end{equation}

\paragraph{Step 4: The Stagewise Decomposition.}
We now isolate the specific contribution of each future round to the total adaptive gap. For each future round $s\in\{t+1,\dots,T\}$, define:
$$ \Xi_s(g) := \sum_{z_s\in\mathcal Z_s} \Bigl( \nabla_\theta K_{\nu,s}^{\theta}(z_s\mid g)\big|_{\theta=\theta_s(g)}^\top \Gamma_s(g) \Bigr) \bigl( V_{s+1}(g,z_s)-V_s(g) \bigr) $$

\begin{theorem}[Stagewise decomposition of the replay--intervention gap]
\label{thm:stagewise-gap}
For every realized prefix $h=z_{1:t}$ with positive baseline probability:
$$ \mathcal I_t^{\mathrm{int}}(F_\nu; h) - \mathbb E_{\mathbb P_\nu} \!\left[ \mathcal I_t^{\mathrm{rep}}(F_\nu; Z_{1:T}) \mid Z_{1:t}=h \right] = \sum_{s=t+1}^{T} \mathbb E_{\mathbb P_\nu} \!\left[ \Xi_s(Z_{1:s-1}) \mid Z_{1:t}=h \right] $$
\end{theorem}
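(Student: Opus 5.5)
The plan is to start from the exact model-based recursion of Theorem~\ref{thm:model-based-exact}, which gives $\mathcal I_t^{\mathrm{int}}(F_\nu;h)=G_{t+1}(h)$. We then unroll the backward recursion for $G_s$ down to the terminal layer. Define the ``kernel-tilt'' term
\[
J_s(g):=\sum_{z_s\in\mathcal Z_s}\Bigl(\nabla_\theta K_{\nu,s}^{\theta}(z_s\mid g)\big|_{\theta=\theta_s(g)}^\top\Gamma_s(g)\Bigr)V_{s+1}(g,z_s).
\]
The recursion then reads $G_s(g)=\mathbb E_{\mathbb P_\nu}[G_{s+1}(Z_{1:s})\mid Z_{1:s-1}=g]+J_s(g)$. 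Iterating from $s=t+1$ to $s=T$ uses the tower property of the baseline conditional law; the baseline path law \eqref{eq:path-law} is exactly the product of the kernels appearing in the recursion. This gives
\[
G_{t+1}(h)=\mathbb E_{\mathbb P_\nu}\!\left[G_{T+1}(Z_{1:T})\mid Z_{1:t}=h\right]+\sum_{s=t+1}^{T}\mathbb E_{\mathbb P_\nu}\!\left[J_s(Z_{1:s-1})\mid Z_{1:t}=h\right].
\]

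Next we identify the first term with conditional expected replay. Along a fixed log, the derivative of the perturbed replayed state at $\epsilon=0$ is $\Gamma_s$; this is the forward-sensitivity induction already invoked in the proof of Theorem~\ref{thm:model-based-exact}. By the chain rule, $\mathcal I_t^{\mathrm{rep}}(F_\nu;z_{1:T})=\nabla F_\nu(\theta_{T+1}(z_{1:T}))^\top\Gamma_{T+1}(z_{1:T})=G_{T+1}(z_{1:T})$. So the first term is exactly $\mathbb E_{\mathbb P_\nu}[\mathcal I_t^{\mathrm{rep}}(F_\nu;Z_{1:T})\mid Z_{1:t}=h]$.

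Finally we replace $J_s$ by $\Xi_s$. This is the same centering trick as in Proposition~\ref{prop:centered-future-law}, applied stagewise. Since $\sum_{z_s}K_{\nu,s}^{\theta}(z_s\mid g)=1$ for all $\theta$, differentiating gives $\sum_{z_s}\nabla_\theta K_{\nu,s}^{\theta}(z_s\mid g)=0$. Hence subtracting the $z_s$-independent constant $V_s(g)$ from $V_{s+1}(g,z_s)$ inside $J_s$ does not change its value, so $J_s(g)=\Xi_s(g)$. Substituting this into the unrolled identity yields the claim.

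The main obstacle is the careful bookkeeping in the unrolling step. We must check that conditioning on $Z_{1:t}=h$ with $\mathbb P_\nu(h)>0$ makes each conditional expectation well defined. The mixed terms must be integrated against the baseline law and not a perturbed one. Zero-probability intermediate prefixes are harmless because they carry zero weight. Everything is a finite sum, so no interchange issues arise beyond the $C^1$ assumptions already in place.
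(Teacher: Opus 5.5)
Your proposal is correct and is essentially the paper's proof. Both start from $\mathcal I_t^{\mathrm{int}}(F_\nu;h)=G_{t+1}(h)$, center the kernel-gradient term by subtracting $V_s(g)$ using $\sum_{z_s}\nabla_\theta K_{\nu,s}^{\theta}(z_s\mid g)=0$, and unroll against the baseline kernels via the tower property. The only cosmetic difference is ordering: the paper first subtracts the replay recursion $M_s(g)=\mathbb E_{\mathbb P_\nu}[\mathcal I_t^{\mathrm{rep}}(F_\nu;Z_{1:T})\mid Z_{1:s-1}=g]$ and unrolls the gap-to-go $D_s=G_s-M_s$ from $D_{T+1}=0$, whereas you unroll $G_s$ directly and identify the terminal layer $G_{T+1}$ with the replay influence afterward.
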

\begin{proof}
Let $M_s(g) := \mathbb E_{\mathbb P_\nu} [ \mathcal I_t^{\mathrm{rep}}(F_\nu;Z_{1:T}) \mid Z_{1:s-1}=g ]$ be the conditional expected replay influence. At $T+1$, $M_{T+1}(z_{1:T}) = G_{T+1}(z_{1:T})$. For prior steps, $M_s(g) = \sum_{z_s} K_{\nu,s}^{\theta_s(g)}(z_s\mid g) M_{s+1}(g,z_s)$. 

Define the gap-to-go $D_s(g) := G_s(g)-M_s(g)$. Subtracting the recursion for $M_s$ from $G_s$ and using the identity $\sum_{z_s} \nabla_\theta K_{\nu,s}^{\theta}(z_s\mid g)^\top \Gamma_s(g) = 0$ to center the value term yields:
$$ D_s(g) = \sum_{z_s\in\mathcal Z_s} K_{\nu,s}^{\theta_s(g)}(z_s\mid g)\, D_{s+1}(g,z_s) + \Xi_s(g) $$
Unrolling this recursion from $s=T+1$ (where $D_{T+1}=0$) down to $t+1$ yields $D_{t+1}(h) = \mathbb E_{\mathbb P_\nu} [ \sum_{s=t+1}^T \Xi_s \mid Z_{1:t}=h ]$, proving the theorem.
\end{proof}

\paragraph{Step 5: Quantitative Bounds.}
Theorem \ref{thm:stagewise-gap} mathematically formalizes the "three gears" discussed in the main text. A future round only contributes to the gap if the forward sensitivity $\Gamma_s$ is non-zero (model propagation), the kernel gradient $\nabla_\theta K$ is non-zero (environment sensitivity), and the downstream values $V_{s+1}$ vary across outcomes (value oscillation). We can bound this by defining local metrics for the environment's total variation sensitivity $L_s^{\mathrm{TV}}(g)$ and the value oscillation $\operatorname{osc}_s(g)$:
$$ L_s^{\mathrm{TV}}(g) := \sup_{\|u\|=1} \frac12 \sum_{z_s\in\mathcal Z_s} \left| \nabla_\theta K_{\nu,s}^{\theta}(z_s\mid g)\big|_{\theta=\theta_s(g)}^\top u \right| \quad \text{and} \quad \operatorname{osc}_s(g) := \max_{z_s} V_{s+1}(g,z_s) - \min_{z_s} V_{s+1}(g,z_s) $$

\begin{corollary}[General oscillation bound]
\label{cor:oscillation-bound}
$$ \left| \mathcal I_t^{\mathrm{int}}(F_\nu; h) - \mathbb E_{\mathbb P_\nu} \!\left[ \mathcal I_t^{\mathrm{rep}}(F_\nu; Z_{1:T}) \mid Z_{1:t}=h \right] \right| \le \sum_{s=t+1}^{T} \mathbb E_{\mathbb P_\nu} \!\left[ L_s^{\mathrm{TV}}(Z_{1:s-1})\, \|\Gamma_s(Z_{1:s-1})\|\, \operatorname{osc}_s(Z_{1:s-1}) \mid Z_{1:t}=h \right] $$
\end{corollary}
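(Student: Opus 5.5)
The plan is to start from the stagewise identity of Theorem~\ref{thm:stagewise-gap}, which writes the gap as $\sum_{s=t+1}^{T}\mathbb E_{\mathbb P_\nu}[\Xi_s(Z_{1:s-1})\mid Z_{1:t}=h]$. By the triangle inequality for sums and Jensen's inequality for conditional expectations on a finite space, the absolute gap is at most $\sum_{s}\mathbb E_{\mathbb P_\nu}[|\Xi_s(Z_{1:s-1})|\mid Z_{1:t}=h]$. It then suffices to prove, for each round $s$ and each history $g$ of positive baseline probability,
\[
|\Xi_s(g)|\le L_s^{\mathrm{TV}}(g)\,\|\Gamma_s(g)\|\,\operatorname{osc}_s(g).
\]
This bound follows from the elementary fact that pairing a zero-sum vector with a function is controlled by half its $\ell^1$ norm times the function's oscillation.

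For the pointwise bound, fix $g$ and write $v(z_s):=\nabla_\theta K_{\nu,s}^{\theta}(z_s\mid g)\big|_{\theta=\theta_s(g)}^\top\Gamma_s(g)$. If $\Gamma_s(g)=0$, both sides vanish. Otherwise, set $u=\Gamma_s(g)/\|\Gamma_s(g)\|$. Then
\[
\tfrac12\sum_{z_s}|v(z_s)|\le \|\Gamma_s(g)\|\,L_s^{\mathrm{TV}}(g)
\]
by the definition of $L_s^{\mathrm{TV}}$ as a supremum over unit vectors.

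Next, since $\sum_{z_s}K^\theta_{\nu,s}(z_s\mid g)=1$ for every $\theta$, differentiating gives $\sum_{z_s}v(z_s)=0$. This is the same centering identity used in the proof of Theorem~\ref{thm:stagewise-gap}. Because of it, we may replace $V_s(g)$ in $\Xi_s(g)$ by any constant $m$. We choose $m=\tfrac12(\max_{z_s}V_{s+1}(g,z_s)+\min_{z_s}V_{s+1}(g,z_s))$. Then every $|V_{s+1}(g,z_s)-m|\le \tfrac12\operatorname{osc}_s(g)$, which gives
\[
|\Xi_s(g)|\le \tfrac12\operatorname{osc}_s(g)\sum_{z_s}|v(z_s)|\le L_s^{\mathrm{TV}}(g)\,\|\Gamma_s(g)\|\,\operatorname{osc}_s(g).
\]

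Substituting this bound back into the sum completes the corollary.

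The only delicate point is the factor of $\tfrac12$. Bounding $|V_{s+1}-V_s|$ directly by $\operatorname{osc}_s$ would lose a factor of two against the $\tfrac12$ built into $L_s^{\mathrm{TV}}$. The re-centering at the midpoint, which is legitimate because $\sum_{z_s}v(z_s)=0$, is what recovers the stated constant. I expect this step to be the main thing to get right. Everything else is bookkeeping with finite sums, using the differentiability assumptions already in force for Theorem~\ref{thm:model-based-exact}.
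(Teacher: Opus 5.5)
Your proposal is correct and matches the paper's proof step for step. You apply the triangle inequality to the stagewise identity of Theorem~\ref{thm:stagewise-gap}, then bound $|\Xi_s(g)|$ pointwise by normalizing $\Gamma_s(g)$, using $\sum_{z_s}\nabla_\theta K_{\nu,s}^{\theta}(z_s\mid g)=0$ to re-center $V_{s+1}$ at the midpoint of its range, and reading off $L_s^{\mathrm{TV}}$. Your observation that midpoint centering, rather than centering at $V_s(g)$, is what makes the constant line up with the $\tfrac12$ built into $L_s^{\mathrm{TV}}$ is exactly the point of the paper's argument.
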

\begin{proof}
Centering $V_{s+1}$ around its midpoint $m_g = (\max b_g + \min b_g)/2$ ensures $|V_{s+1} - m_g| \le \frac{1}{2}\operatorname{osc}_s(g)$. Because the kernel gradients sum to zero, shifting by a constant does not change the sum. Factoring out $\operatorname{osc}_s(g)$ and the norm $\|\Gamma_s(g)\|$ leaves the definition of $L_s^{\mathrm{TV}}(g)$. Applying the triangle inequality to Theorem \ref{thm:stagewise-gap} yields the bound.
\end{proof}

\begin{corollary}[Uniform propagation bound]
\label{cor:uniform-propagation}
Assume deterministic bounds: the initial parameter shift $\|\partial_w U_t\| \le \bar B_t$, the update operator norm $\|\partial_\theta U_u\|_{\mathrm{op}} \le \bar\rho_u$, the environment sensitivity $L_s^{\mathrm{TV}} \le \bar L_s$, and the value oscillation $\operatorname{osc}_s \le \bar\Delta_s$. Then the gap is bounded by:
$$ \bar B_t \sum_{s=t+1}^{T} \bar L_s\,\bar\Delta_s \prod_{u=t+1}^{s-1}\bar\rho_u $$
\end{corollary}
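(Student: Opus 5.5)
Starting from Corollary~\ref{cor:oscillation-bound}, the work is to bound each stagewise factor uniformly. That corollary already gives
\[
\bigl|\mathcal I_t^{\mathrm{int}}(F_\nu;h)-\mathbb E_{\mathbb P_\nu}[\mathcal I_t^{\mathrm{rep}}(F_\nu;Z_{1:T})\mid Z_{1:t}=h]\bigr|
\le \sum_{s=t+1}^{T}\mathbb E_{\mathbb P_\nu}\bigl[L_s^{\mathrm{TV}}\,\|\Gamma_s\|\,\operatorname{osc}_s\mid Z_{1:t}=h\bigr].
\]
Under the deterministic hypotheses, $L_s^{\mathrm{TV}}\le\bar L_s$ and $\operatorname{osc}_s\le\bar\Delta_s$ hold pointwise on every history $g$. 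These two factors therefore come out of the conditional expectation directly. What remains is a pathwise bound on $\|\Gamma_s(g)\|$.

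For $\Gamma_s$ we use the forward recursion from Step~1 of the dynamic programming construction. The base case is $\Gamma_{t+1}=\partial_w U_t(\theta_t,z_t,1)$, so $\|\Gamma_{t+1}\|\le\bar B_t$. The propagation step is $\Gamma_{u+1}=\partial_\theta U_u(\theta_u,z_u,1)\,\Gamma_u$ for $u\ge t+1$. Submultiplicativity of the operator norm gives $\|\Gamma_{u+1}\|\le\bar\rho_u\|\Gamma_u\|$. A short induction on $s$ then yields
\[
\|\Gamma_s(g)\|\le \bar B_t\prod_{u=t+1}^{s-1}\bar\rho_u \quad\text{for every } g\in\mathcal H_{s-1}.
\]
The product is empty, and hence equal to $1$, when $s=t+1$.

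Substituting these three pointwise bounds, each summand becomes a deterministic constant. Its conditional expectation is therefore the constant itself, because $\mathbb E[\,\cdot\mid Z_{1:t}=h]$ is an average under a probability law. Summing over $s$ and factoring out $\bar B_t$ gives the claimed bound $\bar B_t\sum_{s}\bar L_s\bar\Delta_s\prod_{u=t+1}^{s-1}\bar\rho_u$.

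The only point needing care is the quantifier scope of the hypotheses. The bounds must hold at the baseline replayed states $\theta_u(g)$ along every history reachable from $h$ with positive baseline probability, and the norms must be compatible: the vector norm on $\Theta$ and its induced operator norm, matching the $\|u\|=1$ used in $L_s^{\mathrm{TV}}$. I would state these conventions explicitly and then treat the induction as routine. The proof is otherwise a direct chaining of the existing corollary with the Jacobian product bound, and I do not expect a genuine obstacle.
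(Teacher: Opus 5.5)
Your proposal is correct and matches the paper's proof. Both iterate the forward recursion for $\Gamma_s$ using submultiplicativity to get $\|\Gamma_s\|\le \bar B_t\prod_{u=t+1}^{s-1}\bar\rho_u$, then substitute this together with the pointwise bounds $\bar L_s,\bar\Delta_s$ into Corollary~\ref{cor:oscillation-bound}, so each conditional expectation reduces to a constant. Your remark that the hypotheses only need to hold on baseline-reachable histories, with the operator norm induced by the vector norm used in $L_s^{\mathrm{TV}}$, is a useful clarification that the paper leaves implicit.
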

\begin{proof}
By the recursive definition of $\Gamma_s$ in Step 1, taking norms gives $\|\Gamma_{s+1}\| \le \bar\rho_s \|\Gamma_s\|$. Iterating this from $t+1$ yields $\|\Gamma_s\| \le \bar B_t \prod_{u=t+1}^{s-1}\bar\rho_u$. Substituting this and the remaining deterministic bounds into Corollary \ref{cor:oscillation-bound} allows us to drop the conditional expectation, yielding the stated uniform bound.
\end{proof}

\subsection{Proofs for the Stable Small-Step Regime (Theorem \ref{thm:bandit-locality})}

To formally prove that a small learning rate mathematically neutralizes the adaptive gap (Theorem \ref{thm:bandit-locality}), we must bound the model propagation and environment sensitivity terms specifically for the two-arm Bernoulli bandit.

\paragraph{Step 1: Replay Sensitivity on a Fixed Log.}
Fix a full realized log $z_{1:T}=((a_1,r_1),\dots,(a_T,r_T))$. Let $\mathcal U_s(p,a,r,w)$ be the one-step policy update map that takes the current probability $p$, the action $a$, the reward $r$, and the learning weight $w$, and outputs the next step's policy via the entropic mirror-descent update defined in Equation \ref{eq:bandit-logit}.

We define the replay sensitivity of the policy at any future time $s > t$ with respect to the round-$t$ perturbation as $D_{s,t}$:
$$ D_{s,t}(z_{1:T}) := \left. \frac{d}{d\epsilon} p_s^{(w^{(t,\epsilon)})}(z_{1:T}) \right|_{\epsilon=0} $$
Because the replayed policy up to time $t$ is independent of the perturbation, $D_{s,t}=0$ for all $s \le t$. At $t+1$, the sensitivity is the direct derivative of the update map: $D_{t+1,t}(z_{1:T}) = \partial_w \mathcal U_t(p_t(z_{1:T}),a_t,r_t,1)$. For all subsequent steps $s > t$, the sensitivity propagates via the chain rule:
$$ D_{s+1,t}(z_{1:T}) = \partial_p \mathcal U_s(p_s(z_{1:T}),a_s,r_s,1)\, D_{s,t}(z_{1:T}) $$

\paragraph{Step 2: The Bandit Future-Law Score.}
We must translate the general environment score $S_{\nu,t}$ into the specific mechanics of the bandit. In this model, the context and reward distributions do not depend on the policy. The only term dependent on the learner's state is the action probability $\pi_p(a_s)$. 

Evaluating the log-derivative $\frac{\partial}{\partial p}\log \pi_p(a_s)$ yields $1/p$ if $a_s=1$ and $-1/(1-p)$ if $a_s=0$. Both cases simplify elegantly to $\frac{a_s-p}{p(1-p)}$. Substituting this and our replay sensitivity $D_{s,t}$ into the general score formula from Equation \ref{eq:future-score} yields the exact bandit score:
$$ S_{\mu,t}(z_{1:T}) = \sum_{s=t+1}^{T} \frac{a_s-p_s(z_{1:T})}{p_s(z_{1:T})(1-p_s(z_{1:T}))}\, D_{s,t}(z_{1:T}) $$

\paragraph{Step 3: Bounding Propagation on the Logit Scale.}
We now impose the stable small-step conditions: the baseline policy is strictly bounded away from the edges by some constant $c \in (0, 1/2)$ such that $c \le p_s \le 1-c$, and the learning rates are bounded such that $\eta_s \le \frac{c}{1-c}$. 

To track propagation cleanly, we evaluate the sensitivity on the logit scale: $X_{s,t} := \left. \frac{d}{d\epsilon} \text{logit}(p_s^\epsilon) \right|_{\epsilon=0}$. Because $p_s = \sigma(x_s)$, the chain rule dictates that $D_{s,t} = p_s(1-p_s)X_{s,t}$.

At the perturbed round, the direct logit update derivative gives $X_{t+1,t} = \eta_t r_t \left( \frac{\mathbf 1\{a_t=1\}}{p_t} - \frac{\mathbf 1\{a_t=0\}}{1-p_t} \right)$. Because $r_t \in \{0,1\}$ and $p_t \ge c$, we can strictly bound the initial shock: $|X_{t+1,t}| \le \eta_t/c$.

For all subsequent rounds $s > t$, if $r_s=0$, the update is inactive and $X_{s+1,t} = X_{s,t}$. If $r_s=1$, differentiating the logit update reveals that the new sensitivity is scaled by a contraction factor. For example, if $a_s=1$, $X_{s+1,t} = \left( 1-\eta_s\frac{1-p_s}{p_s} \right)X_{s,t}$. Under our stable regime bounds, $0 \le \eta_s\frac{1-p_s}{p_s} \le \frac{c}{1-c} \cdot \frac{1-c}{c} = 1$. Because the scaling factor is bounded between $0$ and $1$, the magnitude of the sensitivity never grows: $|X_{s+1,t}| \le |X_{s,t}|$.

By induction, $|X_{s,t}| \le \eta_t/c$ for all future steps. Converting back from the logit scale using the maximum variance $p_s(1-p_s) \le 1/4$, we obtain a uniform bound on the model propagation:
$$ |D_{s,t}| \le \frac{\eta_t}{4c} \qquad \text{for all } s > t $$

\paragraph{Step 4: Proof of Theorem \ref{thm:bandit-locality} (Replay in a stable regime).}
\begin{proof}
We synthesize the bounds to find the maximum gap. First, we bound the future-law score. Since $\left| \frac{a_s-p_s}{p_s(1-p_s)} \right| \le \frac{1}{c}$, substituting our uniform bound for $D_{s,t}$ into the score equation yields:
$$ |S_{\mu,t}(z_{1:T})| \le \sum_{s=t+1}^{T} \frac{1}{c} \cdot \frac{\eta_t}{4c} = \frac{T-t}{4c^2}\eta_t $$

Next, we bound the value oscillation, which depends on the total possible movement of the baseline policy. Differentiating the update map $\mathcal U_u$ with respect to the learning weight reveals a maximum one-step policy shift of $\eta_u/(4c)$. Summing this over all remaining steps bounds the total trajectory divergence:
$$ |p_{T+1}(z_{1:T})-p_{t+1}(h)| \le \frac{1}{4c}\sum_{u=t+1}^{T}\eta_u $$
Assuming the target function $F$ has a bounded derivative $L_F$ over the interval $[c, 1-c]$, the maximum oscillation in the terminal target is bounded by $L_F$ times this trajectory divergence.

Finally, we apply the centered covariance identity (Equation \ref{eq:score-centered}). The gap between the interventional target and replay is the expected product of the target oscillation and the score. Taking the product of our absolute bounds yields the final, deterministic limit:
$$ \left| \mathcal I_t^{\mathrm{int}}(F; h) - \mathbb E_{\mathbb P_\mu} \!\left[ \mathcal I_t^{\mathrm{rep}}(F; Z_{1:T}) \mid Z_{1:t}=h \right] \right| \le \frac{L_F (T-t)}{16c^3}\, \eta_t \sum_{u=t+1}^{T}\eta_u $$
Because this gap is bounded by the product of the initial learning rate $\eta_t$ and the sum of future learning rates $\sum \eta_u$, the total error is strictly $\mathcal O(\eta^2)$.
\end{proof}

\section{Proofs for Section~\ref{sec:controlled-approx}}
\label{app:proofs-controlled-approx}

\subsection{Proof of Theorem~\ref{thm:depth-L-recollection}}

\begin{proof}
Fix \(t\in\{1,\dots,T\}\), fix a realized prefix \(h=z_{1:t}\in\mathcal H_t\) with \(\mathbb P_\nu(h)>0\), and fix \(L\in\{0,\dots,T-t\}\). Write
\[
m:=t+L.
\]
For notational convenience, whenever \(g\) is a prefix of length at least \(u-1\), write
\[
\theta_u^\epsilon(g):=\theta_u^{(w^{(t,\epsilon)})}(g).
\]

\paragraph{Step 1: A backward recursion for the depth-\(L\) target.}
For each \(s\in\{t+1,\dots,T+1\}\) and each prefix \(g=z_{1:s-1}\in\mathcal H_{s-1}\), define recursively
\[
W_{T+1}^{\epsilon,L}(z_{1:T})
:=
F_\nu\!\left(\theta_{T+1}^\epsilon(z_{1:T})\right),
\]
and for \(s=T,T-1,\dots,t+1\),
\begin{equation}
W_s^{\epsilon,L}(g)
:=
\begin{cases}
\displaystyle
\sum_{z_s\in\mathcal Z_s}
K_{\nu,s}^{\theta_s^\epsilon(g)}(z_s\mid g)\,
W_{s+1}^{\epsilon,L}(g,z_s),
& s\le m,\\[3ex]
\displaystyle
\sum_{z_s\in\mathcal Z_s}
K_{\nu,s}^{\theta_s(g)}(z_s\mid g)\,
W_{s+1}^{\epsilon,L}(g,z_s),
& s\ge m+1.
\end{cases}
\label{eq:proof-depth-L-recursion}
\end{equation}
Unrolling the recursion shows that
\[
W_{t+1}^{\epsilon,L}(h)=\Psi_t^{\epsilon,\mathrm{tr},L}(h).
\]
Indeed, for the first \(L\) future rounds the recursion uses the perturbed kernels appearing in \eqref{eq:depth-L-mixed-law}, and for the remaining future rounds it uses the baseline kernels appearing in \eqref{eq:depth-L-mixed-law}; the terminal quantity is exactly the perturbed replay value in \eqref{eq:depth-L-target}.

\paragraph{Step 2: Baseline reduction.}
We claim that for every \(s\in\{t+1,\dots,T+1\}\) and every prefix \(g\in\mathcal H_{s-1}\),
\[
W_s^{0,L}(g)=V_s(g),
\]
where \(V_s\) is the baseline continuation-value recursion introduced earlier.

This is immediate by backward induction on \(s\). At \(s=T+1\),
\[
W_{T+1}^{0,L}(z_{1:T})
=
F_\nu\!\left(\theta_{T+1}(z_{1:T})\right)
=
V_{T+1}(z_{1:T}).
\]
Now suppose \(W_{s+1}^{0,L}=V_{s+1}\). Since \(\theta_s^0(g)=\theta_s(g)\), both branches of \eqref{eq:proof-depth-L-recursion} reduce to
\[
W_s^{0,L}(g)
=
\sum_{z_s\in\mathcal Z_s}
K_{\nu,s}^{\theta_s(g)}(z_s\mid g)\,
V_{s+1}(g,z_s)
=
V_s(g).
\]
So the claim follows.

\paragraph{Step 3: Differentiate the mixed recursion.}
Define
\[
H_s^{L}(g)
:=
\left.
\frac{d}{d\epsilon}
W_s^{\epsilon,L}(g)
\right|_{\epsilon=0}.
\]
Because all interaction spaces are finite and the update maps, kernel masses, and target are continuously differentiable, all derivatives below may be passed through finite sums.

We first record the derivative of the replayed learner state. Exactly as in the forward-sensitivity induction from Theorem~\ref{thm:model-based-exact},
\begin{equation}
\left.
\frac{d}{d\epsilon}
\theta_s^\epsilon(g)
\right|_{\epsilon=0}
=
\Gamma_s(g)
\qquad
\text{for every } s\in\{t+1,\dots,T+1\}.
\label{eq:proof-depth-L-forward-sensitivity}
\end{equation}

At the terminal step,
\[
H_{T+1}^{L}(z_{1:T})
=
\nabla F_\nu\!\left(\theta_{T+1}(z_{1:T})\right)^\top
\Gamma_{T+1}(z_{1:T})
=
G_{T+1}(z_{1:T}).
\]

Now fix \(s\in\{t+1,\dots,T\}\).

If \(s\ge m+1\), then the kernel in \eqref{eq:proof-depth-L-recursion} is frozen at the baseline law and does not depend on \(\epsilon\). Therefore
\begin{equation}
H_s^{L}(g)
=
\sum_{z_s\in\mathcal Z_s}
K_{\nu,s}^{\theta_s(g)}(z_s\mid g)\,
H_{s+1}^{L}(g,z_s).
\label{eq:proof-depth-L-postswitch}
\end{equation}

If \(s\le m\), then the kernel depends on \(\epsilon\) through the perturbed learner state. By the product rule,
\begin{align}
H_s^{L}(g)
&=
\sum_{z_s\in\mathcal Z_s}
\left.
\frac{d}{d\epsilon}
\Bigl[
K_{\nu,s}^{\theta_s^\epsilon(g)}(z_s\mid g)\,
W_{s+1}^{\epsilon,L}(g,z_s)
\Bigr]
\right|_{\epsilon=0}
\notag\\
&=
\sum_{z_s\in\mathcal Z_s}
\left.
\frac{d}{d\epsilon}
K_{\nu,s}^{\theta_s^\epsilon(g)}(z_s\mid g)
\right|_{\epsilon=0}
W_{s+1}^{0,L}(g,z_s)
+
\sum_{z_s\in\mathcal Z_s}
K_{\nu,s}^{\theta_s(g)}(z_s\mid g)\,
H_{s+1}^{L}(g,z_s).
\label{eq:proof-depth-L-prerule}
\end{align}
Using \eqref{eq:proof-depth-L-forward-sensitivity}, the chain rule, and Step 2,
\[
\left.
\frac{d}{d\epsilon}
K_{\nu,s}^{\theta_s^\epsilon(g)}(z_s\mid g)
\right|_{\epsilon=0}
=
\nabla_\theta K_{\nu,s}^{\theta}(z_s\mid g)\big|_{\theta=\theta_s(g)}^\top
\Gamma_s(g),
\qquad
W_{s+1}^{0,L}(g,z_s)=V_{s+1}(g,z_s).
\]
So \eqref{eq:proof-depth-L-prerule} becomes
\begin{align}
H_s^{L}(g)
&=
\sum_{z_s\in\mathcal Z_s}
K_{\nu,s}^{\theta_s(g)}(z_s\mid g)\,
H_{s+1}^{L}(g,z_s)
\notag\\
&\quad+
\sum_{z_s\in\mathcal Z_s}
\Bigl(
\nabla_\theta K_{\nu,s}^{\theta}(z_s\mid g)\big|_{\theta=\theta_s(g)}^\top
\Gamma_s(g)
\Bigr)
V_{s+1}(g,z_s).
\label{eq:proof-depth-L-preswitch}
\end{align}

\paragraph{Step 4: After the switch, the mixed recursion coincides with replay.}
Recall the replay-side backward recursion
\[
M_{T+1}(z_{1:T}) := G_{T+1}(z_{1:T}),
\qquad
M_s(g)
:=
\sum_{z_s\in\mathcal Z_s}
K_{\nu,s}^{\theta_s(g)}(z_s\mid g)\,
M_{s+1}(g,z_s),
\]
introduced in the proof of Theorem~\ref{thm:stagewise-gap}. Whenever \(\mathbb P_\nu(g)>0\), this quantity equals
\[
M_s(g)
=
\mathbb E_{\mathbb P_\nu}
\!\left[
\mathcal I_t^{\mathrm{rep}}(F_\nu;Z_{1:T})
\mid Z_{1:s-1}=g
\right].
\]

Comparing this recursion with \eqref{eq:proof-depth-L-postswitch}, together with the common terminal condition \(H_{T+1}^{L}=M_{T+1}=G_{T+1}\), shows by backward induction that
\begin{equation}
H_s^{L}(g)=M_s(g)
\qquad
\text{for every } s\in\{m+1,\dots,T+1\}.
\label{eq:proof-depth-L-equals-replay-after-switch}
\end{equation}

If \(L=0\), then \(m=t\), so \eqref{eq:proof-depth-L-equals-replay-after-switch} already yields
\[
\mathcal I_t^{\mathrm{tr},0}(F_\nu;h)
=
H_{t+1}^{0}(h)
=
M_{t+1}(h)
=
\mathbb E_{\mathbb P_\nu}
\!\left[
\mathcal I_t^{\mathrm{rep}}(F_\nu;Z_{1:T})
\mid Z_{1:t}=h
\right].
\]
This is exactly \eqref{eq:depth-L-identity} with an empty sum. The tail identity \eqref{eq:depth-L-tail} then follows immediately from Theorem~\ref{thm:stagewise-gap}. It remains to treat the case \(L\ge 1\).

\paragraph{Step 5: Before the switch, the remaining gap is a truncated stagewise sum.}
Assume now that \(L\ge 1\), so \(m\ge t+1\). For \(s\in\{t+1,\dots,m\}\), define
\[
E_s^{L}(g):=H_s^{L}(g)-M_s(g).
\]
By \eqref{eq:proof-depth-L-equals-replay-after-switch},
\[
E_{m+1}^{L}(g)=0
\qquad
\text{for every } g\in\mathcal H_m.
\]

Subtracting the replay recursion for \(M_s\) from \eqref{eq:proof-depth-L-preswitch} yields
\begin{align}
E_s^{L}(g)
&=
\sum_{z_s\in\mathcal Z_s}
K_{\nu,s}^{\theta_s(g)}(z_s\mid g)\,
E_{s+1}^{L}(g,z_s)
\notag\\
&\quad+
\sum_{z_s\in\mathcal Z_s}
\Bigl(
\nabla_\theta K_{\nu,s}^{\theta}(z_s\mid g)\big|_{\theta=\theta_s(g)}^\top
\Gamma_s(g)
\Bigr)
V_{s+1}(g,z_s).
\label{eq:proof-depth-L-gap-raw}
\end{align}

We now center the second sum. Since \(K_{\nu,s}^{\theta}(\cdot\mid g)\) is a probability distribution for every \(\theta\), we have
\[
\sum_{z_s\in\mathcal Z_s}
K_{\nu,s}^{\theta}(z_s\mid g)=1.
\]
Differentiating with respect to \(\theta\) and evaluating at \(\theta=\theta_s(g)\) gives
\[
\sum_{z_s\in\mathcal Z_s}
\nabla_\theta K_{\nu,s}^{\theta}(z_s\mid g)\big|_{\theta=\theta_s(g)}=0.
\]
Therefore
\[
\sum_{z_s\in\mathcal Z_s}
\Bigl(
\nabla_\theta K_{\nu,s}^{\theta}(z_s\mid g)\big|_{\theta=\theta_s(g)}^\top
\Gamma_s(g)
\Bigr)
V_s(g)
=0.
\]
Subtracting this zero term from the second line of \eqref{eq:proof-depth-L-gap-raw} yields
\[
\sum_{z_s\in\mathcal Z_s}
\Bigl(
\nabla_\theta K_{\nu,s}^{\theta}(z_s\mid g)\big|_{\theta=\theta_s(g)}^\top
\Gamma_s(g)
\Bigr)
V_{s+1}(g,z_s)
=
\Xi_s(g),
\]
where \(\Xi_s(g)\) is exactly the stagewise quantity defined in \eqref{eq:stagewise-xi-main}. Hence, for every \(s\in\{t+1,\dots,m\}\),
\begin{equation}
E_s^{L}(g)
=
\sum_{z_s\in\mathcal Z_s}
K_{\nu,s}^{\theta_s(g)}(z_s\mid g)\,
E_{s+1}^{L}(g,z_s)
+
\Xi_s(g),
\qquad
E_{m+1}^{L}(g)=0.
\label{eq:proof-depth-L-gap-centered}
\end{equation}

\paragraph{Step 6: Unroll the truncated gap recursion.}
We claim that whenever \(\mathbb P_\nu(g)>0\),
\begin{equation}
E_s^{L}(g)
=
\sum_{u=s}^{m}
\mathbb E_{\mathbb P_\nu}
\!\left[
\Xi_u(Z_{1:u-1})
\mid Z_{1:s-1}=g
\right]
\qquad
\text{for every } s\in\{t+1,\dots,m\}.
\label{eq:proof-depth-L-unrolled}
\end{equation}

We prove this by backward induction on \(s\).

For \(s=m\), the recursion \eqref{eq:proof-depth-L-gap-centered} and the boundary condition \(E_{m+1}^{L}=0\) give
\[
E_m^{L}(g)=\Xi_m(g),
\]
which is exactly \eqref{eq:proof-depth-L-unrolled}.

Now suppose \eqref{eq:proof-depth-L-unrolled} holds at step \(s+1\). Using \eqref{eq:proof-depth-L-gap-centered},
\begin{align*}
E_s^{L}(g)
&=
\Xi_s(g)
+
\sum_{z_s\in\mathcal Z_s}
K_{\nu,s}^{\theta_s(g)}(z_s\mid g)\,
E_{s+1}^{L}(g,z_s)
\\
&=
\Xi_s(g)
+
\sum_{z_s\in\mathcal Z_s}
K_{\nu,s}^{\theta_s(g)}(z_s\mid g)
\sum_{u=s+1}^{m}
\mathbb E_{\mathbb P_\nu}
\!\left[
\Xi_u(Z_{1:u-1})
\mid Z_{1:s}=(g,z_s)
\right].
\end{align*}
By the tower property of conditional expectation, this equals
\[
\Xi_s(g)
+
\sum_{u=s+1}^{m}
\mathbb E_{\mathbb P_\nu}
\!\left[
\Xi_u(Z_{1:u-1})
\mid Z_{1:s-1}=g
\right].
\]
Since \(\Xi_s(g)\) is measurable with respect to the sigma-field generated by \(Z_{1:s-1}\),
\[
\Xi_s(g)
=
\mathbb E_{\mathbb P_\nu}
\!\left[
\Xi_s(Z_{1:s-1})
\mid Z_{1:s-1}=g
\right].
\]
Thus
\[
E_s^{L}(g)
=
\sum_{u=s}^{m}
\mathbb E_{\mathbb P_\nu}
\!\left[
\Xi_u(Z_{1:u-1})
\mid Z_{1:s-1}=g
\right],
\]
which proves \eqref{eq:proof-depth-L-unrolled}.

Applying \eqref{eq:proof-depth-L-unrolled} at \(s=t+1\) and \(g=h\) gives
\begin{equation}
E_{t+1}^{L}(h)
=
\sum_{u=t+1}^{t+L}
\mathbb E_{\mathbb P_\nu}
\!\left[
\Xi_u(Z_{1:u-1})
\mid Z_{1:t}=h
\right].
\label{eq:proof-depth-L-final-gap}
\end{equation}

\paragraph{Step 7: Identify the truncated influence.}
By construction,
\[
\mathcal I_t^{\mathrm{tr},L}(F_\nu;h)
=
\left.
\frac{d}{d\epsilon}
\Psi_t^{\epsilon,\mathrm{tr},L}(h)
\right|_{\epsilon=0}
=
H_{t+1}^{L}(h).
\]
Using \(H_{t+1}^{L}(h)=M_{t+1}(h)+E_{t+1}^{L}(h)\) together with \eqref{eq:proof-depth-L-final-gap}, we obtain
\[
\mathcal I_t^{\mathrm{tr},L}(F_\nu;h)
=
M_{t+1}(h)
+
\sum_{u=t+1}^{t+L}
\mathbb E_{\mathbb P_\nu}
\!\left[
\Xi_u(Z_{1:u-1})
\mid Z_{1:t}=h
\right].
\]
Because \(\mathbb P_\nu(h)>0\), the replay recursion satisfies
\[
M_{t+1}(h)
=
\mathbb E_{\mathbb P_\nu}
\!\left[
\mathcal I_t^{\mathrm{rep}}(F_\nu;Z_{1:T})
\mid Z_{1:t}=h
\right].
\]
This proves \eqref{eq:depth-L-identity}.

Finally, Theorem~\ref{thm:stagewise-gap} gives
\[
\mathcal I_t^{\mathrm{int}}(F_\nu;h)
=
\mathbb E_{\mathbb P_\nu}
\!\left[
\mathcal I_t^{\mathrm{rep}}(F_\nu;Z_{1:T})
\mid Z_{1:t}=h
\right]
+
\sum_{u=t+1}^{T}
\mathbb E_{\mathbb P_\nu}
\!\left[
\Xi_u(Z_{1:u-1})
\mid Z_{1:t}=h
\right].
\]
Subtracting \eqref{eq:depth-L-identity} yields \eqref{eq:depth-L-tail}. This completes the proof.
\end{proof}

\subsection{Proof of Corollary~\ref{cor:adaptive-horizon-truncation}}

\begin{proof}
By \eqref{eq:depth-L-tail},
\[
\mathcal I_t^{\mathrm{int}}(F_\nu;h)
-
\mathcal I_t^{\mathrm{tr},L}(F_\nu;h)
=
\sum_{s=t+L+1}^{T}
\mathbb E_{\mathbb P_\nu}
\!\left[
\Xi_s(Z_{1:s-1})
\mid Z_{1:t}=h
\right].
\]
Therefore, by the triangle inequality,
\begin{equation}
\left|
\mathcal I_t^{\mathrm{int}}(F_\nu;h)
-
\mathcal I_t^{\mathrm{tr},L}(F_\nu;h)
\right|
\le
\sum_{s=t+L+1}^{T}
\mathbb E_{\mathbb P_\nu}
\!\left[
\left|\Xi_s(Z_{1:s-1})\right|
\mid Z_{1:t}=h
\right].
\label{eq:proof-adaptive-horizon-triangle}
\end{equation}

We now bound \(|\Xi_s(g)|\) pointwise for a fixed prefix \(g=z_{1:s-1}\).

If \(\Gamma_s(g)=0\), then \(\Xi_s(g)=0\), so the desired bound is immediate. Assume therefore that \(\Gamma_s(g)\neq 0\), and set
\[
u_g := \frac{\Gamma_s(g)}{\|\Gamma_s(g)\|}.
\]
Define the midpoint of the downstream values by
\[
m_g
:=
\frac12
\Bigl(
\max_{z_s\in\mathcal Z_s}V_{s+1}(g,z_s)
+
\min_{z_s\in\mathcal Z_s}V_{s+1}(g,z_s)
\Bigr).
\]
Then
\[
\left|V_{s+1}(g,z_s)-m_g\right|
\le
\frac12 \operatorname{osc}_s(g)
\qquad
\text{for every } z_s\in\mathcal Z_s.
\]

As in the proof above, the kernel gradients sum to zero:
\[
\sum_{z_s\in\mathcal Z_s}
\nabla_\theta K_{\nu,s}^{\theta}(z_s\mid g)\big|_{\theta=\theta_s(g)}=0.
\]
Hence
\begin{align*}
\Xi_s(g)
&=
\sum_{z_s\in\mathcal Z_s}
\Bigl(
\nabla_\theta K_{\nu,s}^{\theta}(z_s\mid g)\big|_{\theta=\theta_s(g)}^\top
\Gamma_s(g)
\Bigr)
\Bigl(
V_{s+1}(g,z_s)-m_g
\Bigr).
\end{align*}
Taking absolute values and using the bound above,
\begin{align*}
|\Xi_s(g)|
&\le
\frac12 \operatorname{osc}_s(g)
\sum_{z_s\in\mathcal Z_s}
\left|
\nabla_\theta K_{\nu,s}^{\theta}(z_s\mid g)\big|_{\theta=\theta_s(g)}^\top
\Gamma_s(g)
\right|
\\
&=
\frac12 \operatorname{osc}_s(g)\,\|\Gamma_s(g)\|
\sum_{z_s\in\mathcal Z_s}
\left|
\nabla_\theta K_{\nu,s}^{\theta}(z_s\mid g)\big|_{\theta=\theta_s(g)}^\top
u_g
\right|
\\
&\le
L_s^{\mathrm{TV}}(g)\,
\|\Gamma_s(g)\|\,
\operatorname{osc}_s(g),
\end{align*}
where the last step is exactly the definition of \(L_s^{\mathrm{TV}}(g)\).

Substituting this pointwise bound into \eqref{eq:proof-adaptive-horizon-triangle} yields \eqref{eq:adaptive-horizon-bound}.

For the uniform bound, the forward-sensitivity recursion gives
\[
\|\Gamma_{t+1}(z_{1:t})\|
\le \bar B_t,
\qquad
\|\Gamma_{s+1}(z_{1:s})\|
\le
\bar\rho_s\,\|\Gamma_s(z_{1:s-1})\|
\qquad
\text{for } s=t+1,\dots,T.
\]
Iterating this recursion yields
\[
\|\Gamma_s(z_{1:s-1})\|
\le
\bar B_t
\prod_{u=t+1}^{s-1}\bar\rho_u
\qquad
\text{for every } s\in\{t+1,\dots,T\}.
\]
Combining this with the deterministic bounds \(L_s^{\mathrm{TV}}\le \bar L_s\) and \(\operatorname{osc}_s\le \bar\Delta_s\) inside \eqref{eq:adaptive-horizon-bound} gives
\[
\left|
\mathcal I_t^{\mathrm{int}}(F_\nu;h)
-
\mathcal I_t^{\mathrm{tr},L}(F_\nu;h)
\right|
\le
\bar B_t
\sum_{s=t+L+1}^{T}
\bar L_s\,\bar\Delta_s\,
\prod_{u=t+1}^{s-1}\bar\rho_u,
\]
which is \eqref{eq:adaptive-horizon-uniform-bound}. The final tolerance claim is immediate.
\end{proof}


\newpage

\end{document}